\theoremstyle{definition}
\theoremstyle{plain}
\newtheorem{theorem}{Theorem}
\newtheorem{assmp}{Assumption}
\newtheorem{corollary}{Corollary}[theorem]
\newtheorem{claim}{Claim}[theorem]
\newtheorem{lemma}{Lemma}
\newcommand{\phiperp}[1]{\phi^{\perp}_{n+1,#1}}
\newcommand{\phicomp}{\phi^{\perp}_{n+1}}
\newcommand{\proj}[2]{\mathcal{P}_{\mathcal{S}(\bigcup\limits_{i=1}^{#1}\{\phi_i\})}(#2)}
\newcommand{\projalt}[3]{\mathcal{P}_{\mathcal{S}(\bigcup\limits_{#1}^{#2}\{\phi_i\})}(#3)}
\newcommand{\projexp}[2]{\mathcal{P}_{\mathcal{S}(\{\phi_1,...\phi_{#1}\})}(#2)}
\begin{document}

\title{ROBUST ONLINE RECONSTRUCTION OF CONTINUOUS-TIME SIGNALS FROM A LEAN SPIKE TRAIN ENSEMBLE CODE}

\author{Anik Chattopadhyay, Arunava Banerjee, ~\IEEEmembership{Member,~IEEE,}
\thanks{A preliminary version except for the new Section \ref{windowing} has appeared in \citep{10096734}.}}

\markboth{Journal of \LaTeX\ Class Files,~Vol.~14, No.~8, August~2021}%
{Shell \MakeLowercase{\textit{et al.}}: A Sample Article Using IEEEtran.cls for IEEE Journals}


\maketitle

\begin{abstract}
Sensory stimuli in animals are encoded into spike trains by neurons, offering advantages such as sparsity, energy efficiency, and high temporal resolution. This paper presents a signal processing framework that deterministically encodes continuous-time signals into biologically feasible spike trains, and addresses the questions about representable signal classes and reconstruction bounds. The framework considers encoding of a signal through spike trains generated by an ensemble of neurons using a convolve-then-threshold mechanism with various convolution kernels. A closed-form solution to the inverse problem, from spike trains to signal reconstruction, is derived in the Hilbert space of shifted kernel functions, ensuring sparse representation of a generalized Finite Rate of Innovation (FRI) class of signals. Additionally, inspired by real-time processing in biological systems, an efficient iterative version of the optimal reconstruction is formulated that considers only a finite window of past spikes, ensuring robustness of the technique to ill-conditioned encoding; convergence guarantees of the windowed reconstruction to the optimal solution are then provided. Experiments on a large audio dataset demonstrate excellent reconstruction accuracy at spike rates as low as one-fifth of the Nyquist rate, while showing clear competitive advantage in comparison to state-of-the-art sparse coding techniques in the low spike rate regime.
\end{abstract}

\begin{IEEEkeywords}
coding, integrate-and-fire, spike, convolution, reconstruction.
\end{IEEEkeywords}


\section{Introduction}
\label{sec:intro}
In most animals, sensory stimuli are communicated to the brain via ensembles of discrete, spatio-temporally compact electrical events generated by neurons, known as action potentials or spikes \cite{SpikesRiekeEtAl}. The conversion of continuous-time stimuli to spike trains occurs at an early stage of sensory processing, such as in the retinal ganglion cells in the visual pathway or spiral ganglion cells in the auditory pathway. Nature likely resorts to spike-based encoding due to several advantages: sparsity of representation \citep{Olshausen1996} and energy efficiency \citep{Laughlin2003}, noise robustness \citep{London2005}, high temporal precision \citep{Buzsaki2006} and facilitation of downstream computation \citep{Foldiak1990, Graham}. Olshausen and Field \citep{Olshausen1996} demonstrated how efficient codes could arise from learning sparse representations of natural stimuli, resulting in striking similarities with observed biological receptive fields. Similarly, Smith and Lewicki \citep{Lewicki2002, CSmith2006} showed that auditory filters could be estimated by training a population spike code model with natural sounds.  While  these studies highlight important aspects of spike-based encoding, they rely on existing dictionary learning techniques (e.g. matching pursuit or basis pursuit) to obtain the sparse spike codes, raising the question of the biological feasibility of computing such codes. The formal study of encoding continuous-time sensory stimuli via biologically plausible spiking neurons falls under the field of neural coding. Based on how neural spike responses are represented, studies can be broadly divided into two categories \citep{survey2023encoding}: 1) rate coding, where spike train responses are converted into an average rate, and 2) temporal coding, where the precise timing of spikes convey information about the stimuli. In the rate coding literature, spike responses to stimuli are converted to an average instantaneous rate $r(t)$, and stimulus reconstruction is typically formulated probabilistically by choosing the stimulus $s$ that maximizes the likelihood $P(s|r)$. Rate coding is criticized for losing temporal precision, especially since studies have shown that neurons can exhibit sub-millisecond precision \citep{SejnowskiandMainen}. Temporal coding, on the other hand, although emphasizes the precise timing of individual spikes, in this literature stimulus reconstruction is often formulated  probabilistically (e.g. Bayesian Inference \citep{Pillow2008Spatiotemporal}) or through a linear transformation of spike-responses (e.g. reverse correlation \citep{Rieke1997Spikes}). Probabilistic approaches to reconstruction complicate the development of a deterministic signal processing framework from spike trains, while simple linear transformations may be too restrictive for representing a generalized class of signals. Recent advanced temporal coding schemes, such as those by Sophie et al. \citep{Sophie2017SpikeBySpike}, leverage recurrent networks and have shown near perfect reconstruction for certain signals. However, these techniques involve complex training procedures and do not provide reconstruction guarantees for a generalized class of signals. This paper is therefore motivated to build a signal processing framework that deterministically encodes continuous-time signals into biologically feasible spike trains, addressing questions of representable signal classes and reconstruction bounds.

\textbf{Related Work and Contribution:}
From a signal processing standpoint, the very coarse $\Sigma\Delta$ quantization of bandlimited signals, as investigated in \citep{Daubechies}, effectively represents a spike train encoding. This scheme encodes an oversampled signal into a stream of 1/0 bits by thresholding the cumulative quantization error. Similarly, the Time encoding Machine \citep{lazar} encodes a signal $X(t)$ into a sequence of time instances $\{t_k\}$, corresponding to moments when the signal crosses certain thresholds using an integrate-and-fire mechanism. In both cases, the signal class is bandlimited, and reconstruction guarantee is provided in the oversampled regime above the Nyquist rate. However, evidence suggests that  spike codes can achieve high reconstruction accuracy while being much sparser than warranted by a rate code (e.g., the H1 neuron in the fly \cite{nemenman2008}). Our proposed framework considers a generalized class--Finite Rate of Innovation\citep{vetterli2002sampling} (FRI) signals--not limited to bandlimited signals. It involves an encoding model for a spiking neuron that convolves the input signal with a generalized kernel function and produces spikes via thresholding on that convolved signal. Using a set of generalized convolution kernels for spiking neurons is biologically justified. In sensory pathways, an input signal undergoes multiple graded transformations before reaching a spiking neuron (e.g., in the visual pathway, signals pass through  amacrine, horizontal, bipolar cells, etc., before being converted into spike trains at the retinal ganglion cells). We model this transformation using a linear convolution kernel, while the overall transformation into spike trains remains nonlinear due to the thresholding operation. The use of such generalized kernels in encoding spike trains can also be seen in \citep{lazar2005faithful, lazar2005multichannel}. However, there, the inverse problem from spike trains to the reconstructed signal is solved in the Fourier domain (i.e., using {\em sinc} interpolation), restricting it to the bandlimited class to provide reconstruction guarantees in the oversampled regime. In our framework, we derive a closed-form solution to the inverse problem in the Hilbert space of shifted kernel functions themselves, enabling our framework to represent a generalized FRI class of shifted kernel functions and allowing us to provide reconstruction bounds in the sparse regime. By representing reconstructed signals sparsely using shifted kernel functions, our framework resembles the Convolutional Sparse Coding (CSC) technique \citep{CristinaCSCReview}. However, while CSC encodes signals through a complex optimization procedure, our framework leverages the efficiency of fast, real-time biological signal processing, showing clear computational advantages, especially in the low spike rate regime (section \ref{Experiments}).

Our main contributions are:
1) We present a comprehensive signal processing framework for encoding using biological spike trains and decoding. While reconstruction is not known to be a biological phenomena, it serves as a direct method to evaluate the efficacy of the coding framework rather than using other approaches such as information theory.   
2) In Section \ref{Class}, we derive a closed-form solution to the inverse problem of reconstructing the signal from a spike train ensemble. We identify the feasibility conditions for perfect and approximate reconstruction for a generalized FRI class of signals.
3) In section \ref{windowing}, we establish the robustness of our framework. We first observe that the optimal reconstruction, formulated in Section \ref{Decoding Module}, is poorly conditioned, especially with lengthy input signals producing many spikes. Inspired by biological signal processing, where real-time response to spike-encoded inputs are necessary,  we develop an efficient linear-time iterative version of the optimal reconstruction (formulated in Section \ref{Decoding Module}) that considers only a window of past spikes. We prove that this window-based reconstruction converges to the optimal solution under realistic assumptions.  
4) In Section \ref{Experiments}, we validate our technique through experiments on a large corpus of audio signals. The results show remarkable reconstruction accuracy at a low spike rate, achieving a median of about 20dB SNR at one-fifth of the Nyquist rate. Additionally, we compare our results against a state-of-the-art convolutional sparse coding technique, demonstrating superior accuracy and runtime in the low spike rate regime.

\section{Coding}
\label{Encoding Module}
For encoding, we make the following assumptions:
\begin{enumerate}
    \item 
    \label{asmp1}
We consider the set of input signals $\mathcal{F}$ to be the class of all \emph{ finite-support, bounded functions} (formally, $\mathcal{F} = \{X(t)| t \in [0,\tau], |X(t)|\leq b\}$, for some arbitrary but fixed $\tau, b \in {\mathbb R}^{+}$) that satisfy a finite rate of innovation bound \citep{Vetterli2002}. Naturally, $X(t) \in L^2$, i.e., square integrable.

\item
\label{asmp2}
We assume an ensemble of $m$ spiking neurons $\Phi= \{ \Phi^j| j\in Z^+, 1 \leq j \leq m \} $, each characterized by a continuous kernel function $\Phi^j(t)$, where $\forall j, \Phi^j(t) \in C[0,\tau]$, $\tau \in {\mathbb R}^+$. Also we assume that each kernel $\Phi^j$ is normalized, i.e., $||\Phi^j||_2=1, \forall j$. 
\item 
Finally, we assume that $\Phi^j$ has a time varying threshold $T^j(t)$. The ensemble of kernels $\Phi$ encodes a given input signal $X(t)$ into a sequence of spikes $\{(t_i, \Phi^{j_i})\}$, where the $i$\textit{th} spike is produced by the $j_i$\textit{th} kernel $\Phi^{j_i}$ at time $t_i$ if and only if:
$
\label{spikeConstraint}
	\int X(t) \Phi^{j_i}(t_i-t) dt = T^{j_i}(t_i). 
$
\end{enumerate}
Due to assumptions \ref{asmp1} \& \ref{asmp2}, our framework operates within a Hilbert space $\mathcal{H}$ of bounded-support, square-integrable functions $L^2[0,\tau]$ for some $\tau \in \mathbb R$, endowed with the standard inner product: $\langle f,g\rangle = \int fg, \forall f,g \in \mathcal{H}$. As these functions reside in a Hilbert space, the algebraic operations on the inputs or the kernels (e.g., the use of the projection operator) performed later in this paper are well-defined.

In our implementation a  threshold function is assumed in which the time varying threshold $T^j(t)$ of the $j{th}$ kernel remains constant at $C$ until that kernel produces a spike, at which time an \emph{after-hyperpolarization potential (ahp)} increments the threshold by a value $M$. This increment then returns to zero linearly within a refractory period $\delta$. 
Formally,
\begin{equation}
\label{thresholdeq}
T^j(t) = C + \sum_{t_p^j \in [t-\delta,t]} M(1 -  \frac{t-t_p^j}{\delta} ) \;\;\; (C, M, \delta \in \mathbb R^+)
\end{equation}
where  the sum is taken over all spike times $t_p^j$ in the interval $[t-\delta, t]$ at which the kernel $\Phi^{j}$ generated  a spike.
This threshold function allows a neuron to remain quiescent as long as the signal is uncorrelated with its kernel $\Phi^j$; it starts firing when the correlation reaches a certain threshold and continues to fire at higher threshold levels communicating increasing correlation levels, only inhibited by previous spikes. This phenomenon of probing signals via a sequence of spikes is depicted in fig{\ref{lockstepspikes}} for one kernel. In Eq.\ref{thresholdeq}, the  \textit{ahp} model is assumed to approximate the behavior of a biological neuron which undergoes a brief refractory period, typically lasting 1 msec immediately after producing a spike. In our model this is achieved by setting the value of $M$ to a value much higher than the supremum norm of the input signal. As will be evident from subsequent sections, such a model of \textit{ahp} not only bounds the interspike intervals thus ensuring the stability of the spiking framework, it also allows our model to spike at dynamically changing threshold levels starting from a baseline threshold at $C$ for varying correlation levels between the input signal and the kernels, leading to better representational capacity of our model. In what follows, we first clarify notations used throughout this paper for improved readability. Following that, we present a set of corollaries based on the assumptions of our encoding model.

\textbf{Notations followed in the paper:}
\begin{itemize}
    \item $t_i$:  Time of  occurrence of the $i$th spike.
    \item $T_i$:  The threshold value.
    \item $X(t)$ or $X$: The input signal. For ease of notation we drop the time $t$ as function argument and simply indicate the input as $X$ instead of $X(t)$.
    \item $X^*$: The reconstructed signal.
    \item $m$: The number of kernels that comprise our framework.
    \item $N$: The total number of spikes produced by the system.
    \item $(\phi^{j_i}, t_i)$: Tuple denotes the $i$th spike produced by kernel $\phi^{j_i}$ at time $t_i$.
    \item $\Phi^{j_i}(t_i - t)$ or $\phi_i$: The kernel function producing the $i$th spike, inverted and shifted to the time of the spike's occurrence $t_i$. For brevity's sake, this function is alternatively termed as the $i$th spike instead of the tuple notation $(\phi^{j_i}, t_i)$ and is denoted via the shorthand $\phi_i$ whenever appropriate. Also, the mathematical definition of the term "spike" must not be confused  with the real physical object representing the elicitation of a neuron's action potential. These distinctions should be clear from the context. 
    \item $\mathcal{S}(V)$: Subspace spanned by $V$ in a Hilbert space $\mathcal{H}, V \subseteq \mathcal{H}$.
    \item $\mathcal{P}_v(u)$: In a Hilbert space $\mathcal{H}$, the projection of $u$ on a vector $v$  for $u, v \in \mathcal{H}$.
    \item $\mathcal{P}_{\mathcal{S}(V)}(u)$: In a Hilbert space $\mathcal{H}$ the projection of $u$  on the subspace $\mathcal{S}(V)$  for $u\in \mathcal{H}, V\subseteq \mathcal{H}$. Note that this notation is similar to the above 
    notation of $\mathcal{P}_v(u)$  except here the projection is
    taken w.r.t. a subspace $\mathcal{S}(V)$ instead of a single 
    vector $v$.  This should be clear from the context.
\end{itemize}

\begin{corollary}
\label{convolutionCorollary}
    Let $\Phi^j $ be a function in $ C[0, \tau], \text{ where }\tau \in \mathbb R^{+}$ and $||\Phi^j||_2=1$. Let $X(t) \in \mathcal{F} = \{f(t)\mid t \in [0,\tau'], |f(t)|\leq b\},$ where $b, \tau' \in \mathbb R^+$, be the input to our model. Then: 
    (a) The convolution $C^j(t)$ between $X(t)$ and $\Phi^j(t)$, defined by $C^j(t) = \int X(t') \Phi^{j}(t-t') dt'$ for $ t\in [0, \tau+\tau']$, is a bounded and continuous function. Specifically, one can show that $|C^j(t)| < b\sqrt{\tau}$  for all $t \in [0, \tau+\tau']$ .
    (b) Suppose the parameter $M$ in the Eq. \ref{thresholdeq} is chosen such that $M > 2b\sqrt{\tau}$. Then the interspike interval between any two spikes produced by the given neuron $\Phi^j$ is greater than $\frac{\delta}{2}$.
\end{corollary}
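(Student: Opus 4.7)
The plan is to apply the Cauchy--Schwarz inequality in $L^2[0,\tau]$ after the substitution $s = t-t'$, which rewrites $C^j(t) = \int_0^{\tau} X(t-s)\,\Phi^j(s)\,ds = \langle X(t-\cdot),\Phi^j\rangle_{L^2[0,\tau]}$. Since $|X|\le b$ pointwise, $\|X(t-\cdot)\|_{L^2[0,\tau]} \le b\sqrt{\tau}$, and $\|\Phi^j\|_2 = 1$ by assumption, so $|C^j(t)| \le b\sqrt{\tau}$, with strict inequality under the mild genericity that $X$ does not equal a constant of modulus $b$ on the entire sliding window of the kernel. For continuity, I would invoke uniform continuity of $\Phi^j$ on the compact set $[0,\tau]$ with some modulus $\omega$, and bound $|C^j(t_1)-C^j(t_2)| \le b\tau\,\omega(|t_1-t_2|) \to 0$ as $t_1 \to t_2$.

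\textbf{Part (b).} I will argue by contradiction. Suppose neuron $\Phi^j$ emits spikes at times $t_p^j < t_q^j$ with $t_q^j - t_p^j \le \delta/2$. By Eq.~\ref{thresholdeq}, the ahp contribution of the single spike at $t_p^j$ to $T^j(t_q^j)$ equals $M\bigl(1-(t_q^j-t_p^j)/\delta\bigr) \ge M/2$, and contributions from any older spikes in $[t_q^j-\delta,\,t_q^j]$ can only be non-negative, so $T^j(t_q^j) \ge C + M/2 > M/2$. Using the hypothesis $M > 2b\sqrt{\tau}$, this yields $T^j(t_q^j) > b\sqrt{\tau}$, which by part~(a) strictly exceeds $|C^j(t_q^j)|$. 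Hence the spiking equality $C^j(t_q^j) = T^j(t_q^j)$ cannot be met, contradicting the assumption that a spike occurs at $t_q^j$; consequently the interspike interval must exceed $\delta/2$.

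\textbf{Anticipated difficulty.} Both parts amount to careful book-keeping rather than delicate estimates. The only subtle points are (i) in part~(a), ensuring the $\sqrt{\tau}$ factor originates from the compact support of the kernel $\Phi^j$ (not from the potentially much larger input support $[0,\tau']$), which is precisely what makes the bound useful and uniform in $t$; and (ii) in part~(b), observing that additional earlier spikes falling inside the refractory window can only enlarge $T^j(t_q^j)$, so it is legitimate to lower-bound the threshold using only the contribution of the most recent preceding spike.
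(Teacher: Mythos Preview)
Your Part~(a) is essentially the paper's argument: both apply Cauchy--Schwarz over the kernel's support $[0,\tau]$ (the paper pulls out the sup-bound $b$ on $X$ and then applies Cauchy--Schwarz to $1\cdot|\Phi^j|$, you apply it directly to $X(t-\cdot)$ and $\Phi^j$), and both derive continuity from the uniform continuity of $\Phi^j$ on its compact support. The identification of $\sqrt{\tau}$ as coming from the kernel's support, not the input's, is exactly the right observation.

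Your Part~(b) is correct but takes a different, more elementary route than the paper. The paper argues by induction along the entire spike sequence $t_0,t_1,\ldots,t_L$, maintaining two invariants: an exact accounting identity for the ahp drop of the most recent spike over the current interval, namely $A^{k-1}_k = M - \sum_{i=1}^k \Delta_i$ where $\Delta_i = C^j(t_i)-C^j(t_{i-1})$, and the bound $\delta_k > \delta/2$. This tracks how the cumulative threshold increments interact with the ahp decay. Your argument bypasses all of this: you simply lower-bound $T^j(t_q^j)$ by $C + M/2$ using only the single preceding spike at $t_p^j$ (noting correctly that all other ahp terms are nonnegative), then invoke $M/2 > b\sqrt{\tau} \ge |C^j(t_q^j)|$ to contradict the spiking equality. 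Your approach is shorter and requires no induction; the paper's approach yields the extra bookkeeping identity $A^{k-1}_k = M - \sum_i \Delta_i$, which is not used elsewhere in the paper but does give a more detailed picture of the threshold dynamics. Both are valid; yours is the cleaner proof of the stated corollary.
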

\textbf{Proof:}
The proof has been detailed in Appendix \ref{convolutionCorollarySupp} since the corollary is intuitively evident. $\hfill\Box$    
\begin{corollary}
\label{spikerateCorollary}
    Suppose the assumptions of the Corollary \ref{convolutionCorollary} hold true. Then, the spike rate generated by our framework for any input signal $X(t)$ is bounded. Consequently, the maximum number of preceding spikes that can overlap with any given spike is bounded  above by a constant value.
\end{corollary}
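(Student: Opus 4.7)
The plan is to reduce both claims to Corollary \ref{convolutionCorollary}(b), which lower-bounds per-kernel interspike intervals. First I would bound the number of spikes a single kernel can emit in any fixed window, then aggregate across the ensemble, and finally combine this rate bound with the finite support of each kernel to control the overlap count.

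For the bounded spike rate, fix $M > 2b\sqrt{\tau}$ so that Corollary \ref{convolutionCorollary}(b) applies. Then for each kernel $\Phi^j$ any two consecutive spike times produced by $\Phi^j$ differ by more than $\delta/2$. Consequently, if $\Phi^j$ produces $k$ spikes within a window of length $L$, those times are pairwise separated by more than $\delta/2$, forcing $(k-1)\delta/2 < L$ and hence $k \leq \lceil 2L/\delta\rceil$. Summing over the $m$ kernels in $\Phi$ yields at most $m\lceil 2L/\delta\rceil$ spikes in any window of length $L$, so the ensemble spike rate is bounded above by $2m/\delta$ regardless of the choice of input $X \in \mathcal{F}$.

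For the overlap bound, I would exploit the finite support of the kernels. Since each $\Phi^j \in C[0,\tau]$ is supported in $[0,\tau]$, the time-reversed shifted kernel $\phi_i(t) = \Phi^{j_i}(t_i - t)$ representing the $i$th spike has support contained in $[t_i - \tau, t_i]$. A preceding spike $\phi_p$ with $t_p < t_i$ can therefore have support overlapping that of $\phi_i$ on a set of positive measure only if $t_p > t_i - \tau$. Hence every overlapping predecessor has its spike time inside the interval $(t_i - \tau, t_i)$ of length $\tau$, and applying the rate bound of the previous paragraph with $L = \tau$ gives at most $m\lceil 2\tau/\delta\rceil$ such predecessors, a constant determined entirely by the model parameters $m$, $\tau$, $\delta$.

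I do not anticipate a substantive obstacle: once the refractory lower bound of Corollary \ref{convolutionCorollary}(b) is in hand, the whole statement collapses to a counting argument that aggregates the per-neuron bound and then restricts attention to the finite time window dictated by the kernel support. The only points requiring care are the support bookkeeping under the time-reversal convention in the notation $\phi_i$ and explicitly identifying the constant in terms of $m$, $\tau$, and $\delta$.
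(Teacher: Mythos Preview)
Your proposal is correct and follows essentially the same approach as the paper: use the per-kernel interspike bound from Corollary~\ref{convolutionCorollary}(b) to obtain an ensemble rate of $2m/\delta$, then multiply by the kernel support length $\tau$ to bound the number of overlapping predecessors. Your version is simply more carefully argued, spelling out the window counting and the support bookkeeping that the paper states in one line.
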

\begin{proof}
    Based on Corollary \ref{convolutionCorollary}, the total spike rate is bounded above by $\frac{2m}{\delta},$ where $m$ is the number of kernels employed by our model. Since each kernel $\Phi^j \in C[0, \tau]$ is compact support,  the maximum number of preceding spikes any given spike can overlap with is bounded above by $\tau \frac{2m}{\delta}$, where $\tau$ is the maximum length of support for any kernel $\Phi^j$.
\end{proof}
\begin{corollary}
\label{thresholdInnerProduct}
 Let $X(t)$ be an input signal and $\Phi^j$ be a kernel. Suppose the convolution between $X(t)$ and $\Phi^j(t)$ at time $t_p$ is denoted by $C^j(t_p) = \langle X(t), \Phi^j(t_p-t)\rangle >0$, and let the absolute refractory period be $\delta$ as modeled in Eq. \ref{thresholdeq}. If the baseline threshold $C$ is set such that $0 < C \le C^j(t_p)$, then the kernel $\Phi^j$ must produce a spike in the interval $[t_p-\delta, t_p]$, according to the threshold model defined in Eq. \ref{thresholdeq}.   
\end{corollary}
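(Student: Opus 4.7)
The plan is to argue by contradiction: suppose $\Phi^j$ emits no spike during $[t_p-\delta,\,t_p]$ and derive that a spike must in fact occur at $t_p$. The first step is to evaluate the time-varying threshold directly from Eq.\ \ref{thresholdeq} at $t=t_p$. Because the \emph{ahp} sum there is indexed by $\Phi^j$-spike times lying in $[t_p-\delta,\,t_p]$, and this index set is empty by the contradiction hypothesis, we immediately obtain $T^j(t_p)=C$.

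The second step invokes the structural invariant $C^j(t)\le T^j(t)$ that the encoder enforces at every time $t$. This invariant can be read off from the spike rule of Assumption~3 interpreted as an upcrossing rule: $C^j$ is continuous with $C^j(0)=0$ while $T^j(0)=C>0$, so the first time equality can be reached a spike fires, and the resulting jump of $M>2b\sqrt{\tau}$ (chosen as in Corollary~\ref{convolutionCorollary}) strictly exceeds twice the uniform bound $|C^j(t)|<b\sqrt{\tau}$ from Corollary~\ref{convolutionCorollary}(a), so the strict inequality $C^j<T^j$ is restored immediately after every spike. Specialized to $t=t_p$ this invariant yields $C^j(t_p)\le T^j(t_p)=C$, and combined with the hypothesis $C^j(t_p)\ge C$ it forces $C^j(t_p)=C=T^j(t_p)$. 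By the spike rule in Assumption~3 this equality itself triggers a spike of $\Phi^j$ at $t_p\in[t_p-\delta,\,t_p]$, contradicting the standing assumption and proving the corollary.

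I expect the main obstacle to be making the invariant $C^j\le T^j$ fully watertight, since the paper defines spiking via an equality rather than an upcrossing rule and does not record the invariant as a standalone lemma. I would dispose of it with the short argument above, which leverages Corollary~\ref{convolutionCorollary} to balance the size of the \emph{ahp} jump $M$ against the uniform bound on $|C^j|$; once that is granted, the rest of the proof is a two-line contradiction.
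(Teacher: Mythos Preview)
Your proof is correct and lands on the same contradiction as the paper, but you package the core step differently. The paper argues by explicit case analysis: either no spike has ever occurred (Case~1), in which case $C^j(0)=0<C\le C^j(t_p)$ and the intermediate value theorem yields a crossing of the baseline threshold in $(0,t_p]$; or the last spike $t_l$ lies before $t_p-\delta$ (Case~2), in which case $T^j$ jumps to $C^j(t_l)+M$ at $t_l^+$ and decays continuously back to $C$ by $t_p$, so IVT again locates a crossing in $(t_l,t_p]$, contradicting the choice of $t_l$. You collapse both cases into the single global invariant $C^j(t)\le T^j(t)$, then read off $C^j(t_p)\le T^j(t_p)=C\le C^j(t_p)$ and declare the resulting equality a spike at $t_p$. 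The invariant is precisely what the paper's two IVT arguments are establishing piecemeal, so the underlying mechanism is the same; your formulation is tidier and avoids the case split. One small point: you invoke $M>2b\sqrt{\tau}$ from Corollary~\ref{convolutionCorollary} to restore strict inequality after each spike, but since $C^j$ is continuous and $T^j(t_l^+)=C^j(t_l)+M$, any $M>0$ already gives $T^j(t_l^+)>C^j(t_l^+)$, so that extra hypothesis is not actually needed here (and the paper does not use it).
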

 \textbf{Proof:} We prove by contradiction, utilizing  the continuity of the convolution function, $C^j(t)$. \textbf{Case 1:} No spike is produced by the kernel $\Phi^j$ before or at $t_p$. By definition $C^j(0) =0$. However, since $0 < C \leq C^j(t_p)$, the continuous function $C^j(t)$ must intersect the baseline threshold $C$ between $t=0$ and $t=t_p$ by the intermediate value theorem. This contradiction implies that the kernel $\Phi^j$ must produce a spike prior to or at time $t_p$. \textbf{Case 2:} Assuming spikes occurred before or at time $t_p$ by the kernel $\Phi^j$, let $t_l$ be the time of last spike produced by $\Phi^j$ before or at $t_p$. Suppose $t_l < t_p -\delta$ to ensure no spike in the interval $[t_p-\delta, t_p]$. Then, by Eq. \ref{thresholdeq}, the threshold of kernel $\Phi^j$ at $t_p$ is $T^j(t_p) = C$. However, $C^j(t_p) = \langle X(t), \Phi^j(t_p-t)\rangle  \geq C$. Since $C^j(t_l) = T^j(t_l)$ and $C^j(t)$ is continuous, and considering Eq. \ref{thresholdeq}, where the \textit{ahp} raises by a high value $M$ at $t_l$ and then linearly decreases to  $C$ before $t_p$, the intermediate value theorem implies that $C^j(t)$ must cross the threshold between $t_l$ and $t_p$. However, since $t_l$ was the last spike of  $\Phi^j$ before $t_p$, this contradicts our assumption. Hence, there must be a spike in $[t_p-\delta, t_p]. \hfill\Box$

\begin{assmp}
    \label{assumption1}
    $||\mathcal{P}_{\mathcal{S}(\{\phi_1, ..., \phi_{n-1}\})}(\phi_n)|| \leq \beta < 1$, for some $\beta \in \mathbb{R}$, $\forall n \in \{1,..., N \}$, where $N$ is the total number of spikes produced by the system. In words, the norm of the projection of every spike onto the span of all previous spikes is bounded from above by some constant strictly less than 1. 
\end{assmp}
\textbf{Justification:} Corollary \ref{convolutionCorollary} suggests that for appropriately chosen parameters to the threshold Eq. \ref{thresholdeq}, the spikes produced by the same kernel are sufficiently disjoint in time. Therefore, each new spike $\phi_n$ comes with a component that is disjoint in time with respect to the spikes produced by the same kernel. Since the biological kernels of our framework are causal in nature, due to the disjoint component in time a new spike maintains an orthogonal component with respect to all the previous spikes by the same kernel. For spikes produced by different kernels, we observe that different biological kernels correspond to different frequency responses (e.g., it has been observed that the responses of the auditory nerves can be well approximated by a bank of linear gammatone filters \citep{Lewicki2002, pattersonFilters}). Since there are only finitely many kernels in our framework, this leads to the fact that a new spike is poorly represented by the previous spikes produced by other kernels. Overall, a new spike $\Phi_n$, for appropriately chosen \textit{ahp} parameters in Eq. \ref{thresholdeq} will not be fully represented by previous spikes either due to disjoincy in time or frequency.  Hence, the overall set of spikes grows as a {\em linearly independent} set. The technical need for this assumption will become clear in later sections. 
\begin{figure}[h]
\hspace*{-10pt} 
\vspace{-1cm}
 	\includegraphics[width=100mm]{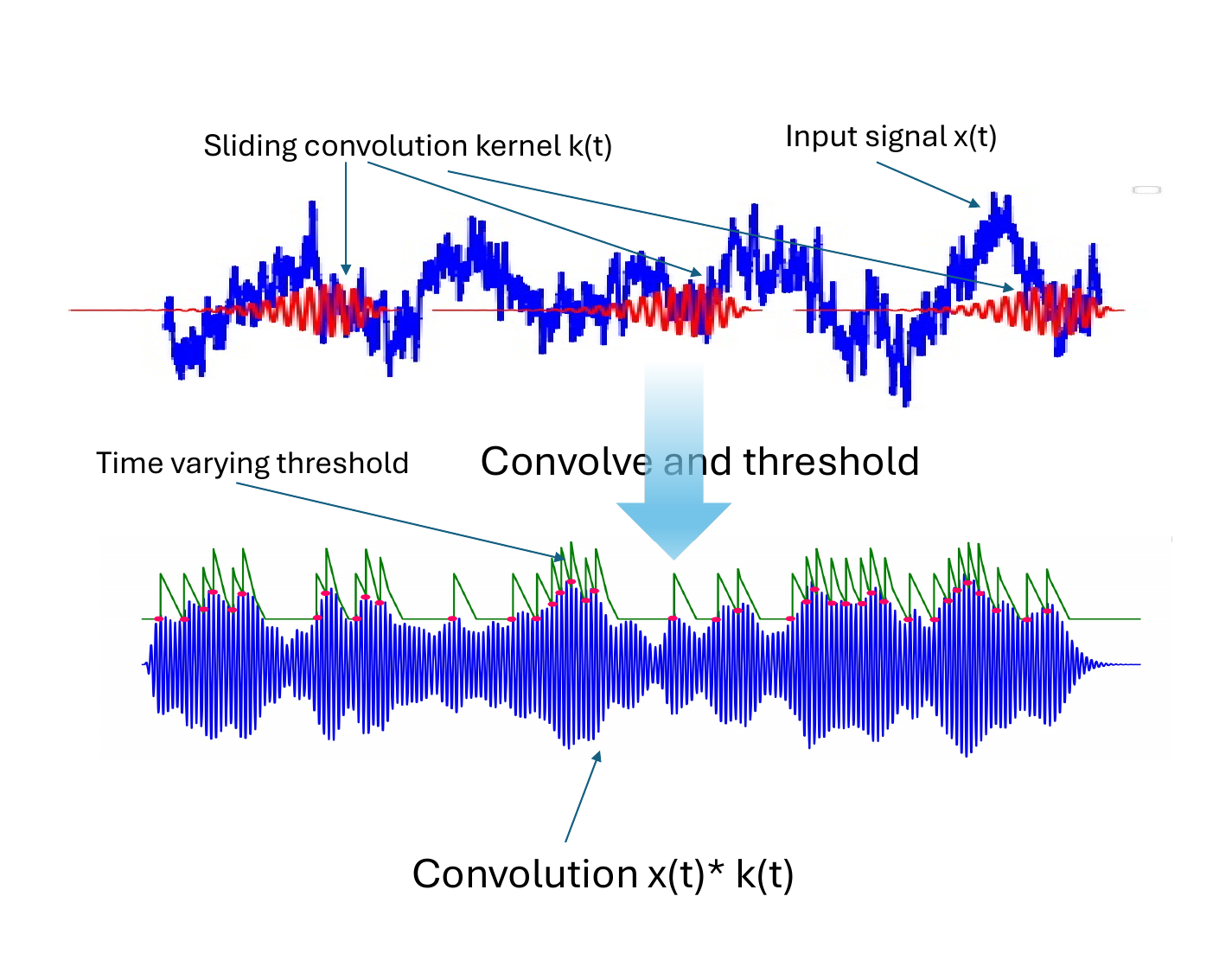}
 	\caption{The convolve and threshold mechanism described in the coding model for a single kernel. Top: a sample signal (in blue) is shown overlayed with a convolution kernel (in red). Below: the result of convolution in blue and the threshold function for the kernel in green. Spikes times are marked at the threshold crossing level with red dots.}%
 	\label{lockstepspikes}
  \vspace{-10pt}
\end{figure}
\section{Decoding}
\label{Decoding Module}

The objective of the decoding module is to reconstruct the original signal from the encoded spike trains. Considering the prospect of the invertibility of the coding scheme, we seek a signal that satisfies the same set of constraints as the original signal when generating all spikes apropos the set of kernels in ensemble $\Phi$. Recognizing that such a signal might not be unique, we choose the reconstructed signal as the one with minimum $L2$-norm. Formally, the reconstruction $X^*$ of the input signal $X$ is formulated to be the solution to:\\
\begin{equation}
\label{optimizationproblem}
\begin{aligned}
	& X^{*}= \underset{\Tilde{X}}{\text{argmin}}
	||\Tilde{X}||_2^2 \\
	& \text{s.t.}
	\int \Tilde{X}(\tau)\Phi^{j_i}(t_i-\tau)d\tau = T^{j_i}(t_i); 
	1 \le i \le N
\end{aligned}
\end{equation}
where  $\{(t_i, \Phi^{j_i})|i \in \{1,...,N\}\}$ is the set of all spikes generated by the encoder.
The choice of $L2$ minimization 
is in congruence with the dictum of energy efficiency in biological systems. The assumption is that, of all signals, the one with the minimum energy that is consistent with the spike trains is desirable.
Also, an $L2$ minimization in the objective of (\ref{optimizationproblem}) reduces the convex optimization problem to a solvable linear system of equations as described below.
\section{Signal Class for Perfect Reconstruction}
\label{Class}
We observe that in general the encoding of $L^{2}[0,T]$ signals into spike trains is not an injective map; the same set of spikes can be generated by different signals so as to result in the same convolved values at the spike times. Naturally, with a finite and fixed ensemble of kernels $\Phi$, one cannot achieve perfect reconstruction for all $L^{2}[0,T]$ signals. Assuming, additionally, a finite rate of innovation, as $\mathcal{F}$ was previously defined changes the story. We now restrict ourselves to a subset $\mathcal{G}$ of $\mathcal{F}$ defined as 
$
\label{restricted class}
\mathcal{G} = \{X| X \in \mathcal{F}, X= \sum_{p=1}^{N} \alpha_p \Phi^{j_p}(t_p-t), j_p \in \{1,...,m\}, \alpha_p \in {\mathbb R}, t_p \in {\mathbb R}^{+}, N \in Z^{+}\}
$
and address the question of reconstruction accuracy. Essentially $\mathcal{G}$ consists of all linear combinations of arbitrarily shifted inverted kernel functions. 
$N$ is bounded above by the total number of spikes that the ensemble $\Phi$ can generate over $[0,T]$. 
For the class $\mathcal{G}$ the \emph{perfect reconstruction theorem} is presented below. The theorem is proved with the help of two lemmas.\\
\vspace{-15pt}
\begin{theorem}{(Perfect Reconstruction Theorem)}
\label{PerfectReconsThm}
Let $X \in \mathcal{G}$ be an input signal. Then for appropriately chosen time-varying thresholds of the kernels, the reconstruction $X^{*}$, resulting from the proposed coding-decoding framework is accurate with respect to the $L2$ metric, i.e., $||X^{*}-X||_2 = 0$.\\ 
\end{theorem}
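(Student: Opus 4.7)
The plan is to exploit the minimum-norm characterization in Eq.~\ref{optimizationproblem} to reduce everything to a single span-inclusion question, and then discharge that question by selecting the thresholds so that the spikes $\phi_i$ include every atom appearing in the decomposition of $X$.

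First, I would establish (as one of the two lemmas) the structural fact that the minimizer of the constrained quadratic in Eq.~\ref{optimizationproblem} lies in $\mathcal{S}(\{\phi_1,\dots,\phi_N\})$. This is a standard Hilbert-space argument: the feasible set is a nonempty closed affine subspace $\mathcal{A}=\{\tilde X:\langle\tilde X,\phi_i\rangle=T_i\}$, and the vector in $\mathcal{A}$ of smallest norm is the unique point where $\mathcal{A}$ meets the orthogonal complement of its direction subspace $\mathcal{S}(\{\phi_i\})^{\perp}$, hence it lies in $\mathcal{S}(\{\phi_i\})$. So $X^{*}\in\mathcal{S}(\{\phi_1,\dots,\phi_N\})$.

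Next, by construction of the encoder (the spike constraint after Assumption 3), the original signal $X$ satisfies $\langle X,\phi_i\rangle=T_i$ for all $i$, and the same holds for $X^{*}$. Subtracting gives $\langle X-X^{*},\phi_i\rangle=0$ for every $i$, so $X-X^{*}\perp\mathcal{S}(\{\phi_1,\dots,\phi_N\})$. I would package this as the second lemma: $X^{*}$ is the orthogonal projection $\mathcal{P}_{\mathcal{S}(\{\phi_1,\dots,\phi_N\})}(X)$. Consequently, perfect reconstruction is equivalent to the single inclusion $X\in\mathcal{S}(\{\phi_1,\dots,\phi_N\})$, since then $X-X^{*}$ is simultaneously in the span and orthogonal to it, forcing $X-X^{*}=0$.

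The remaining and genuinely nontrivial step is to show that the thresholds can be chosen so that this span inclusion holds. Because $X\in\mathcal{G}$ has the form $X=\sum_{p=1}^{N}\alpha_p\,\Phi^{j_p}(t_p-t)$, it suffices that for every $p$ the shifted kernel $\Phi^{j_p}(t_p-t)$ actually appears as one of the generated spike functions $\phi_i$; equivalently, kernel $\Phi^{j_p}$ must fire at exactly time $t_p$. I would argue this by tuning the baseline threshold $C$ small enough and exploiting Corollary~\ref{thresholdInnerProduct}: whenever $C\le C^{j_p}(t_p)=\langle X,\Phi^{j_p}(t_p-t)\rangle$, a spike by $\Phi^{j_p}$ is forced inside $[t_p-\delta,t_p]$. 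To pin the spike exactly at $t_p$ (rather than merely in that window), I would invoke the freedom of ``time-varying thresholds'' in the theorem's hypothesis: set $T^{j_p}(t_p)$ equal to $C^{j_p}(t_p)$ itself, so that the continuous convolution $C^{j_p}(t)$ meets $T^{j_p}(t)$ precisely at $t=t_p$. Additional spikes generated elsewhere are harmless, since they only enlarge the span and add further consistent constraints.

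The main obstacle is exactly this last step: justifying that the thresholding mechanism can be driven to spike at a prescribed time without violating the refractory/ahp structure of Eq.~\ref{thresholdeq} or producing spurious spikes that conflict with $\langle X,\phi_i\rangle=T_i$. I would handle it by combining Corollary~\ref{convolutionCorollary}(b) (which controls interspike intervals via the $M>2b\sqrt{\tau}$ choice, so the desired $t_p$'s are never blocked by the refractory window of a previous spike if $t_p-t_{p-1}>\delta/2$) with a per-kernel adaptive choice of $C$ and $T^{j_p}(t_p)$. Once every $\Phi^{j_p}(t_p-t)$ is guaranteed to sit in $\mathcal{S}(\{\phi_1,\dots,\phi_N\})$, linearity gives $X\in\mathcal{S}(\{\phi_1,\dots,\phi_N\})$, and the orthogonality argument above concludes $\|X^{*}-X\|_2=0$.
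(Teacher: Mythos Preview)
Your proposal is correct and follows essentially the same approach as the paper: the paper's Lemma~\ref{reconsLemma} establishes that $X^{*}$ lies in the span of the spikes (your first lemma), Lemma~\ref{th2} shows $X^{*}$ is the best approximation in that span (equivalent to your orthogonal-projection characterization), and the theorem is concluded by setting thresholds so that each atom $\Phi^{j_p}(t_p-t)$ in the decomposition of $X$ occurs as a spike, whence $X$ lies in the span and $\|X-X^{*}\|=0$. The one place you diverge is in over-engineering the threshold step: you try to realize the spikes at $t_p$ within the specific \textit{ahp} model of Eq.~\ref{thresholdeq} and worry about refractory blocking, but the theorem's hypothesis ``for appropriately chosen time-varying thresholds'' is taken literally in the paper---it simply posits $T^{j_p}(t_p)=\langle X,\Phi^{j_p}(t_p-t)\rangle$ so that a spike fires at $t_p$, without constraining $T^{j}(t)$ to the particular form in Eq.~\ref{thresholdeq}, so your ``main obstacle'' is not actually an obstacle here.
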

\vspace{-18pt}
\begin{lemma} 
\label{reconsLemma}
The solution $X^*$ to the reconstruction problem Eq. (\ref{optimizationproblem}) can be written as:
$
\label{reconstructionequation}
X^* = \sum_{i=1}^{N} \alpha_i \Phi^{j_i}(t_i-t)
$
where the coefficients $\alpha_i \in {\mathbb R}$ can be uniquely solved from a system of linear equations if the set of spikes $\{\phi_i = \Phi^{j_i}(t_i-t)\}_{i=1}^{N}$ produced is \emph{linearly independent}.\\
\end{lemma}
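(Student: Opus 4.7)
The plan is to solve the constrained minimization by the standard Hilbert-space projection argument. Observe that the feasible set of the optimization in Eq.~(\ref{optimizationproblem}) is the affine subspace
\begin{equation*}
\mathcal{A} = \{\tilde{X} \in \mathcal{H} : \langle \tilde{X}, \phi_i\rangle = T_i,\; 1\le i\le N\},
\end{equation*}
assuming it is nonempty (which it is, since the original input $X$ satisfies the constraints by the encoding rule). The associated direction space is $\mathcal{A}_0 = \{Y \in \mathcal{H} : \langle Y, \phi_i\rangle = 0,\; \forall i\} = \mathcal{S}(\{\phi_1,\ldots,\phi_N\})^{\perp}$. The minimum-$L^2$-norm element of an affine subspace is the unique point orthogonal to the direction space, so $X^{*} \in \mathcal{A}_0^{\perp} = \mathcal{S}(\{\phi_1,\ldots,\phi_N\})$.

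Having established that $X^*$ lies in the span of the spikes, I would then write $X^* = \sum_{i=1}^{N} \alpha_i\,\phi_i$ and substitute this ansatz into the $N$ linear constraints to obtain
\begin{equation*}
\sum_{j=1}^{N} \alpha_j \langle \phi_j, \phi_i\rangle = T_i,\qquad 1 \le i \le N,
\end{equation*}
which is a linear system $G\boldsymbol{\alpha} = \mathbf{T}$ with $G_{ij} = \langle \phi_i,\phi_j\rangle$ the Gram matrix of the spike set. Uniqueness of $\boldsymbol{\alpha}$ reduces to the invertibility of $G$: a standard fact is that the Gram matrix of a family of vectors in a Hilbert space is positive definite (hence invertible) if and only if the family is linearly independent. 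Invoking the hypothesis that $\{\phi_i\}_{i=1}^N$ is linearly independent, $G$ is invertible and $\boldsymbol{\alpha}$ is uniquely determined.

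An equivalent and perhaps more transparent route is the Lagrangian one: forming $\mathcal{L}(\tilde{X},\boldsymbol{\lambda}) = \tfrac{1}{2}\|\tilde{X}\|_2^2 - \sum_i \lambda_i(\langle \tilde{X},\phi_i\rangle - T_i)$, the stationarity condition in $\tilde{X}$ immediately yields $X^* = \sum_i \lambda_i \phi_i$, and substitution into the constraints reproduces the same Gram-matrix system with $\alpha_i = \lambda_i$. Either derivation is short; the only non-routine step is justifying that a minimizer exists, which follows because $\mathcal{A}$ is a nonempty closed convex subset of the Hilbert space $\mathcal{H}$ and $\|\cdot\|_2^2$ is strictly convex and coercive, so a unique minimizer exists by the Hilbert projection theorem. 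I expect no substantive obstacle here; the delicate point, which is cleanly isolated by the linear-independence hypothesis and addressed structurally in the paper via Assumption~\ref{assumption1}, is merely the invertibility of $G$.
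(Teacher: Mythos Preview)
Your proposal is correct and takes essentially the same approach as the paper: the paper invokes a Representer-Theorem-style argument to conclude that $X^*$ lies in the span of the spikes (since any orthogonal component contributes neither to the constraints nor reduces the norm), then substitutes into the constraints to obtain the Gram-matrix system $P\alpha = T$, with uniqueness following from linear independence of the $\phi_i$ (guaranteed by Assumption~\ref{assumption1}). Your affine-subspace/projection formulation and Lagrangian alternative are simply more explicit renderings of the same underlying idea.
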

\vspace{-15pt}
\textbf{Proof:}
An argument similar to that of the Representer Theorem \citep{scholkopf2001} on Eq.(\ref{optimizationproblem}) directly results in:
$
\label{eq:reconsSignalEq}
X^* = \sum_{i=1}^{N} \alpha_i \Phi^{j_i}(t_i-t)
$
where the $\alpha_{i}$'s are real valued coefficients. This holds true because any component of $X^*$  orthogonal to the span of the ${\Phi^{j_i}(t_i-t)}$'s does not contribute to the convolution (inner product) constraints. In essence, $X^*$ is an orthogonal projection of $X$ on the span of the spikes $\{\phi_i = \Phi^{j_i}(t_i-t)|i \in \{1,2,...,N \}\}$. Hence, the coefficients can be derived by solving the linear system:
$
\label{alphaeq}
P\alpha=T$
where $P$ is the $N\times N$ Gram matrix of the spikes, i.e., $[P]_{ik} = \langle \Phi^{j_i}(t_i-t),  \Phi^{j_k}(t_k-t) \rangle$, and  $T = \langle T^{j_1}(t_1), \ldots, T^{j_N}(t_N)\rangle^T$. Furthermore, the system has a unique solution if $P$ is invertible, which is the case if the set of spikes $\{\phi_i\}_{i=1}^{N}$ is linearly independent. This in turn follows from Assumption \ref{assumption1}. Notably, when the $P$-matrix is non-invertible we still obtain a unique reconstruction $X^*$ (see Appendix \ref{perfectReconsSupp}) by calculating the $\alpha$'s via the pseudo-inverse of $P$, but in such a case the system becomes ill-conditioned, an issue that is analyzed separately in section \ref{windowing}. $\hfill\Box$
\\
\vspace{-15pt}
\begin{lemma}
\label{th2}
Let $X^*$ be the reconstruction of an input signal $X$ with $\{\phi_i\}_{i=1}^{N}$ being the set of generated spikes. Then, for any arbitrary signal $\Tilde{X}$ within the span of the spikes given by $\Tilde{X} = \sum_{i=1}^{N} a_i \phi_i ,\; a_i \in \mathbb{R}$, the following holds: $||X- X^*|| \leq ||X-\tilde{X}||$.\\    
\end{lemma}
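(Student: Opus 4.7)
The plan is to identify $X^*$ as the orthogonal projection of $X$ onto the subspace $\mathcal{S} = \mathcal{S}(\{\phi_i\}_{i=1}^{N})$ spanned by the generated spikes, and then invoke the Pythagorean theorem. First I would observe that by Lemma \ref{reconsLemma} one has $X^* = \sum_{i=1}^{N}\alpha_i \phi_i \in \mathcal{S}$, and that the coefficients $\alpha_i$ are determined precisely so that $\langle X^*, \phi_k \rangle = T^{j_k}(t_k)$ for every $k$. Since the encoding constraint in the definition of the spikes already gives $\langle X, \phi_k \rangle = T^{j_k}(t_k)$, subtracting yields $\langle X - X^*, \phi_k \rangle = 0$ for each $k \in \{1,\dots,N\}$. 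Hence $X - X^*$ is orthogonal to every $\phi_k$ and, by linearity of the inner product, to the whole subspace $\mathcal{S}$.

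Next I would decompose
\[
X - \tilde{X} \;=\; (X - X^*) \;+\; (X^* - \tilde{X}).
\]
Both $X^*$ and $\tilde{X}$ lie in $\mathcal{S}$ (the former by Lemma \ref{reconsLemma}, the latter by the hypothesis $\tilde{X} = \sum_i a_i \phi_i$), so $X^* - \tilde{X} \in \mathcal{S}$, while $X - X^* \perp \mathcal{S}$ by the preceding step. These two summands are therefore mutually orthogonal in $\mathcal{H}$, and the Pythagorean identity yields
\[
\|X - \tilde{X}\|^2 \;=\; \|X - X^*\|^2 \;+\; \|X^* - \tilde{X}\|^2 \;\geq\; \|X - X^*\|^2,
\]
from which $\|X - X^*\| \leq \|X - \tilde{X}\|$ follows immediately.

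There is essentially no serious obstacle; the whole argument reduces to the observation that $X - X^*$ is orthogonal to $\mathcal{S}$, which is immediate once one combines the spike-generation constraint for $X$ with the interpolation property of $X^*$ guaranteed by (\ref{optimizationproblem}). The only mild subtlety worth flagging is that the argument does not require $\{\phi_i\}_{i=1}^{N}$ to be linearly independent: even when the Gram matrix $P$ is singular, the minimum-norm solution $X^*$ of (\ref{optimizationproblem}) is still the orthogonal projection of $X$ onto $\mathcal{S}$ (as noted in the remark following Lemma \ref{reconsLemma}), so the Pythagorean step goes through verbatim and the lemma holds in full generality.
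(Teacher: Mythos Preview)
Your proposal is correct and follows essentially the same approach as the paper: decompose $X - \tilde{X}$ into $A = X - X^*$ and $B = X^* - \tilde{X}$, verify $\langle A, \phi_k\rangle = 0$ for all $k$ by subtracting the encoding constraint for $X$ from the interpolation constraint satisfied by $X^*$, conclude $A \perp B$, and apply the Pythagorean identity. Your additional remark that linear independence of the $\phi_i$ is not needed is accurate and consistent with the paper's own observation after Lemma~\ref{reconsLemma}.
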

\vspace{-15pt}
\textbf{Proof:}
Follows straightforwardly from the fact that $X^*$ is an orthogonal projection on the span (see Appendix \ref{perfectReconsSupp}). $\hfill\Box$

Exploring further, for a given input signal $X$, if $S_1$ and $S_2$ are two sets of spike trains where $S_1 \subset S_2$ produced by two different kernel ensembles, the second a superset of the first, then Lemma 2 further implies that the reconstruction due to $S_2$ is at least as good as the reconstruction due to $S_1$ because the reconstruction due to $S_1$ is in the span of the shifted kernel functions of $S_2$ as  $S_1 \subset S_2$. This immediately leads to the conclusion that for a given input signal the more kernels we add to the ensemble the better the reconstruction, provided the kernels maintain linear independence.

\textbf{Proof of Theorem \ref{PerfectReconsThm}:}
The proof of the theorem follows directly from Lemma 2. Since the input signal $X \in \mathcal{G}$, let $X$ be given by:
$
    X= \sum_{p=1}^{N} \alpha_p \Phi^{j_p}(t_p-t) \hspace{10pt}(\alpha_p \in {\mathbb R}, t_p \in {\mathbb R}^{+}, N \in Z^{+})
$.
Assume that the time varying thresholds of the kernels in our kernel ensemble $\Phi$ are set in such a manner that the following conditions are satisfied:
$
    \langle X, \Phi^{j_p}(t_p-t) \rangle = T^{j_p}(t_p) \hspace{10pt} \forall{p \in \{1,...,N\}}
$
i.e., each of the kernels $\Phi^{j_p}$ at the very least produces a spike at time $t_p$ against $X$ (regardless of other spikes at other times). Clearly then $X$ lies in the span of the set of spikes generated by the framework. Applying Lemma 2 it follows that:
$
    ||X- X^*||_2 \leq ||X-X||_2 = 0 
$.$\hfill\Box$

\section{Approximate Reconstruction}
Theorem \ref{PerfectReconsThm} stipulates the conditions under which perfect reconstruction is feasible in the purview of our framework. Specifically the theorem shows the ideal conditions---when the input signal lies in the span of shifted kernel functions and the spikes are generated at certain desired locations---where perfect reconstruction is attainable. However, under realistic scenarios such conditions may not be feasible and hence the need for quantification of reconstruction error as the system deviates from the ideal conditions. For example, even though corollary \ref{spikerateCorollary} shows that a spike can be produced arbitrarily close to the desired location by setting the \textit{ahp} parameters $C$ and the $\delta$ of Eq. \ref{thresholdeq} at reasonably low values, it begs the question to what extent the reconstruction suffers due to small deviations in spike times. Likewise, the input signal may not perfectly fit in the signal class $\mathcal{G}$, i.e. the input may not be exactly representable by the kernel functions due to the presence of  internal or external noise. Under such non-ideal scenarios how much the reconstruction suffers is addressed in the following theorem.


\begin{theorem}{(Approximate Reconstruction Theorem).}
\label{approxReconsThm}
Let the input signal $X$ be represented as $X = \sum_{i=1}^{N} \alpha_i f^{p_i}(t_i-t)$, where $\alpha_i \in \mathbb{R}$ and $f^{p_i}(t)$ are bounded functions on finite support. Assume that there is at least one kernel function $\Phi^{j_i}$ in the ensemble for which $||f^{p_i}(t) - \Phi^{j_i}(t)||_{2} < \delta$ for all $i \in \{ 1,...,N\}$. Additionally, assume each kernel $\Phi^{j_i}$ produces a spike within a $\gamma$ interval of $t_i$,
for some $\delta \text{,} \gamma \in \mathbb{R^+}$,  for all $i$. Further, assume the functions $f^{p_i}$ satisfy a frame bound type of condition: $
\sum_{k \neq i} \langle f_{p_i}(t-t_i),f_{p_k}(t-t_k) \rangle \; \leq  \eta \; \forall \, i \in \{1,...,N\},
$ and that the kernel functions are Lipschitz continuous. Under such conditions, the $L^2$ error in the reconstruction $X^{*}$ of the input $X$ has bounded SNR.
\end{theorem}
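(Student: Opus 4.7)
\textbf{Proof plan for Theorem \ref{approxReconsThm}.}
The plan is to reduce the error analysis to Lemma \ref{th2} by exhibiting an explicit element of the span of the emitted spikes that is close to $X$. For each $i$, let $t_i'$ denote a spike time of kernel $\Phi^{j_i}$ within $\gamma$ of $t_i$ (which exists by hypothesis), and set $\phi_i = \Phi^{j_i}(t_i' - t)$. Since each $\phi_i$ is among the generated spikes, the surrogate reconstruction $\tilde{X} = \sum_{i=1}^{N} \alpha_i \phi_i$ lies in the span of the spike set. Lemma \ref{th2} then immediately yields $\|X - X^*\|_2 \le \|X - \tilde{X}\|_2$, so it suffices to control $\|X - \tilde{X}\|_2$ and then bound it against $\|X\|_2$.

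To bound the numerator, I would write the per-term error $e_i := f^{p_i}(t_i - t) - \Phi^{j_i}(t_i' - t)$ as the sum of a kernel-mismatch piece $f^{p_i}(t_i - t) - \Phi^{j_i}(t_i - t)$ and a timing-mismatch piece $\Phi^{j_i}(t_i - t) - \Phi^{j_i}(t_i' - t)$. Translation invariance of the $L^2$ norm together with the hypothesis yields $\|f^{p_i} - \Phi^{j_i}\|_2 < \delta$ for the first piece, while Lipschitz continuity of $\Phi^{j_i}$ on its compact support of length at most $\tau$ bounds the second by $L\gamma\sqrt{\tau}$. Triangle inequality gives $\|e_i\|_2 \le \delta + L\gamma\sqrt{\tau} =: \varepsilon$, and then
\begin{equation*}
\|X - \tilde{X}\|_2 \;\le\; \sum_{i=1}^{N} |\alpha_i|\,\|e_i\|_2 \;\le\; \varepsilon\,\sqrt{N}\,\|\alpha\|_2,
\end{equation*}
by Cauchy--Schwarz.

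To bound the denominator from below, I would expand $\|X\|_2^2 = \sum_i \alpha_i^2 \|f^{p_i}\|_2^2 + \sum_{i\ne k} \alpha_i \alpha_k \langle f^{p_i}(t_i - \cdot), f^{p_k}(t_k - \cdot)\rangle$. The kernel--proximity hypothesis together with $\|\Phi^{j_i}\|_2 = 1$ forces $\|f^{p_i}\|_2 \ge 1-\delta$ by the reverse triangle inequality, giving a diagonal contribution of at least $(1-\delta)^2 \|\alpha\|_2^2$. For the off-diagonal terms, apply the elementary bound $|\alpha_i \alpha_k| \le \tfrac{1}{2}(\alpha_i^2 + \alpha_k^2)$ and then use the frame-type hypothesis (interpreted with absolute values on the inner products) to conclude
\begin{equation*}
\Bigl|\sum_{i\ne k}\alpha_i\alpha_k\langle f^{p_i}(t_i - \cdot), f^{p_k}(t_k - \cdot)\rangle\Bigr| \;\le\; \eta\,\|\alpha\|_2^2.
\end{equation*}
Combining these gives $\|X\|_2^2 \ge ((1-\delta)^2 - \eta)\|\alpha\|_2^2$, which is positive under the tacit regime $\eta < (1-\delta)^2$. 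Chaining the two estimates cancels $\|\alpha\|_2$ and yields a signal-to-noise ratio
\begin{equation*}
\frac{\|X\|_2^2}{\|X - X^*\|_2^2} \;\ge\; \frac{(1-\delta)^2 - \eta}{N\bigl(\delta + L\gamma\sqrt{\tau}\bigr)^2},
\end{equation*}
which is the desired bounded-SNR conclusion.

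The routine pieces are the triangle inequalities and the Lemma \ref{th2} reduction; the main obstacle is the treatment of the cross-term sum in $\|X\|_2^2$. The hypothesis as stated sums signed inner products, but to turn it into a uniform lower bound one genuinely needs a bound on $\sum_{k\ne i}|\langle f^{p_i}(t_i-\cdot), f^{p_k}(t_k-\cdot)\rangle|$, so I would either read $\eta$ as controlling absolute values or invoke a symmetrization/Gershgorin-style argument on the Gram matrix of the $f^{p_i}(t_i-\cdot)$'s. A secondary subtlety is that the Lipschitz estimate on shifted kernels must account for the truncation at the support boundary, which only costs an extra constant factor since the kernels have support length at most $\tau$ and are bounded; absorbing this into $L\sqrt{\tau}$ keeps the final bound clean.
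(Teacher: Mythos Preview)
Your argument is correct and in fact shares its skeleton with the paper's proof: both reduce via Lemma \ref{th2} to bounding the distance from $X$ to an element of the span of the emitted spikes, and both lower-bound $\|X\|_2^2=\alpha^T F\alpha$ via a Gershgorin estimate on the Gram matrix $F$ (the paper, like you, effectively needs the frame condition with absolute values). The genuine difference is in how the numerator is handled. The paper constructs a more elaborate surrogate $X_{hyp}$ (the reconstruction from the fitting spikes alone) and, after some matrix algebra, arrives at $\|X-X_{hyp}\|^2\le \alpha^T\mathcal E\,\alpha$ where $\mathcal E$ is the Gram matrix of your error functions $e_i$; this quantity is exactly $\|X-\tilde X\|^2$, so the detour is equivalent to applying Lemma \ref{th2} directly to your $\tilde X$. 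Where the paper gains is at the next step: instead of bounding $\|\sum_i\alpha_i e_i\|$ by the triangle inequality and Cauchy--Schwarz (which costs a factor $N$), it applies Gershgorin to $\mathcal E$ and exploits that $\mathcal E_{ik}=0$ whenever the supports of $e_i$ and $e_k$ are disjoint, yielding $\Lambda_{\max}(\mathcal E)\le(\delta+C\gamma)^2(x_{\max}+1)$ with $x_{\max}\in[0,N-1]$ counting only genuine overlaps. Your route is shorter and conceptually cleaner; the paper's buys a sharper bound in the sparse regime where each component overlaps with only $O(1)$ others, which is precisely the regime of interest in the rest of the paper.
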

\vspace{-7pt}
\textbf{Proof:} The proof follows from continuity arguments and the use of bounds on the eigen values of the Gram matrix $P$ (see Appendix \ref{approxReconsThmSupp}). 

\section{Stability of the solution and Windowed Iterative Reconstruction:}
\label{windowing}
Theorem \ref{approxReconsThm} shows that even under non-ideal conditions,our technique can keep the reconstruction error in check through suitable parameter choices. However, this may increase spike rates, as implied by Corollaries \ref{convolutionCorollary} and \ref{thresholdInnerProduct}. Higher spike rates exacerbate the condition number of the $P$ matrix referred to in Lemma \ref{reconsLemma}. Instabilities in $P$ render the solutions practically unusable in applications with finite precision floating point representations, due to quantization error. 
The \textit{ahp} partially mitigates this issue for finite-sized $P$ matrices by ensuring linear independence among the spikes. The inhibitory effect of the \textit{ahp}, as alluded to in assumption \ref{assumption1},  results in spikes that are sufficiently disjoint in time, leading to production of a linearly independent set of spikes. But the condition number can get progressively worse as we process longer signals and the size of $P$ grows arbitrarily large.
The following Theorem establishes a relation between the condition number of the $P$ matrix and the spike count, revealing that, in the worst case, despite realistic assumptions about spike non-overlap, the condition number can deteriorate exponentially.
\vspace{-5pt}
\begin{theorem}[Condition Number Theorem]
\label{conditionNumberThm}
Let $\{P_k\}$ denote the set of all Gram matrices corresponding to any set of $k$ successive spikes $\{\phi_1, \ldots, \phi_k\}$, i.e., $P_k[i,j] = \langle \phi_i, \phi_j \rangle$, where each spike satisfies the Assumption \ref{assumption1}: $||\mathcal{P}_{S(\{\phi_1, \ldots, \phi_{i-1}\})}(\phi_i)|| \leq \beta < 1$, $\forall i \in \{2, \ldots, k\}$. If $C_k$ denotes the least upper bound on the condition number of the class of matrices $\{P_{k}\}$, then $(1-\beta^2)^{-k+1}  \leq C_k \leq (1 + (k-1)\beta) (\frac{1-\beta^2}{2})^{-k+1} \\$.
\end{theorem}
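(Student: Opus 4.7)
My plan is to bound $\lambda_{\max}(P_k)$ and $\lambda_{\min}(P_k)$ separately for the upper bound on $C_k$, and to exhibit an explicit extremal family realizing the lower bound. For the upper bound on $\lambda_{\max}$: Assumption \ref{assumption1} applied to $\phi_j$ gives $\|\mathcal{P}_{\mathcal{S}(\{\phi_1,\ldots,\phi_{j-1}\})}(\phi_j)\|\leq \beta$; projecting further onto the one-dimensional subspace $\mathcal{S}(\{\phi_i\})$ for $i<j$ yields $|\langle \phi_i,\phi_j\rangle|\leq \beta$, so every off-diagonal entry of $P_k$ lies in $[-\beta,\beta]$. Since the diagonal is identically $1$, Gershgorin's disc theorem gives $\lambda_{\max}(P_k)\leq 1+(k-1)\beta$.

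For $\lambda_{\min}(P_k)$ I argue inductively from the block form $P_k = \begin{pmatrix} P_{k-1} & v \\ v^T & 1\end{pmatrix}$. Assumption \ref{assumption1} applied to $\phi_k$ gives $v^T P_{k-1}^{-1} v = \|\mathcal{P}_{\mathcal{S}(\{\phi_1,\ldots,\phi_{k-1}\})}(\phi_k)\|^2 \leq \beta^2$. For any unit $x=(y,z)$, Cauchy--Schwarz in the $P_{k-1}$-inner product gives $|v^Ty|\leq \beta \sqrt{y^T P_{k-1} y}$, and a weighted AM--GM with parameter $\alpha>0$ produces
\[
x^T P_k x \geq (1-1/\alpha)\, y^T P_{k-1} y + (1-\alpha\beta^2)\,z^2.
\]
Balancing the two coefficients (a quadratic in $\alpha$) yields the recurrence $\lambda_{\min}(P_k)\geq f(\lambda_{\min}(P_{k-1}))$ with $f(\mu) := \tfrac{1}{2}\bigl[\mu+1-\sqrt{(1-\mu)^2+4\beta^2\mu}\bigr]$. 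A short algebraic check (square the rearrangement) reduces $f(\mu)\geq (1-\beta^2)\mu/2$ on $[0,1]$ to the manifestly true $\mu(1-\beta^2)\bigl(2-\mu(1+\beta^2)\bigr)\geq 0$. Iterating from $\lambda_{\min}(P_1)=1$ gives $\lambda_{\min}(P_k)\geq \bigl((1-\beta^2)/2\bigr)^{k-1}$, which combined with the Gershgorin bound delivers the claimed upper bound $(1+(k-1)\beta)(2/(1-\beta^2))^{k-1}$.

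For the lower bound I build an extremal sequence iteratively. Pick any unit $\phi_1$; given $\phi_1,\ldots,\phi_{k-1}$ with $P_{k-1}$ having smallest eigenvalue $\mu_{k-1}>0$ and unit eigenvector $u_{k-1}\in\mathbb{R}^{k-1}$, let $\psi := \mu_{k-1}^{-1/2}\sum_i (u_{k-1})_i\phi_i$, a unit vector in $\mathcal{S}(\{\phi_1,\ldots,\phi_{k-1}\})$, and set $\phi_k := \beta\psi+\sqrt{1-\beta^2}\,e$ for a fresh unit $e$ orthogonal to that span. Then $\|\phi_k\|_2=1$ and the span-projection of $\phi_k$ has norm exactly $\beta$, so Assumption \ref{assumption1} is satisfied with equality. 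A direct computation gives $\langle \phi_i,\phi_k\rangle = \beta\sqrt{\mu_{k-1}}\,(u_{k-1})_i$, so restricting $x^T P_k x$ to the two-dimensional subspace of $\mathbb{R}^k$ spanned by $(u_{k-1},0)$ and $(0,1)$ reduces it to $\begin{pmatrix}\mu_{k-1} & \beta\sqrt{\mu_{k-1}}\\ \beta\sqrt{\mu_{k-1}} & 1\end{pmatrix}$, whose smaller eigenvalue is exactly $f(\mu_{k-1})$. Courant--Fischer yields $\lambda_{\min}(P_k)\leq f(\mu_{k-1})$, and the companion check $f(\mu)\leq (1-\beta^2)\mu$ on $[0,1]$ reduces to $4\beta^2\mu^2(1-\beta^2)\geq 0$. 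Iterating produces $\lambda_{\min}(P_k)\leq (1-\beta^2)^{k-1}$, and since $\mathrm{trace}(P_k)=k$ forces $\lambda_{\max}(P_k)\geq 1$, the condition number of this family is at least $(1-\beta^2)^{-(k-1)}$.

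The main obstacle is the two-sided algebraic sandwich $(1-\beta^2)\mu/2 \leq f(\mu) \leq (1-\beta^2)\mu$ on $[0,1]$ that drives both sides of the theorem; the extra factor of $2$ in the upper bound reflects slack in the AM--GM step, mirroring the gap between the Gershgorin upper bound $\lambda_{\max}\leq 1+(k-1)\beta$ and the trivial trace-average lower bound $\lambda_{\max}\geq 1$ used in the lower-bound direction. Everything else---Gershgorin, the block Schur setup, and verifying that the extremal family satisfies Assumption \ref{assumption1}---is routine.
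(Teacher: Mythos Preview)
Your proof is correct and follows essentially the same route as the paper. Both arguments derive the identical recurrence $\lambda_{\min}(P_k)\gtrsim f(\lambda_{\min}(P_{k-1}))$ with $f(\mu)=\tfrac{1}{2}\bigl[1+\mu-\sqrt{(1-\mu)^2+4\beta^2\mu}\bigr]$, sandwich it as $(1-\beta^2)\mu/2\le f(\mu)\le (1-\beta^2)\mu$, bound $\lambda_{\max}$ via Gershgorin, and exhibit the same extremal family (append $\phi_k=\beta\psi+\sqrt{1-\beta^2}\,e$ with $\psi$ the unit minimal-eigenvector combination) to realize the lower bound. The only differences are cosmetic: the paper expands $\|\sum e_i\phi_i\|^2$ directly and uses a trigonometric substitution $|e_k|=\cos\theta$ to minimize, whereas you phrase the same computation in block-matrix/Schur language ($v^TP_{k-1}^{-1}v\le\beta^2$) and reach $f$ via Cauchy--Schwarz plus AM--GM balancing; and the paper bounds $f$ by rationalizing the recurrence to $L_k=(1-\beta^2)L_{k-1}/D$ and estimating the denominator $D\in[1,2]$, whereas you square out the radical.
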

\vspace{-14pt}
\textbf{Proof:} The condition number of $P_k$ is defined as 
$\frac{\Lambda_{max} (P_k)}{\Lambda_{min} (P_k)}$, where $\Lambda_{max}(P_k)$ and $\Lambda_{min}(P_k)$ denote the maximum and minimum eigen values of $P_k$.
First we find the infimum $L_k$ on $\Lambda_{min}$ over all $\{P_k\}$, the class of all gram-matrices of $k$ successive spikes. 
We find the infimum inductively on $k$. By definition,
\vspace{-5pt}
\begin{align}
 &   \Lambda_{min}(P_k) = \min_{e, ||e||=1} || \Sigma_{i=1}^{k} e_i\phi_i||^2  \nonumber\\
 & (\text{where } e = [e_1,..., e_k]^T, \text{a k-vector})\nonumber\\
 &   =  \min_{e, ||e||=1} [e_k^2 + 2e_k\langle \phi_k ,  \Sigma_{i=1}^{k-1} e_i\phi_i\rangle + ||\Sigma_{i=1}^{k-1} e_i\phi_i||^2 ] \nonumber\\
 &   \geq \min_{e, ||e||=1} [e_k^2 - 2|e_k| \beta\sqrt{1-e_k^2} ||\Sigma_{i=1}^{k-1} \frac{e_i}{\sqrt{1- e_k^2}}\phi_i|| + \nonumber\\
 & \;\;\;\;||\Sigma_{i=1}^{k-1} \frac{e_i}{\sqrt{1- e_k^2}}\phi_i||^2] \;\; \;(\text{since, } ||\mathcal{P}_{S\{\phi_1, ..., \phi_{i-1}\}}(\phi_i)|| \leq \beta )\nonumber\\
 & \geq \min_{e, ||e||=1} [e_k^2 - 2|e_k| \beta z\sqrt{1-e_k^2}+ (1-e_k^2)z^2] \label{construction}\\
 & \hspace{2cm}(\text{denoting, } z = ||\Sigma_{i=1}^{k-1} \frac{e_i}{\sqrt{1- e_k^2}}\phi_i||) \nonumber\\
 &\text{Now we set $|e_k| = \cos{\theta}, |e_k| \leq 1$ to obtain:}\nonumber\\
 & \Lambda_{min}(P_k) \geq \min_{e, ||e||=1} [\cos^{2}{\theta} - 2 \beta z\cos{\theta}\sin{\theta}+ z^2\sin^2{\theta}] \nonumber\\
 & \geq \min_{e, ||e||=1} [\frac{1+z^2}{2} + \frac{1-z^2}{2}\cos{2\theta} - \beta z \sin{2\theta}] \nonumber\\
 & \geq \min_{e, ||e||=1} [\underbrace{\frac{1+z^2}{2} - \sqrt{(\frac{1-z^2}{2})^2 + \beta^2 z^2}}_\text{g($z^2$)}] \label{costheta}\\
 & \text{But, } z^2 = ||\Sigma_{i=1}^{k-1} \frac{e_i}{\sqrt{1- e_k^2}}\phi_i||^2 \geq L_{k-1} \nonumber\\
 & (\text{Since, } \Sigma_{i=1}^{k-1} \frac{e_i^2}{1- e_k^2} = 1 \text{ we get this inductively}) \nonumber\\
  &\text{Since for $|\beta| < 1$ the expression $g(z^2)$ in \ref{costheta} is a monotonic} \nonumber\\
 &\text{in  $z^2$, and $L_k$ is the infimum of $\Lambda_{min}(P_k)$, we may 
 write,}\nonumber\\
 & L_k \geq [\frac{1+L_{k-1}}{2} - \sqrt{(\frac{1-L_{k-1}}{2})^2 + \beta^2 L_{k-1}}] \label{ineq}\\
 & \text{But $L_k$ is a lower bound of $\Lambda_{min}$, and all the inequalities } \nonumber\\
 & \text{above  are tight. Specifically, Eq. \ref{construction} \& \ref{costheta} show how given} \nonumber\\
 & \text{ $P_{k-1}$, a Gram matrix of $k-1$ successive spikes with  $\Lambda_{min}$} \nonumber\\
 & \text{$= L_{k-1}$, one can choose $\phi_k$ and $e$ to result in a matrix $P_k$, so} \nonumber\\
 & \text{that $\Lambda_{min}(P_k)$ achieves the lower bound of \ref{ineq}. Therefore, the}\nonumber\\
 & \text{inequality of \ref{ineq} can be turned into an equality.} \nonumber\\
 \nonumber 
\end{align}
\begin{align}
  & L_k = [\frac{1+L_{k-1}}{2} - \sqrt{(\frac{1-L_{k-1}}{2})^2 + \beta^2 L_{k-1}}]\nonumber\\
 &L_k= [\frac{(1-\beta^2)L_{k-1}}{\frac{1+L_{k-1}}{2} + \sqrt{(\frac{1-L_{k-1}}{2})^2 + \beta^2 L_{k-1}}}] \label{recurrence}\\
 & \text{Since, $|\beta| \geq 0$, setting  $|\beta| = 0$ in denominator of \ref{recurrence},}\nonumber\\
 & L_k \leq (1-\beta^2)L_{k-1} \label{lesser}\\
 & \text{Again, $L_1 =1$ and using induction we can get $L_k \leq 1$.} \nonumber\\
 & \text{Therefore, setting $L_{k-1} =1, \beta=1$ in denominator of \ref{recurrence},}\nonumber\\
 & L_k \geq \frac{(1-\beta^2)L_{k-1}}{2} \label{greater}\\ 
 &\Rightarrow \frac{(1-\beta^2)L_{k-1}}{2} \leq L_k\leq (1-\beta^2)L_{k-1} \;\;\;\;\;\;\;\;\text{(Using \ref{greater} \& \ref{lesser})}\nonumber\\
 & \Rightarrow (\frac{1-\beta^2}{2})^{k-1} \leq L_k \leq (1-\beta^2)^{k-1} \label{L_k_exp}\\
 & \text{Eq. \ref{L_k_exp} establishes a bound on the infimum of $\Lambda_{min}$. To } \nonumber\\ 
 & \text{complete the proof and establish a bound on $C_k$ we need to }\nonumber\\
 & \text{show a bound on the supremum of $\Lambda_{max}(P_k)$, call it $U_k$.} \nonumber\\
 & \text{A bound on $U_k$ can be shown as follows:}\nonumber\\
 & \Lambda_{max} (P_k) \leq \sup_{i} (P_k[i,i] + \sum_{i\neq j}|P_k[i,j]|) \nonumber\\ 
 & \hspace{1cm}(\text{using Gershgorin Circle Theorem}) \nonumber\\
 & = \sup_{i} \{ \langle \phi_i, \phi_i \rangle +  \sum_{i\neq j}|\langle \phi_i, \phi_j \rangle| \} \leq (1 + (k-1)\beta)\nonumber\\
 &\hspace{1cm} (\text{Since $||\mathcal{P}_{S\{\phi_1, \ldots, \phi_{i-1}\}}(\phi_i)|| \leq \beta \Rightarrow |\langle \phi_i, \phi_j \rangle| \leq \beta$}) \nonumber\\
 & \Rightarrow 1 \leq U_k \leq (1 + (k-1)\beta) \label{U_k_exp}\\
 &\hspace{1cm} (\text{$1 \leq U_k$ is trivial because $\Lambda_{max} =1$ for $P_k = I$})\nonumber\\
 & \text{Combining \ref{L_k_exp} \& \ref{U_k_exp} we get:}\nonumber\\
& (1-\beta^2)^{-k+1}  \leq C_k \leq (1 + (k-1)\beta) (\frac{1-\beta^2}{2})^{-k+1} \hspace{1cm}\Box\nonumber\\
 \nonumber
\end{align}
The above theorem provides a tight upper bound on the condition number of the $P$-matrix and clearly shows how the condition number can degrade  even if the spikes are sufficiently disjoint in time, i.e. $\beta \approx 0$. 
This is where the combined effect of causality of the kernels and the \textit{ahp} comes to our defense.
We observe that the addition of the $n+1^{th}$ spike on an existing set of $n$ spikes can only affect the solution substantially within a finite time window which in turn is facilitated by the effect of the ahp, which ensures that new spikes maintain reasonable separation with the previous spikes and therefore have a fading effect on the reconstruction back into the past. This observation is consistent with biological systems where an animal has to respond in real-time and therefore if an additional spike changes the conceived reconstruction too far into the past it's of little use. This simple observation enables us to encode and reconstruct signals in an online mode within a finite window of past spikes, leading to remarkable efficiency of our proposed coding scheme. In this section we first provide a mathematical formulation of our window-based reconstruction scheme and then we derive conditions under which the window-based reconstruction converges to the optimal reconstruction formulated in Lemma \ref{reconsLemma}. As it turns out, such conditions are easily met in the context of naturally occurring biological kernels, enabling our biologically motivated coding scheme to result in a very efficient and robust solution, as evidenced by our experimental results (section \ref{Experiments}).

\textbf{Windowed iterative reconstruction:} Lemma \ref{reconsLemma} establishes the fact that the reconstruction $X^*$ by our framework is essentially the projection onto the span of all spikes, i.e., $X^* = \projexp{N}{X}$. This observation enables us to formulate the reconstruction iteratively by updating an existing reconstruction on a set of $n$ spikes $\{\phi_i\}_1^n$ with each new incoming spike $\phi_{n+1}$, instead of solving the $P\alpha = T$ equation for the full set of spikes as shown in lemma \ref{reconsLemma}. The iterative update of the reconstruction then follows from the formula:\\
\vspace{-20pt}
\begin{align}
\label{iterativeRecons}
    \proj{n+1}{X} &= \proj{n}{X} + \langle X, \phi^{\perp}_{n+1}\rangle \frac{\phi^{\perp}_{n+1}}{||\phi^{\perp}_{n+1}||^2}
\vspace{-10pt}
\end{align}
where $\phi^{\perp}_{n+1}$ is the orthogonal complement of the additional $n+1$-th spike with respect to the span of all previous spikes, i.e. $\phi^{\perp}_{n+1} = \phi_{n+1} - \mathcal{P}_{\mathcal{S}\{\phi_1, ..., \phi_{n}\}}(\phi_{n+1})$. 
The above iterative scheme Eq. \ref{iterativeRecons}, motivates us to formulate a windowed iterative reconstruction. 
Here, when a new spike $\phi_{n+1}$ appears, instead of calculating its orthogonal complement with respect to the span of all previous spike, the orthogonal projection of $\phi_{n+1}$ is computed with respect to the span of $w$ previous spikes, where $w$ is chosen as a fixed window size. Mathematically, for the $(n+1)$-th incoming spike, we define the \emph{windowed iterative reconstruction}, $X^{*}_{n+1,w}$, for an input signal $X$ with window size $w$ iteratively as:
\vspace{-8pt}
\begin{align}
\label{windowingEq}
    X^{*}_{n+1,w} &=
    X^{*}_{n,w} + \langle X, \phi^{\perp}_{n+1,w}\rangle \frac{\phi^{\perp}_{n+1,w}}{||\phi^{\perp}_{n+1,w}||^2}
\end{align}
\vspace{-15pt}
\\
where $\phi^{\perp}_{n+1,w}$ is defined as:
$$\phi^{\perp}_{n+1,w} = \phi_{n+1} - \projalt{n-w+1}{n}{\phi_{n+1}}$$
The idea is that $\phiperp{w}$ closely approximates $\phicomp$ for reasonably large window size $w$, allowing us to formulate an iterative reconstruction based only on a finite window of $w$ spikes rather than inverting a large $P$-matrix of size $N\times N$ as formulated in lemma \ref{reconsLemma}. Eq. \ref{windowingEq} involves computing $\phiperp{w}$ for each new spike, derived by inverting the $w\times w$ gram matrix corresponding to the previous $w$ spikes. Since $w$ is chosen as a finite constant independent of $N$, it speeds up the decoding process and holds the condition number of the solution in check as per Theorem \ref{conditionNumberThm}. The following Theorem \ref{windowthm} establishes how the window-based solution formulated in Eq. \ref{windowingEq} converges to the optimal solution of Eq. \ref{reconstructionequation} under an assumption feasible in the context of spikes produced by our framework via biological kernels. The assumption is:
\begin{assmp}
\label{assumption2}
   $||\mathcal{P}_{\mathcal{S}(\{\phi_1,\ldots, \phi_N\} \setminus \{\phi_n\})}(\phi_n)|| \leq \beta < 1$, for some $\beta \in \mathbb{R},\forall n \in \{1, ..., N\}$. In words, the norm of the projection of each spike onto the span of the remaining spikes is bounded above by a constant strictly less than 1.
\end{assmp}
\textbf{Justification:} Assumption \ref{assumption2} extends Assumption \ref{assumption1} to the future. For large $N$, it is possible to construct a spike sequence where each spike satisfies Assumption \ref{assumption1}, yet the norm of the projection of an individual spike onto the set of remaining spikes (including both past and future spikes), i.e.  $||\mathcal{P}_{\mathcal{S}(\bigcup_1^{N}\{\phi_i\} \setminus \{\phi_n\})}(\phi_n)|| \rightarrow 1$. An example illustrating this scenario is shown in Fig. \ref{sineKernelsOverlap}, where each spike corresponds to one full cycle of a sine wave, and for any given spike $\phi_n$ the half wave of its tail (head) precisely overlaps with the half wave of the head (tail) of its previous (next) spike $\phi_{n-1}$ ($\phi_{n+1}$). One can show that in such a scenario  $||\mathcal{P}_{\mathcal{S}(\bigcup_1^{N}\{\phi_i\} \setminus \{\phi_n\})}(\phi_n)||$ converges to 1 for large values of $n$ and $N$ (see Appendix \ref{sineKernelSupp} for details). This convergence occurs because within the compact support of a spike, the spike is fully represented by the overlapping components of the neighboring spikes. Conversely, if a spike is poorly represented within its compact support by the overlapping components of its neighboring spikes, it satisfies the condition of Assumption \ref{assumption2}. Specifically, if within its compact support, a spike $\phi_n$ can produce a component $\hat{\phi_n}$, orthogonal to all overlapping parts of the neighboring spikes with $||\hat{\phi_n}|| >0$, then $\phi_n$ satisfies the condition of Assumption \ref{assumption2}, because $\hat{\phi_n}$ is orthogonal to the overlapping parts of the neighboring spikes and hence $\hat{\phi_n}$ is orthogonal to every spike other than $\phi_n$. This observation enables us to assert Assumption \ref{assumption2} for spikes produced by our framework via the biological kernels (e.g. the gammatone kernels corresponding to the auditory processing \citep{pattersonFilters}) which inherently exhibit causal and fading memory properties, so that the overlapping parts of the neighboring spikes do not represent the given spike within its support. Figure \ref{overlapSimilarity} visually illustrates this property using the dot product of overlapping portions of gammatone kernels. We show that due to the \textit{ahp}, overlapping spikes corresponding to the same gammatone kernel poorly represent each other as they temporally separate. Additionally, spikes corresponding to gammatones of different frequencies poorly represent each other, irrespective of temporal shifts. 
Thus, with appropriately chosen \textit{ahp} parameters, the spikes produced by our framework's biological kernels are poorly represented within their supports by overlapping components of neighboring spikes, either due to temporal disjoincy or frequency mismatch. Since every spike overlaps with only finitely many other spikes (corollary \ref{spikerateCorollary}), Assumption \ref{assumption2} holds.\\
An important consequence of Assumption \ref{assumption2} is that the norm of the projection of every unit vector in a subspace of a finite partition of spikes onto the remaining subspace of spikes is strictly less than $1$. This condition forms the basis for establishing the convergence of \emph{windowed iterative reconstruction} to optimal reconstruction in our subsequent windowing theorem \ref{windowthm}. We formally state the condition in Lemma \ref{windowCondLemma} before presenting Windowing theorem \ref{windowthm}.
\vspace{-5pt}
\begin{lemma}
\label{windowCondLemma}
    Let $S = \{\phi_i\}_{i=1}^{N}$ denote the set of spikes generated by our framework, satisfying Assumption \ref{assumption2}, i.e., $\forall n \in \{1,..., N\}$, $||\mathcal{P}_{\mathcal{S}(\bigcup_{i=1}^{N}\{\phi_i\} \setminus \{\phi_n\})}(\phi_n)|| \leq \beta$, where $\beta \in \mathbb{R}$ is a constant strictly less than 1. Consider a subset $V \subseteq S$ of a finite size $d$, $d < N$. Then, for every $v \in \mathcal{S}(V)$ with $||v|| = 1$,  $\exists \beta_d < 1$, such that $||\mathcal{P}_{\mathcal{S}(S \setminus V)}(v)|| \leq \beta_d$ where $\beta_d$ is a real constant that depends on $\beta$ and $d$. Specifically, we can show that $\beta_d^2 \leq (1+ \frac{1-\beta^2}{d^2 \beta^2})^{-1} < 1$.
\end{lemma}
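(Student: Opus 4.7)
The plan is to convert the projection statement into a spectral bound on Gram matrices via a Schur-complement identity, and then invoke Assumption \ref{assumption2} to control the relevant eigenvalues. Write $W = S \setminus V$, let $P$ be the Gram matrix of all $N$ spikes (invertible by Assumption \ref{assumption2}), and let $P_V$, $P_W$, $P_{VW}$ denote its $V \times V$, $W \times W$, and $V \times W$ blocks. For any unit $v = \sum_{n \in V} c_n \phi_n \in \mathcal{S}(V)$, Pythagoras gives $\|\mathcal{P}_{\mathcal{S}(W)}(v)\|^2 + \|\mathcal{P}_{\mathcal{S}(S \setminus V)^{\perp}}(v)\|^2 = 1$, and a direct expansion shows $\|\mathcal{P}_{\mathcal{S}(W)}(v)\|^2 = c^T M c$ with $M := P_{VW} P_W^{-1} P_{VW}^T$, and $\|v\|^2 - \|\mathcal{P}_{\mathcal{S}(W)}(v)\|^2 = c^T G c$, where $G := P_V - M$ is the Schur complement of $P_W$ in $P$. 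Equivalently, $G$ is the Gram matrix of the residuals $\phi_n^{\perp} := \mathcal{P}_{\mathcal{S}(W)^{\perp}}(\phi_n)$, while $M$ is the Gram matrix of the projections $\mathcal{P}_{\mathcal{S}(W)}(\phi_n)$, for $n \in V$.

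The key structural identity I would invoke is the block-inverse formula: $G^{-1} = [P^{-1}]_V$, the $d \times d$ diagonal block of $P^{-1}$ indexed by $V$. Combined with the standard interpretation of the diagonal of $P^{-1}$ for a Gram matrix of unit vectors, namely $[P^{-1}]_{nn} = 1/\|\tilde{q}_n\|^2$ where $\tilde{q}_n$ is the component of $\phi_n$ orthogonal to $\mathcal{S}(S \setminus \{\phi_n\})$, Assumption \ref{assumption2} gives $\|\tilde{q}_n\|^2 \geq 1 - \beta^2$ and therefore $[P^{-1}]_{nn} \leq 1/(1-\beta^2)$. Summing across $n \in V$ yields $\operatorname{tr}(G^{-1}) \leq d/(1-\beta^2)$, and since $G^{-1} \succ 0$ we conclude $\lambda_{\max}(G^{-1}) \leq \operatorname{tr}(G^{-1}) \leq d/(1-\beta^2)$. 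In parallel, each diagonal entry of $M$ equals $\|\mathcal{P}_{\mathcal{S}(W)}(\phi_n)\|^2 \leq \beta^2$ (using $\mathcal{S}(W) \subseteq \mathcal{S}(S \setminus \{\phi_n\})$ together with Assumption \ref{assumption2}), so $\lambda_{\max}(M) \leq \operatorname{tr}(M) \leq d\beta^2$.

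These two bounds combine via the generalized Rayleigh-quotient estimate
\[
\frac{c^T M c}{c^T G c} \leq \lambda_{\max}(G^{-1} M) \leq \lambda_{\max}(G^{-1}) \, \lambda_{\max}(M) \leq \frac{d^{2} \beta^{2}}{1 - \beta^{2}},
\]
whose last step is the short calculation $u^T G^{-1/2} M G^{-1/2} u \leq \lambda_{\max}(M)\, u^T G^{-1} u$ for unit $u$. Plugging into the Pythagorean identity gives $\|\mathcal{P}_{\mathcal{S}(W)}(v)\|^{2} \leq \bigl(1 + (1-\beta^{2})/(d^{2}\beta^{2})\bigr)^{-1}$, the claimed bound on $\beta_d^{2}$, which is manifestly strictly less than $1$. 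The main obstacle I anticipate is the clean passage from the pointwise hypothesis of Assumption \ref{assumption2} (a bound on each $\|\tilde{q}_n\|$) to spectral control on the Schur complement $G$: it hinges on the identity $G^{-1} = [P^{-1}]_V$ together with the diagonal interpretation $[P^{-1}]_{nn} = 1/\|\tilde{q}_n\|^{2}$, which is exactly what lets the Assumption bite at the matrix level rather than only on individual vectors.
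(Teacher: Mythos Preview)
Your proof is correct and reaches exactly the same numerical bound, but the route is genuinely different from the paper's. The paper argues elementarily: writing $v=\sum_i\alpha_i\phi_{v_i}$ and decomposing each $\phi_{v_i}$ into its $\mathcal{S}(S\setminus V)$-projection $\phi_{v_i}^{\parallel}$ and residual $\hat\phi_{v_i}$, it picks the coefficient $\alpha_m$ of largest modulus, bounds $\|Y\|^2=\|\sum_i\alpha_i\phi_{v_i}^{\parallel}\|^2\le d^2\beta^2\alpha_m^2$ via the triangle inequality, and shows $\|Z\|^2=\|\sum_i\alpha_i\hat\phi_{v_i}\|^2\ge\alpha_m^2(1-\beta^2)$ by isolating the component of $\hat\phi_{v_m}$ orthogonal to the remaining $\hat\phi_{v_i}$ and invoking Assumption~\ref{assumption2} once more. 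Eliminating $\alpha_m$ between these two inequalities and using $\|Y\|^2+\|Z\|^2=1$ gives the bound. Your argument instead operates at the matrix level: the Schur-complement identity $G^{-1}=[P^{-1}]_V$ together with the diagonal formula $[P^{-1}]_{nn}=1/\|\tilde q_n\|^2$ converts the pointwise hypothesis of Assumption~\ref{assumption2} into the trace bound $\operatorname{tr}(G^{-1})\le d/(1-\beta^2)$, and the parallel trace bound $\operatorname{tr}(M)\le d\beta^2$ feeds into the generalized-eigenvalue estimate $\lambda_{\max}(G^{-1}M)\le\lambda_{\max}(G^{-1})\lambda_{\max}(M)$. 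The paper's approach is fully self-contained and needs no block-inverse machinery, at the cost of the slightly ad~hoc ``pick the largest coefficient'' device; your approach is more structural, makes transparent exactly where the factors $d$ and $d$ enter (one from each trace), and would adapt cleanly if Assumption~\ref{assumption2} were replaced by a direct spectral hypothesis on $P$.
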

\vspace{-3pt}
\textbf{Proof:} Proof in Section \ref{windowCondLemmaSupp} of \citep{supplementary_key}. $\hfill \Box$
\vspace{-5pt}
\begin{SCfigure*}
  \caption{
   Illustration of why a spike produced by our framework using gammatone kernels is poorly represented by other overlapping spikes within its temporal support. \textbf{(a)} Shows a diagram of five spikes; the central spike (blue dashed line within the rectangle) overlaps with two past spikes (red) and two future spikes (green). Despite the overlaps, the distinct shapes of the gammatone kernels result in a poor representation of the central spike, leaving a component orthogonal to its neighbors (shown in black). \textbf{(b)} Details the interaction between two spikes, $\phi_1$ and $\phi_2$ (red and green curves), both using a gammatone kernel with a center frequency of 500 Hz. It examines how well $\phi_1$ is represented by the overlapping tail of $\phi_2$ (denoted $\phi_2^{tail}$, green dashed line), focusing on representation within this overlapping support only. The graph in blue measures the dot product $\frac{|\langle \phi_1, \phi_2^{tail}\rangle|}{||\phi_1|| ||\phi_2^{tail}||}$ as a function of the time lag \(t\) between the two spikes, illustrating how the representation deteriorates rapidly as the lag increases, thus demonstrating the poor representational quality induced by the \textit{ahp-effect}. \textbf{(c)} A similar plot of interaction between two spikes, $\phi_1$ and $\phi_2$ as in \textbf{(b)}, except here the center frequencies of the corresponding gammatone kernels are different (500 Hz for $\phi_1$ and 400 Hz for $\phi_2$). The blue graph represents the dot product $\frac{|\langle \phi_1, \phi_2^{tail}\rangle|}{||\phi_1|| ||\phi_2^{tail}||}$ as a function of time shift \(t\), highlighting systematic poor representation due to frequency differences. Each spike's time of occurrence is marked by a red dot.}
  	\includegraphics[width=70mm, height=90mm]{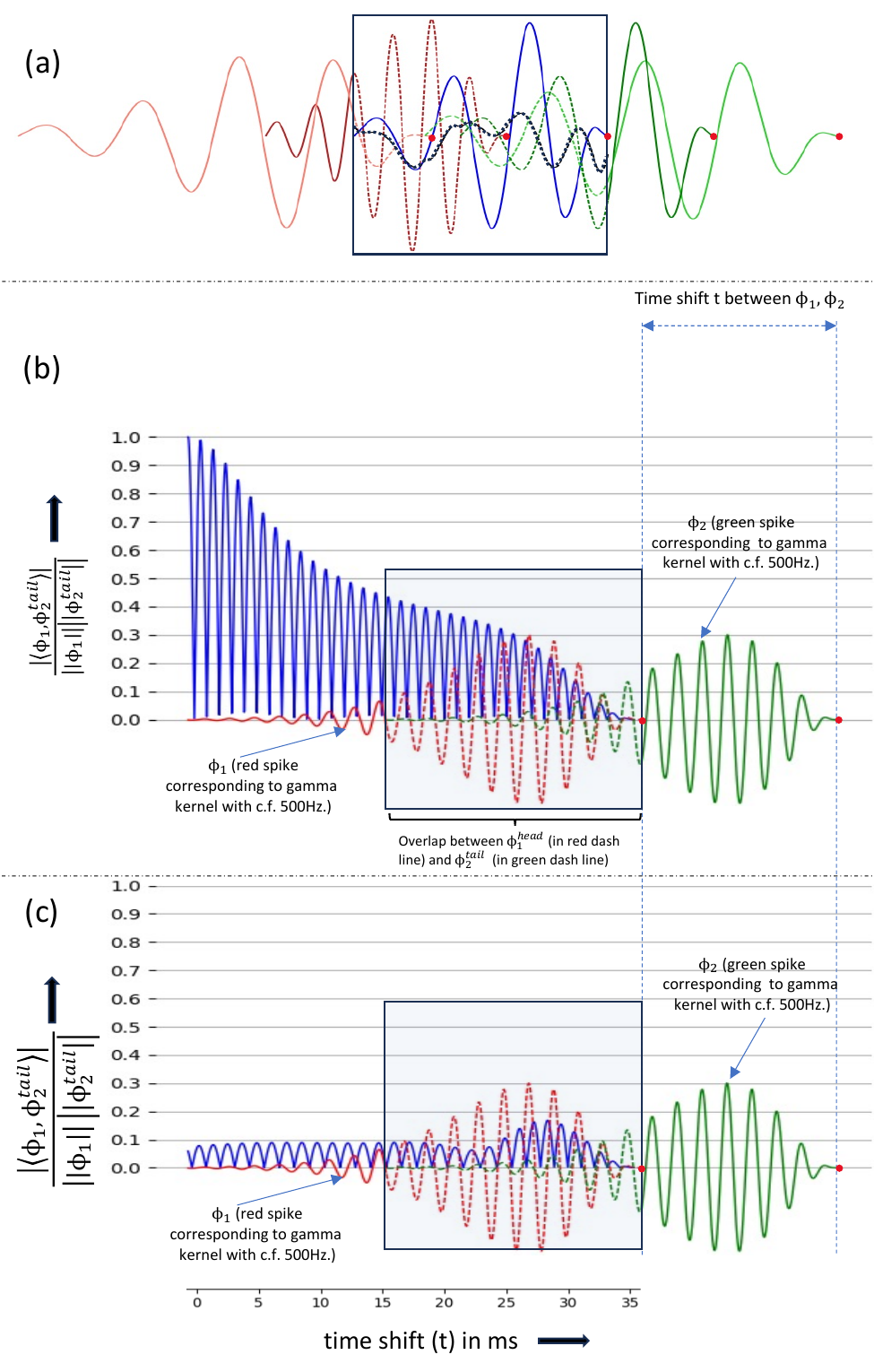}
 	\label{overlapSimilarity}
\end{SCfigure*}
\begin{theorem}[Windowing Theorem] 
\label{windowthm}
For an input signal $X$ with bounded $L_2$ norm, suppose our framework produces a set of $n+1$ successive spikes $S = \{\phi_1, ..., \phi_{n+1}\}$, sorted by their time of occurrence and satisfying Assumption \ref{assumption2}. The error in the iterative reconstruction of $X$ with respect to the last spike $\phi_{n+1}$ due to windowing, as formulated in Eq. \ref{windowingEq}, is bounded. Specifically, 
\vspace{-5pt}
\begin{align}
 & \forall \epsilon >0, \exists w_0 > 0 \text{ s.t. } ||\mathcal{P}_{\phi^{\perp}_{n+1,w}}(X)- \mathcal{P}_{\phi^{\perp}_{n+1}}(X)|| < \epsilon, \nonumber\\
& \forall w\geq w_0\text{ and } w \leq n \label{windowThmEqun}   
\\
& \hspace{-10pt}\text{ where  $w_0$ is independent of $n$ for arbitrarily large $n \in \mathbb N$.}\nonumber\\
\nonumber 
\end{align}
 \vspace{-40pt}
\\
\end{theorem}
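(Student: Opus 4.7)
The plan is to reduce the projection-difference inequality to a residual-vector convergence statement, and then to use the iterative Gram-Schmidt view of the projection together with causality and Lemma \ref{windowCondLemma} to get a uniform-in-$n$ bound. For the reduction, observe that the map $v \mapsto \mathcal{P}_v(X) = \langle X, v\rangle v / ||v||^2$ is Lipschitz in $v$ whenever $||v||$ is bounded below. Applying Lemma \ref{windowCondLemma} with $V = \{\phi_{n+1}\}$ (size $d=1$, giving $\beta_1 \leq \beta$) yields $||\mathcal{P}_{\mathcal{S}(S \setminus \{\phi_{n+1}\})}(\phi_{n+1})|| \leq \beta$, hence $||\phi^{\perp}_{n+1}|| \geq \sqrt{1-\beta^2}$; because $A_w := \mathcal{S}(\{\phi_{n-w+1},\ldots,\phi_n\}) \subseteq A_n := \mathcal{S}(\{\phi_1,\ldots,\phi_n\})$, we also get $||\phi^{\perp}_{n+1,w}|| \geq ||\phi^{\perp}_{n+1}|| \geq \sqrt{1-\beta^2}$. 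Consequently there is a constant $K$ (depending only on $\beta$ and $||X||$) with $||\mathcal{P}_{\phi^{\perp}_{n+1,w}}(X) - \mathcal{P}_{\phi^{\perp}_{n+1}}(X)|| \leq K\,||\phi^{\perp}_{n+1,w} - \phi^{\perp}_{n+1}||$, so it is enough to make the residual difference small uniformly in $n$.

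Next I would exploit a Pythagorean identity. The difference $\phi^{\perp}_{n+1,w} - \phi^{\perp}_{n+1} = \mathcal{P}_{A_n}(\phi_{n+1}) - \mathcal{P}_{A_w}(\phi_{n+1})$ lies in $A_n$, while $\phi^{\perp}_{n+1} \in A_n^{\perp}$, so orthogonality gives
\[
||\phi^{\perp}_{n+1,w} - \phi^{\perp}_{n+1}||^2 = ||\phi^{\perp}_{n+1,w}||^2 - ||\phi^{\perp}_{n+1}||^2 = ||\mathcal{P}_{A_n}(\phi_{n+1})||^2 - ||\mathcal{P}_{A_w}(\phi_{n+1})||^2.
\]
I would then telescope this along a sweep that adds one far-past spike at a time, expressing it as $\sum_{k=w}^{n-1} ||\delta_k||^2$ with $\delta_k = \mathcal{P}_{A_{k+1}}(\phi_{n+1}) - \mathcal{P}_{A_k}(\phi_{n+1})$. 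Once $k$ exceeds the constant $N_\tau$ from Corollary \ref{spikerateCorollary} (the maximum number of spikes that can overlap $\phi_{n+1}$), causality gives $\langle \phi_{n+1}, \phi_{n-k}\rangle = 0$, so the iterative formula analogous to Eq.~\ref{iterativeRecons} reduces the increment to $||\delta_k||^2 = |\langle \mathcal{P}_{A_k}(\phi_{n+1}), \phi_{n-k}\rangle|^2 / ||\phi_{n-k}^{\perp_{A_k}}||^2$, where the denominator is bounded below by $1-\beta^2$ using Lemma \ref{windowCondLemma}. If a uniform geometric bound $||\delta_k||^2 \leq C\rho^k$ with $\rho < 1$ can be established, summing the tail from $k = w_0$ gives the desired $<\epsilon^2/K^2$, yielding a window size $w_0$ independent of $n$.

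The hard part will be the last step: establishing the uniform geometric decay of $|\langle \mathcal{P}_{A_k}(\phi_{n+1}), \phi_{n-k}\rangle|$. Causality eliminates the direct inner products, but the projection $\mathcal{P}_{A_k}(\phi_{n+1})$ can still have nonzero components aligned with spikes far from $\phi_{n+1}$ through chains of overlapping intermediate spikes. The decay argument will have to combine the banded structure of the Gram matrix (entries vanish outside a temporal band of width $\tau$) with Lemma \ref{windowCondLemma} applied to fixed-size subsets straddling the window boundary, using the spectral gap $\beta < 1$ to attenuate each hop of the indirect chain. The goal is a geometric rate $\rho$ depending only on $\beta$ and $N_\tau$ --- never on $n$ --- which is precisely what distinguishes the windowing theorem from the elementary pointwise-in-$n$ monotone convergence of the Bessel partial sums.
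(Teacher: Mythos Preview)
Your reduction step and Pythagorean identity are both correct and match the paper's argument: the paper also first bounds $\|\mathcal{P}_{\phi^{\perp}_{n+1,w}}(X)- \mathcal{P}_{\phi^{\perp}_{n+1}}(X)\|$ by a constant times $\|\phi^{\perp}_{n+1,w} - \phi^{\perp}_{n+1}\|$ (they compute the exact constant $\|X\|/\sqrt{1-\beta^2}$ via a planar trigonometric calculation rather than a generic Lipschitz estimate), and then devotes the main effort to a lemma showing this residual difference decays uniformly in $n$.

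The substantive difference is how the geometric decay is extracted. You telescope spike-by-spike, writing $\sum_k \|\delta_k\|^2$, and defer the hard part---controlling $|\langle \mathcal{P}_{A_k}(\phi_{n+1}),\phi_{n-k}\rangle|$ through chains of overlapping intermediaries---to a sketch invoking banded structure and Lemma~\ref{windowCondLemma} on ``fixed-size subsets.'' The paper carries out exactly this idea but works at the block level from the outset: it partitions $\{\phi_1,\ldots,\phi_n\}$ into consecutive subsets $v_1,\ldots,v_m$ where $v_1$ contains all spikes overlapping $\phi_{n+1}$, $v_{i+1}$ contains all spikes overlapping some spike in $v_i$, and so on. Each $v_i$ has size $\leq d$ by Corollary~\ref{spikerateCorollary}, and non-adjacent blocks have disjoint support. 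Setting $e_k = \|\phi^{\perp}_{n+1,v_k} - \phi^{\perp}_{n+1}\|$, the paper derives the recursion $e_{k-1}^2 \geq e_k^2(1+\|\tilde\psi_k\|^2)$ and then uses Lemma~\ref{windowCondLemma} (your ``fixed-size subsets'' idea) to bound $\|\tilde\psi_k\|^2 \geq 1-\beta_d^2$, giving $e_k^2 \leq e_{k-1}^2/\gamma^2$ with $\gamma^2 = 2-\beta_d^2 > 1$. This block-level recursion is what makes the decay argument clean: because $\psi_k$ lives in the span of partitions $v_{k+1},\ldots,v_m$, it has disjoint support from $v_1,\ldots,v_{k-1}$ and from $\phi_{n+1}$, which lets the paper isolate the single ``hop'' through $\tilde V_k$ and bound it via Lemma~\ref{windowCondLemma}. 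Your spike-by-spike increments do not have this disjoint-support property individually, so to finish your route you would essentially have to regroup them into these same blocks---at which point you recover the paper's argument.
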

\textbf{Import and Proof Idea}:  The Theorem implies that the error from windowing can be made arbitrarily small by choosing a sufficiently large window size $w_0$, independent of $n$, when $n$ is arbitrarily large. At first glance, one might think that the condition in Eq. \ref{windowThmEqun} is trivially satisfied by choosing $w_0 = n$, i.e., a window inclusive of all spikes. However, the key aspect of the theorem is that $ w_0 $ should be independent of $ n $ when $ n $ is arbitrarily large. This allows us to use the same window size regardless of the number of previous spikes, even for large signals producing many spikes, thereby maintaining the condition number of the overall solution as per Theorem \ref{conditionNumberThm}. Our proof demonstrates this by showing that the reconstruction error converges geometrically as a function of the window size, depending only on the spike rate which in turn depends on the \textit{ahp} parameters in Eq. \ref{thresholdeq}, but not on $n$.
The proof hinges on a central lemma showing that the $L_2$ norm of the difference between $\phi_{n+1,w}^{\perp}$ and  $\phi_{n+1}^{\perp}$ decreases steadily as \( w \) increases, based on the assumptions stated in the theorem. This ensures that \( \phi_{n+1,w}^{\perp} \) becomes a good approximation of $\phi_{n+1}^{\perp}$. The proof of the theorem then follows by establishing a bound on $ \|\mathcal{P}_{\phi^{\perp}_{n+1,w}}(X) - \mathcal{P}_{\phi^{\perp}_{n+1}}(X)\| $ for a given choice of $w$ for any bounded input $X$.
The lemma is provided below:\\
\vspace{-10pt}
\begin{lemma}
\label{windowLemma}
Under the conditions of Theorem \ref{windowthm}, for any $\delta > 0$, there exists $w_0 \in \mathbb{N}^+$ such that $\|\phi^{\perp}_{n+1,w} - \phi^{\perp}_{n+1}\| < \delta\; \forall w \geq w_0, w \leq n$, where the choice of $w_0$ is independent of $n$ for arbitrarily large $n \in \mathbb{N}^+$.
\end{lemma}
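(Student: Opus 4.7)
The plan is to recast $\phi^{\perp}_{n+1,w} - \phi^{\perp}_{n+1}$ as a projection and then bound its squared norm by the tail of a geometric series whose common ratio depends only on $\beta$ and the kernel overlap parameter, not on $n$. Let $V = \mathcal{S}(\{\phi_{n-w+1}, \ldots, \phi_n\})$ be the window span and $W = \mathcal{S}(\{\phi_1, \ldots, \phi_n\})$ the full past span. Since $V \subseteq W$, we have $\phi^{\perp}_{n+1,w} - \phi^{\perp}_{n+1} = \mathcal{P}_W(\phi_{n+1}) - \mathcal{P}_V(\phi_{n+1}) = \mathcal{P}_{V^{\perp W}}(\phi_{n+1})$, where $V^{\perp W}$ denotes the orthogonal complement of $V$ inside $W$. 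Applying Pythagoras to $\mathcal{P}_W(\phi_{n+1}) = \mathcal{P}_V(\phi_{n+1}) + \mathcal{P}_{V^{\perp W}}(\phi_{n+1})$ yields the identity $\|\phi^{\perp}_{n+1,w} - \phi^{\perp}_{n+1}\|^2 = \|\phi^{\perp}_{n+1,w}\|^2 - \|\phi^{\perp}_{n+1}\|^2$, reducing the task to controlling this scalar gap.

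Next I would decompose the gap by a Gram--Schmidt-style telescoping: set $V_0 = V$ and enlarge one spike at a time backward in time via $V_k = V_{k-1} + \mathcal{S}(\{\phi_{n-w-k+1}\})$, $k = 1, \ldots, n-w$. Writing $\hat{\phi}_k := \phi_{n-w-k+1} - \mathcal{P}_{V_{k-1}}(\phi_{n-w-k+1})$, the pairwise orthogonality of the increments $V_k \ominus V_{k-1}$ gives
\[
\|\phi^{\perp}_{n+1,w}\|^2 - \|\phi^{\perp}_{n+1}\|^2 \;=\; \sum_{k=1}^{n-w} \frac{\langle \phi_{n+1},\, \hat{\phi}_k\rangle^2}{\|\hat{\phi}_k\|^2}.
\]
Each denominator is bounded below by $1-\beta^2$ by Assumption \ref{assumption2} applied to the single spike $\phi_{n-w-k+1}$ (the subspace $V_{k-1}$ lies inside the span of the remaining spikes). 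For the numerator, letting $K$ denote the overlap constant from Corollary \ref{spikerateCorollary}, once $w \geq K$ the direct inner product $\langle \phi_{n+1}, \phi_{n-w-k+1}\rangle$ vanishes by compact support for every $k \geq 1$, and the numerator collapses to the pure boundary-leakage term $-\langle \mathcal{P}_{V_{k-1}}(\phi_{n+1}),\, \phi_{n-w-k+1}\rangle$.

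The crux of the argument, and the main obstacle, is proving that this boundary leakage contracts geometrically as the distance $w+k-1$ from $\phi_{n+1}$ grows. The leakage can reach the old spike $\phi_{n-w-k+1}$ only by propagating through a chain of at-most-$K$ pairwise-overlapping spikes inside $V_{k-1}$, so I would set up an inductive recurrence showing that each block of hops multiplies the leakage amplitude by a factor $\rho < 1$ depending only on $\beta$ and $K$. The per-hop contraction would be extracted from Lemma \ref{windowCondLemma}, whose uniform angle bound prevents any finite block of spikes from being nearly absorbed by the rest, thereby supplying a uniform damping at every hop that is independent of $n$. The tempting shortcut of invoking the classical Demko--Moss--Smith exponential-decay theorem for inverses of banded positive-definite matrices is unavailable as a black box, because Theorem \ref{conditionNumberThm} permits the Gram-matrix condition number to grow exponentially in $w$; the decay must instead be read off directly from the spike-specific causality and fading-memory structure.

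Granting the claim $|\langle \phi_{n+1}, \hat{\phi}_k\rangle| \leq C \rho^{\,w+k-1}$ with $C$ and $\rho \in (0,1)$ both independent of $n$, the telescoping sum is dominated by
\[
\sum_{k=1}^{\infty} \frac{C^2 \rho^{\,2(w+k-1)}}{1-\beta^2} \;=\; \frac{C^2\, \rho^{\,2w}}{(1-\beta^2)(1-\rho^2)},
\]
an $n$-independent function of $w$ that tends to $0$ as $w \to \infty$. Choosing $w_0$ large enough to force this bound below $\delta^2$ then yields the lemma, with the same $w_0$ serving uniformly for every $n \geq w_0$, as required.
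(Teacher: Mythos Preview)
Your reduction to the projection $\mathcal{P}_{V^{\perp W}}(\phi_{n+1})$, the Pythagorean identity, and the denominator bound via Assumption~\ref{assumption2} are all correct and match the paper in spirit. But the step you yourself flag as ``the crux'' --- the geometric decay $|\langle\phi_{n+1},\hat\phi_k\rangle|\le C\rho^{w+k-1}$ --- is left as an intention rather than an argument, and this is precisely the substance of the lemma. Saying you ``would set up an inductive recurrence'' and then ``granting the claim'' skips the only nontrivial part; everything before and after is bookkeeping.

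The paper closes this gap, but not by bounding the individual numerators in your one-spike-at-a-time telescope. Instead it telescopes \emph{block by block}: the past spikes are partitioned into overlap groups $v_1,\ldots,v_m$ (each of size $\le d$, by Corollary~\ref{spikerateCorollary}), and one defines $e_k=\|\phi^{\perp}_{n+1,v_k}-\phi^{\perp}_{n+1}\|$, the residual error after absorbing the first $k$ blocks. Because blocks $v_{k+1},\ldots,v_m$ have support disjoint from $\phi_{n+1}$, one can write $e_k=\|\mathcal{P}_{U_{k+1}}(\phi_{n+1})\|$ and derive the exact relation
\[
e_{k-1}^2 \;=\; e_k^2\Bigl(1+\tfrac{\|\tilde\psi_k\|^2}{|\langle\psi_k,\tilde\theta_k/\|\tilde\theta_k\|\rangle|^2}\Bigr)\;\ge\;e_k^2\bigl(1+\|\tilde\psi_k\|^2\bigr),
\]
where $\tilde\psi_k$ is the orthogonal complement (relative to the first $k$ blocks) of a unit vector $\psi_k$ living in the span of blocks $k{+}1,\ldots,m$. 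Lemma~\ref{windowCondLemma} then gives $\|\tilde\psi_k\|^2\ge 1-\beta_d^2$, yielding the clean recurrence $e_k^2\le e_{k-1}^2/\gamma^2$ with $\gamma^2=2-\beta_d^2>1$. This bounds the \emph{tail} $e_k$ directly, with no need to control your numerators term by term. Your per-spike telescope could in principle be pushed through, but you would end up reconstructing this block structure anyway, since the contraction comes from crossing a support boundary, and that happens once per block, not once per spike.
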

\textbf{Proof Idea:} The proof leverages Corollary \ref{spikerateCorollary}, which states that the maximum number of spikes overlapping in time is bounded by a constant $d \in \mathbb{N}^+$, dependent on the \textit{ahp} parameters. Using this corollary, we partition the set of all spikes in time into a chain of subsets, where each subset overlaps only with its neighboring subsets and is disjoint from all others. Each subset contains at most $d$ spikes. The proof then demonstrates that the error in approximating $\phi^{\perp}_{n+1}$ due to windowing, i.e., $\|\phi^{\perp}_{n+1,w} - \phi^{\perp}_{n+1}\|$, decreases faster than a geometric sequence as more of these partitions are included within the window. This convergence is illustrated schematically in Figure \ref{windowedOverlap}.\\
\textbf{Proof of Lemma \ref{windowLemma}:}
    Let the set of spikes $S = \{\phi_1, ..., \phi_n\}$ be partitioned into a chain of subsets of spikes $v_1, ..., v_m$ in descending order of time, defined recursively. The first subset $v_1$ consists of all previous spikes overlapping with the support of $\phi_{n+1}$. Recursively, $v_{i+1}$ is the set of spikes overlapping with the support of any spike in $v_{i}$ for all $i \geq 1$. This process continues until the first spike $\phi_1$ is included in the final subset $v_m, \text{ where } m\leq n$. An example of this partitioning is illustrated in Figure \ref{windowedOverlap}.
    The individual spikes in each partition are indexed as follows:\\
    \vspace{-20pt}
    \begin{align*}
        v_i = \{ \phi_{p_i}, ..., \phi_{p_{i-1}-1}\}, \forall i \leq m \\ 
        \text{ where } 1 = p_m < ...<p_1 < p_0 = n+1. 
    \end{align*}  
    That is, the $i^{th}$ partition $v_i$ consists of spikes indexed  from $p_i$ to $p_{i-1} -1$. Since the spikes $\phi_1, ... , \phi_n$ are  sorted in order of their occurrence time, if both $\phi_{p_i} \text{ and } \phi_{p_{i-1}-1}$ are in partition $v_i$, then by construction,  $\phi_j \in v_i$ for all $p_i \leq j \leq p_{i-1}-1$. \\
    \vspace{-15pt}
    \begin{claim}
        \label{partitionClaim}
             The number of spikes in every partition, $v_i \; \forall i \in \{1, ..., m\}$, is bounded by some constant $d \in \mathbb{N}^+$.           
    \end{claim}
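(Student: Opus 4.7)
The plan is to show that each partition $v_i$ is confined to a time window of length at most $\tau$, and then to invoke the spike-rate bound from Corollary \ref{spikerateCorollary} to cap the number of spikes within any such window by a fixed constant $d$.

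First, I will translate the spike-rate bound into a window bound: by Corollary \ref{spikerateCorollary} the total spike rate is at most $\frac{2m}{\delta}$, so any time interval of length $\tau$ contains at most $\left\lceil \frac{2m\tau}{\delta} \right\rceil$ spikes. Setting $d := \left\lceil \frac{2m\tau}{\delta} \right\rceil$, the claim reduces to the purely geometric statement that each $v_i$ occupies a time window of length at most $\tau$.

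I will then prove this by induction on $i$. For the base case, $v_1$ consists of prior spikes whose supports intersect that of $\phi_{n+1}$; since every kernel has support length at most $\tau$, the corresponding spike times must lie in $[t_{n+1} - \tau,\, t_{n+1})$, a window of length $\tau$. For the inductive step, assume $v_i \subseteq [a, b]$ with $b - a \leq \tau$, where $a$ is the earliest spike time in $v_i$. By the indexing $v_i = \{\phi_{p_i}, \ldots, \phi_{p_{i-1}-1}\}$, the partitions are contiguous blocks of spike indices in descending time order, so every $\phi_k \in v_{i+1}$ satisfies $t_k < a$. At the same time, the definition of $v_{i+1}$ forces $\phi_k$ to overlap with some $\phi_j \in v_i$, whence $t_k \geq t_j - \tau \geq a - \tau$. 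Therefore $v_{i+1} \subseteq [a - \tau,\, a)$, again a window of length $\tau$, completing the induction.

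The argument is essentially a telescoping bookkeeping exercise, and the only delicate point is justifying that every spike in $v_{i+1}$ is strictly older than every spike in $v_i$, rather than merely older than some of them; this is where the contiguous-index structure of the partitions is essential, since without it a spike could sit in the middle of $v_i$'s time range and break the geometric containment. Once that observation is in hand, the window length propagates cleanly from $v_i$ to $v_{i+1}$, and the constant $d$ falls out of Corollary \ref{spikerateCorollary} without further estimation.
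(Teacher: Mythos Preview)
Your proposal is correct and follows essentially the same route as the paper. The paper's proof is terser---it observes that every spike in $v_{i+1}$ overlaps with the single spike in $v_i$ whose support extends furthest into the past and then invokes the overlap-count bound of Corollary~\ref{spikerateCorollary} directly---whereas you unpack this into an explicit induction showing each $v_i$ lies in a time window of length $\tau$ and apply the rate bound; but the underlying geometry (contiguous index blocks, overlap forcing $t_k > t_j - \tau$) and the final constant $d = \lceil 2m\tau/\delta \rceil$ are identical.
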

    \textbf{Proof:} This corollary follows from the observation that all spikes in a given partition $v_{i+1}$ overlap in time with at least one spike from the preceding partition $v_i$, specifically the spike whose support extends furthest into the past, for all $ i \in \{2,..,m\}$ (see figure \ref{windowedOverlap}). In the case of the partition $v_1$, each spike overlaps with $\phi_{n+1}$. Then the proof  follows from the corollary \ref{spikerateCorollary}. $\hfill\Box$\\  
    Now, let  $V_1, ... , V_m$  be subspaces spanned by the subsets of spikes $v_1, ..., v_m $ respectively.
    Before proceeding with the rest of the proof, we introduce the following notations. 
    \begin{align*}
        \Tilde{\phi_i} = \phi_i - \mathcal{P}_{\mathcal{S} (\bigcup\limits_{j = i+1}^{n}\{ \phi_{j}\})}(\phi_i) \;\;\;
        \forall i \in \{1,..., n\}
    \end{align*}
    \vspace{-10pt}\\
    where $ \Tilde{\phi_i}$ denotes the orthogonal complement of the spike $\phi_i$ with respect to all the future spikes up to $\phi_n$. We also denote,  
    \[
        \Tilde{V_k} = \mathcal{S}(\bigcup\limits_{j=p_k}^{p_{k-1} -1}\{\Tilde{\phi}_j\}) \;\;\;
        \forall k \in \{1,...,m\}
    \]   
    where $\Tilde{V_k}$ is the subspace spanned by spikes in the partition $v_k$, with each spike orthogonalized with respect to all future spikes.
    \begin{claim}
        \label{orthovectorsclaim} For the subspaces defined on the partitions as above, the following holds:
            \[\Tilde{V_k} = \{ x- \mathcal{P}_{\sum_{j =1}^{k-1} V_j} (x) | x \in V_k\} =  
            \mathcal{S}\{\Tilde{\phi}_{p_{k}},..., \Tilde{\phi}_{p_{k-1} -1} \}.\]        
    \end{claim}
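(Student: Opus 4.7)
The claim asserts a three-way equality, but the equality between the first expression $\Tilde{V_k}$ and the third expression $\mathcal{S}\{\Tilde{\phi}_{p_k}, \dots, \Tilde{\phi}_{p_{k-1}-1}\}$ is simply the definition of $\Tilde{V_k}$ introduced just before the claim. The substantive content is therefore the middle equality, namely that $\Tilde{V_k}$ coincides with $W_k := \{x - \mathcal{P}_{V_1 + \cdots + V_{k-1}}(x) \mid x \in V_k\}$, the image of $V_k$ under the orthogonal projector onto the complement of $V_1 + \cdots + V_{k-1}$. My plan is to establish both inclusions.

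For $\Tilde{V_k} \subseteq W_k$, I would start with a generator $\Tilde{\phi_i}$ with $i \in v_k$ and produce an explicit $x \in V_k$ witnessing its membership in $W_k$. The projection appearing in $\Tilde{\phi_i} = \phi_i - \mathcal{P}_{V_k^{>i} + V_{k-1} + \cdots + V_1}(\phi_i)$ lives in $V_k^{>i} + (V_1 + \cdots + V_{k-1})$, where $V_k^{>i}$ denotes the span of spikes in $v_k$ whose index is strictly greater than $i$. Since all the $\phi_j$'s are linearly independent (a consequence of Assumption \ref{assumption1} as invoked in Lemma \ref{reconsLemma}), this sum is direct, and the projection splits uniquely as $u + v$ with $u \in V_k^{>i}$ and $v \in V_1 + \cdots + V_{k-1}$. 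Setting $x := \phi_i - u \in V_k$ gives $\Tilde{\phi_i} = x - v$; and because $\Tilde{\phi_i}$ is orthogonal to the full space $V_k^{>i} + V_{k-1} + \cdots + V_1$ by construction (and in particular to $V_1 + \cdots + V_{k-1}$), while $v$ itself lies in $V_1 + \cdots + V_{k-1}$, the uniqueness of orthogonal projection forces $v = \mathcal{P}_{V_1 + \cdots + V_{k-1}}(x)$. Thus $\Tilde{\phi_i} = x - \mathcal{P}_{V_1 + \cdots + V_{k-1}}(x) \in W_k$.

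For the reverse inclusion $W_k \subseteq \Tilde{V_k}$, I plan to match dimensions. Linear independence of the full spike set gives $V_k \cap (V_1 + \cdots + V_{k-1}) = \{0\}$, so the operator $Q := I - \mathcal{P}_{V_1 + \cdots + V_{k-1}}$ is injective on $V_k$, hence $\dim W_k = \dim V_k = |v_k|$. On the other hand, the generators $\Tilde{\phi}_{p_k}, \dots, \Tilde{\phi}_{p_{k-1}-1}$ of $\Tilde{V_k}$ are themselves linearly independent: any vanishing combination $\sum_{i \in v_k} c_i \Tilde{\phi_i} = 0$ can be rearranged to show $c_m \phi_m \in \mathcal{S}\{\phi_{m+1}, \dots, \phi_n\}$ for the smallest index $m$ with $c_m \neq 0$, contradicting linear independence of the $\phi_j$'s. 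Hence $\dim \Tilde{V_k} = |v_k| = \dim W_k$, and combined with the first inclusion this forces $\Tilde{V_k} = W_k$.

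The main technical point to handle carefully is the decomposition step in the forward inclusion: one must recognize that splitting the projection according to the direct sum $V_k^{>i} \oplus (V_1 + \cdots + V_{k-1})$ is exactly what separates an ``in-partition'' future component $u$ (which can be absorbed into the new $x$) from the ``prior-partition'' component $v$ (which then matches $\mathcal{P}_{V_1 + \cdots + V_{k-1}}(x)$ because $\Tilde{\phi_i}$ is already orthogonal to everything). Notably, the argument does not invoke the temporal disjointness between spikes in non-adjacent partitions; that geometric feature of the partitioning scheme is presumably deployed later in the proof of Lemma \ref{windowLemma} to obtain the geometric decay of the windowing error across successive partitions, rather than in this auxiliary structural claim.
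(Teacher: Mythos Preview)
Your proof is correct, and the strategy is sound. It differs from the paper's argument in two respects worth noting.

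For the inclusion $\Tilde{V_k}\subseteq W_k$ (the paper's $B\subseteq A$), the paper first establishes the explicit Gram--Schmidt-type identity
\[
\Tilde{\phi_i} \;=\; \bigl(\phi_i - \mathcal{P}_{\sum_{j=1}^{k-1}V_j}(\phi_i)\bigr)\;-\;\sum_{j=i+1}^{p_{k-1}-1}\mathcal{P}_{\Tilde{\phi_j}}(\phi_i),
\]
and then proves $\Tilde{\phi_i}\in W_k$ by downward induction on $i$, repeatedly invoking that $W_k$ is closed under linear combinations. Your direct-sum splitting of the projection into an in-partition piece $u\in V_k^{>i}$ and a prior-partition piece $v\in V_1+\cdots+V_{k-1}$, followed by identifying $v$ with $\mathcal{P}_{V_1+\cdots+V_{k-1}}(\phi_i-u)$ via uniqueness of orthogonal projection, accomplishes the same thing in one stroke and avoids the induction entirely.

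For the reverse inclusion $W_k\subseteq\Tilde{V_k}$ (the paper's $A\subseteq B$), the paper argues constructively: given $y=x-\mathcal{P}_{\sum_{j=1}^{k-1}V_j}(x)$ with $x=\sum_{i}\alpha_i\phi_i\in V_k$, it uses the identity above and linearity to rewrite $y$ explicitly as a linear combination of the $\Tilde{\phi_i}$'s. You instead close the argument by dimension count, using linear independence of the full spike set (Assumption~\ref{assumption1}) to see both that $Q=I-\mathcal{P}_{\sum_{j=1}^{k-1}V_j}$ is injective on $V_k$ and that the $\Tilde{\phi_i}$'s are themselves independent. This is cleaner and more conceptual; the paper's version has the minor advantage of exhibiting the change-of-basis matrix explicitly (upper-triangular, from the formula), which could be useful if one later wanted quantitative control on the coefficients, though the paper does not appear to exploit that.

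Your closing remark is also accurate: the temporal disjointness between non-adjacent partitions is not needed here and is deployed only later in the proof of Lemma~\ref{windowLemma}.
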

    \textbf{Proof:} The proof of Claim \ref{orthovectorsclaim} follows from the properties of orthogonal projection in a Hilbert space (see \citep{supplementary_key}). $\hfill\Box$\\
        Following the Claim, we define the subspace $U_{k}$ as below:
        \begin{align}
            &U_{k} = \sum_{i=k}^{m} \Tilde{V}_i = \mathcal{S}(\bigcup\limits_{j=1}^{p_{k-1} -1} \{\Tilde{\phi}_{j}\})
            \label{lhs}\\
            & = \{ x- \mathcal{P}_{\sum_{j =1}^{k-1} V_j} (x) | x \in \sum_{i=k}^{m} V_i \} \label{rhs}
        \end{align}
        where the equality between \ref{lhs} and \ref{rhs} follows from Claim \ref{orthovectorsclaim}. Lastly, define $\phi^\perp_{n+1, v_k}$ as:
        $$\phiperp{v_k} = \phi_{n+1} - \mathcal{P}_{\sum_{i =1}^{k} V_i}(\phi_{n+1})$$
        i.e. $\phiperp{v_k}$ is the orthogonal complement of $\phi_{n+1}$ with respect to window of spikes up to partition $v_k$. Now we proceed to quantify the norm of the difference of $\phicomp$ and $\phiperp{v_k}$.  For that denote $e_k$ as follows:       
         \begin{align}
            & e_k = ||\phiperp{v_k} - \phicomp||  \nonumber\\
            & = ||\mathcal{P}_{\sum_{i =1}^{k} V_i} (\phi_{n+1}) -\mathcal{P}_{\sum_{i =1}^{m} V_i} (\phi_{n+1})|| \nonumber\\
            & = ||\mathcal{P}_{\sum_{i =1}^{k} V_i} (\phi_{n+1}) -\mathcal{P}_{\sum_{i =1}^{k} V_i +U_{k+1}} (\phi_{n+1})|| (\text{by def. of $U_{k}$})\nonumber\\
            & = ||\mathcal{P}_{\sum_{i =1}^{k} V_i} (\phi_{n+1}) -(\mathcal{P}_{\sum_{i =1}^{k} V_i } (\phi_{n+1}) + \mathcal{P}_{U_{k+1}} (\phi_{n+1}))|| \nonumber\\
            & \hspace{2cm}(\text{Since by construction $U_{k+1} \perp \Sigma_{i=1}^{k}V_i$}) \nonumber\\
            & \Rightarrow e_k= ||\mathcal{P}_{U_{k+1}} (\phi_{n+1})) ||  \label{e_k_exp}\\
            & \text{Note that by definition $e_m =0$ and by Assumption \ref{assumption2} we get,} \nonumber\\
            & e_k = ||\mathcal{P}_{U_{k+1}} (\phi_{n+1})) || \leq |\beta| < 1, \;\forall k \in \{1,...,m\} \label{boundary} \\
            &\text{Now Assume that:} \nonumber\\
            & \mathcal{P}_{U_{k+1}} (\phi_{n+1})) = \alpha_k \Tilde{\psi}_k, \text{where } \alpha_k \in \mathcal{R}, \Tilde{\psi}_k \in U_{k+1} \nonumber\\
            &\text{It follows from definition of $U_{k+1}$ that $\Tilde{\psi}_k$ is of the form:} \nonumber\\
            \nonumber
        \end{align}
        \begin{align}   
        & \Tilde{\psi}_k =  \psi_k - \mathcal{P}_{\sum_{j =1}^{k} V_i } (\psi_k) \text{ for some } \psi_k \in \sum_{i =k+1}^{m} V_i \nonumber\\ 
            &\text{\emph{w.l.o.g.}  assume $||\psi_k|| =1$ by appropriately choosing $\alpha_k$.}\nonumber\\
            & \text{Since $\alpha_k \Tilde{\psi}_k$ is projection of $\phi_{n+1}, \alpha_k = \frac{\langle \phi_{n+1}, \Tilde{\psi}_k\rangle}{||\Tilde{\psi}_k||^2}$} \label{coeff}\\
            & \text{For $k>0$ combining \ref{coeff} and \ref{e_k_exp} we obtain,}\nonumber\\
            & e_k = ||\alpha_k \Tilde{\psi}_k|| = |\alpha_k|||\Tilde{\psi}_k|| = \frac{|\langle \phi_{n+1}, \Tilde{\psi}_k\rangle|}{||\Tilde{\psi}_k||}\nonumber\\
            & = \frac{|\langle \phi_{n+1}, \psi_k - \mathcal{P}_{\sum_{j =1}^{k} V_j } (\psi_k)\rangle|}{||\Tilde{\psi}_k||}\nonumber\\
            & = \frac{|\langle \phi_{n+1}, \mathcal{P}_{\sum_{j =1}^{k} V_j} (\psi_k)\rangle|}{||\Tilde{\psi}_k||} 
            =\frac{|\langle \phi_{n+1}, \mathcal{P}_{\sum_{j =1}^{k-1} V_j + \Tilde{V}_k} (\psi_k)\rangle|}{||\Tilde{\psi}_k||}
            \nonumber\\
            & \text{(for $k>0, \psi_k \in\sum_{i =k+1}^{m} V_i \perp \phi_{n+1}$ due to disjoint support)} \nonumber\\
            & =\frac{|\langle \phi_{n+1}, \mathcal{P}_{\sum_{j =1}^{k-1} V_j} (\psi_k)+ \mathcal{P}_{\Tilde{V}_k} (\psi_k)\rangle|}{||\Tilde{\psi}_k||} \;\text{(by def. $\Tilde{V}_k \perp \sum_{j =1}^{k-1} V_j$)}\nonumber\\
            & \Rightarrow e_k = \frac{|\langle \phi_{n+1}, \mathcal{P}_{\Tilde{V}_k} (\psi_k)\rangle|}{||\Tilde{\psi}_k||} \;
            \text{($k>0,\psi_k \in \sum_{i=k+1}^{m} V_i \perp \sum_{j=1}^{k-1} V_j$)}
            \label{finalE_k}\\ 
            & \text{Likewise, } e_{k-1} = ||\mathcal{P}_{U_{k}} (\phi_{n+1})) ||  = ||\mathcal{P}_{U_{k+1}+ \Tilde{V}_k} (\phi_{n+1})) ||\nonumber\\
            & \Rightarrow e_{k-1}^2 = ||\mathcal{P}_{U_{k+1}} (\phi_{n+1})) ||^2 + ||\mathcal{P}_{\Tilde{V}_k} (\phi_{n+1})) ||^2
            \nonumber\\
            & \hspace{2cm} 
            \text{(Since by construction $U_{k+1} \perp \Tilde{V}_k$)} \nonumber\\
            & \Rightarrow e_{k-1}^2 = e_k^2 +  ||\mathcal{P}_{\Tilde{V}_k} (\phi_{n+1})) ||^2 \label{ek11}\\
            & \text{Assume that } \mathcal{P}_{\Tilde{V}_k} (\phi_{n+1}) = \beta_k \Tilde{\theta}_k, \text{ where }  \Tilde{\theta}_k \in \Tilde{V}_k, \beta_k \in \mathcal{R} \nonumber\\
            & \Rightarrow ||\mathcal{P}_{\Tilde{V}_k} (\phi_{n+1})) || = ||\beta_k \Tilde{\theta}_k|| = \frac{|\langle \phi_{n+1}, \Tilde{\theta}_k \rangle|}{||\Tilde{\theta}_k||} \label{ptheta}\\
            &\text{From \ref{ek11} \& \ref{ptheta} we get, } e_{k-1}^2 = e_k^2 + \frac{|\langle \phi_{n+1}, \Tilde{\theta}_k \rangle|^2}{||\Tilde{\theta}_k||^2} \label{e_k-1}\\
            &\text{Again, for $k >0$ we further analyze $e_k$ from \ref{finalE_k} to obtain:}\nonumber\\
            & e_k =  \frac{|\langle \phi_{n+1}, \mathcal{P}_{\Tilde{V}_k} (\psi_k)\rangle|}{||\Tilde{\psi}_k||} 
            \nonumber\\
            &= \frac{|\langle \phi_{n+1}, \mathcal{P}_{\mathcal{S}(\{\Tilde{\theta}_k\})} (\psi_k)
            + \mathcal{P}_{\Tilde{V_k}\setminus{\mathcal{S}(\{\Tilde{\theta}_k\})}} (\psi_k)\rangle|}{||\Tilde{\psi}_k||}
            \label{lastline}\\ \nonumber
            \vspace{-20pt}
             &\text{The expression \ref{lastline} is essentially written by breaking $\psi_k$ }\nonumber\\
             &\text{into two mutually orthogonal subspaces: $\mathcal{S}(\{\Tilde{\theta}_k\})$, subspace }\nonumber\\
             &\text{of $\Tilde{V_k}$ spanned by $\Tilde{\theta}_k$, and $\Tilde{V_k}\setminus\mathcal{S}(\{\Tilde{\theta}_k\}$, the subspace of $\Tilde{V_k}$}\nonumber\\
            &\text{orthogonal to the subspace $\mathcal{S}(\{\Tilde{\theta}_k\})$. Also, observe that} \nonumber\\ 
             & \phi_{n+1} \perp \Tilde{V_k}\setminus\mathcal{S}(\{\Tilde{\theta}_k\}) \text{ since } \mathcal{P}_{\Tilde{V}_k} (\phi_{n+1})) = \beta_k \Tilde{\theta}_k \text{. Therefore,}\nonumber\\   
             & e_k = \frac{|\langle \phi_{n+1}, \mathcal{P}_{\mathcal{S}(\{\Tilde{\theta}_k\})} (\psi_k)|}{||\Tilde{\psi}_k||}\;\; 
             = \frac{|\langle \phi_{n+1}, \frac{\langle \psi_k,\Tilde{\theta}_k\rangle \Tilde{\theta}_k}{||\Tilde{\theta}_k||^2}\rangle|}{||\Tilde{\psi}_k||}
             \nonumber\\
                & \Rightarrow e_k= \frac{|\langle \phi_{n+1}, \Tilde{\theta}_k\rangle|}{||\Tilde{\theta}_k||} \frac{|\langle \psi_k,\frac{\Tilde{\theta}_k}{||\Tilde{\theta}_k||}\rangle|}{||\Tilde{\psi}_k||}\label{e_k}\\
             &\text{Combining \ref{e_k-1} and \ref{e_k} we obtain:}\nonumber\\ 
             & e_{k-1}^2 = e_k^2 + e_k^2 \frac{||\Tilde{\psi}_k||^2}{|\langle \psi_k, \frac{\Tilde{\theta}_k}{||\Tilde{\theta}_k||}\rangle|^2}\label{sequenceRel}\\
            \nonumber
        \end{align}
        \begin{align}                       
            &\text{Since both $\psi_k$ and $\frac{\Tilde{\theta}_k}{||\Tilde{\theta}_k||}$ are unit norm $|\langle \psi_k, \frac{\Tilde{\theta}_k}{||\Tilde{\theta}_k||}\rangle| \leq1$.}\nonumber\\
            & \Rightarrow e_{k-1}^2 \geq e_k^2(1+ ||\Tilde{\psi}_k||^2)\label{geoDrop}\\\nonumber
            \nonumber
        \end{align}
        \vspace{-30pt}\\
        Eq. \ref{geoDrop} demonstrates that the sequence $\{e_k\}$ converges geometrically
        to 0 when $(1+ ||\Tilde{\psi}_k||^2) > 1$, i.e. $||\Tilde{\psi}_k|| >0 $. To complete the proof of Lemma \ref{windowLemma}, we need to establish a positive lower bound on $||\Tilde{\psi}_k||$, ensuring the convergence of the sequence ${e_k}$. The following Corollary provides the necessary lower bound on $||\Tilde{\psi}_k||$.
        \vspace{-5pt}
        \begin{claim}
            \label{psiknormboundClaim}
            Following Claim \ref{partitionClaim}, if the number of spikes in each partition $v_i$ is bounded by $d$, then $|| \Tilde{\psi}_k||^2 \geq 1-  \beta_d^2$, where $\beta_d >0 $ is as defined in Lemma \ref{windowCondLemma}.
        \end{claim}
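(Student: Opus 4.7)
\textbf{Proof Plan for Claim \ref{psiknormboundClaim}:}
The plan is to prove $||\mathcal{P}_{\sum_{j=1}^{k}V_j}(\psi_k)|| \leq \beta_d$; the claim then follows immediately from the Pythagorean identity $||\tilde{\psi}_k||^2 = 1 - ||\mathcal{P}_{\sum_{j=1}^{k}V_j}(\psi_k)||^2$ together with $||\psi_k|| = 1$. The central tool will be Lemma \ref{windowCondLemma} applied to the single middle partition $V = v_k$, which by Claim \ref{partitionClaim} has size at most $d$.

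The first step is to exploit the structure of the partition chain. By construction, any two subspaces $V_i$ and $V_j$ with $|i-j| \geq 2$ are spanned by spikes with disjoint temporal support and are therefore orthogonal; in particular $\sum_{j=1}^{k-1}V_j \perp \sum_{i=k+1}^{m}V_i$, so $\psi_k$ is already orthogonal to $\sum_{j=1}^{k-1}V_j$. Since $\sum_{j=1}^{k}V_j = \sum_{j=1}^{k-1}V_j \oplus \tilde{V}_k$ is an orthogonal decomposition, this reduces the target to showing $|\langle \psi_k, u\rangle| \leq \beta_d$ for every unit $u \in \tilde{V}_k$.

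To bound such an inner product, I would fix any unit $u \in \tilde{V}_k$ and, using Claim \ref{orthovectorsclaim}, write $u = x - \mathcal{P}_{\sum_{j=1}^{k-1}V_j}(x)$ for some $x \in V_k$. The orthogonality relations above yield $\langle \psi_k, u\rangle = \langle \psi_k, x\rangle = \langle \psi_k, \mathcal{P}_{\sum_{i=k+1}^{m}V_i}(x)\rangle$, and hence $|\langle \psi_k, u\rangle| \leq ||\mathcal{P}_{\sum_{i=k+1}^{m}V_i}(x)||$ by Cauchy--Schwarz. Applying Lemma \ref{windowCondLemma} with $V = v_k$ to the unit vector $x/||x|| \in \mathcal{S}(v_k)$ gives $||\mathcal{P}_{\mathcal{S}(S\setminus v_k)}(x)|| \leq \beta_d ||x||$. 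Since $\sum_{j=1}^{k-1}V_j$ and $\sum_{i=k+1}^{m}V_i$ are mutually orthogonal subspaces of $\mathcal{S}(S\setminus v_k)$, a Pythagorean split yields $||\mathcal{P}_{\sum_{j=1}^{k-1}V_j}(x)||^2 + ||\mathcal{P}_{\sum_{i=k+1}^{m}V_i}(x)||^2 \leq \beta_d^2 ||x||^2$. Substituting $||x||^2 = 1 + ||\mathcal{P}_{\sum_{j=1}^{k-1}V_j}(x)||^2$ (which follows from $||u||=1$ and the Pythagorean decomposition of $x$) and simplifying produces $||\mathcal{P}_{\sum_{i=k+1}^{m}V_i}(x)||^2 \leq \beta_d^2 - (1-\beta_d^2)||\mathcal{P}_{\sum_{j=1}^{k-1}V_j}(x)||^2 \leq \beta_d^2$, closing the bound.

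The main obstacle, conceptually, is recognizing the correct subset on which to invoke Lemma \ref{windowCondLemma}: one must apply it to the single bounded-size middle partition $v_k$ rather than to the unboundedly large past or future halves $\bigcup_{j<k}v_j$ or $\bigcup_{i>k}v_i$, whose sizes grow with $n$ and for which no uniform $\beta_{\cdot}$ is available. A secondary delicate point is that the nonnegative slack term $(1-\beta_d^2)||\mathcal{P}_{\sum_{j=1}^{k-1}V_j}(x)||^2$ in the final inequality is essential: without the Pythagorean split of $\mathcal{P}_{\mathcal{S}(S\setminus v_k)}$, the naive chain $||\mathcal{P}_{\sum_{i=k+1}^{m}V_i}(x)|| \leq \beta_d ||x||$ together with $||x|| \leq 1/\sqrt{1-\beta_d^2}$ would only give the weaker (and potentially vacuous) bound $\beta_d/\sqrt{1-\beta_d^2}$.
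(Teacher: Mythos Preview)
Your proposal is correct and follows essentially the same approach as the paper: reduce to bounding $\|\mathcal{P}_{\tilde V_k}(\psi_k)\|$ via the orthogonality $\psi_k\perp\sum_{j=1}^{k-1}V_j$, pull the unit vector in $\tilde V_k$ back to some $x\in V_k$ via Claim~\ref{orthovectorsclaim}, apply Lemma~\ref{windowCondLemma} with $V=v_k$, and finish with the Pythagorean identity $\|x\|^2=1+\|\mathcal{P}_{\sum_{j=1}^{k-1}V_j}(x)\|^2$ to absorb the slack. The only cosmetic difference is that the paper projects $\eta_k/\|\eta_k\|$ onto the smaller subspace $\mathcal{S}(\{\psi_k\})+\sum_{i=1}^{k-1}V_i$ directly, whereas you project onto the full orthogonal sum $\sum_{j<k}V_j\oplus\sum_{i>k}V_i$ and then invoke Cauchy--Schwarz; the resulting algebra is identical.
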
   
        \vspace{-5pt}
        \textbf{Proof:} Since $\psi_k$ is a unit vector in $\sum_{i =k+1}^{m} V_i $ and has disjoint support with respect to the future partitions $v_{k-1},..., v_1$, except for the partition the $v_k$, the projection of $\psi_k$ on $V_k$ and the application of Claim \ref{partitionClaim} and Lemma \ref{windowCondLemma} lead to the result. See \citep{supplementary_key} for details. $\hfill\Box$\\    
Finally, combining \ref{geoDrop} with Claim \ref{psiknormboundClaim}, we obtain:
\vspace{-5pt}
        \begin{equation}            
         e_{k-1}^2 \geq e_k^2(1+ (1-  \beta_d^2)) \Rightarrow e_k^2 \leq \frac{e_{k-1}^2}{\gamma^2}\label{finalGeoSeq}
        \end{equation}
        \vspace{-13pt}
        \\
    where $\gamma^2 = (1+ (1-  \beta_d^2))$ is a constant strictly greater than 1. Since $e_m =0$, Eq. \ref{finalGeoSeq} shows that the sequence $\{e_k\}_{k=1}^m$ converges to 0 faster than geometrically. 
    Thus, $\forall \delta >0, \exists k_0 \in \mathbb{N}^+ \text{such that } e_k < \delta, \text{ for all } k\geq k_0 \text{ and } m\geq k$. Given the geometric drop in Eq. \ref{finalGeoSeq} and the bound $e_1 <1$ (Eq. \ref{boundary}), for arbitrarily large $m$ (hence $n$) the choice of $k_0$ is independent of $m$ and depends only on $\beta_d$, which is determined by the \textit{ahp} parameters in Eq. \ref{thresholdeq}). Since the number of spikes in each partition is bounded by d (Claim \ref{partitionClaim}), choosing a window size $w_0 = k_0*d$ ensures $\forall \delta >0, \exists w_0 \in \mathbb{N}^+  \text{ such that }  ||\phi^{\perp}_{n+1,w} -\phi^{\perp}_{n+1}|| < \delta \; \text{ for all } w \geq w_0 \text{ and } w \leq n$. For arbitrarily large $n$, the choice of $w_0$ is independent of $n$.$\hfill\Box$
    \vspace{5pt}\\
    \textbf{Proof of Theorem \ref{windowthm}}: Having established a bound on the
    norm of the difference between \(\phi^{\perp}_{n+1}\) and \(\phi^{\perp}_{n+1,w}\), we need 
    to bound the norm of the difference between the projections
    of the input signal \(X\) with respect to these vectors.
    Specifically, we seek to bound \(||\mathcal{P}_{\phi^{\perp}_{n+1,w}}(X) - \mathcal{P}_{\phi^{\perp}_{n+1}}(X)||\)
    based on the window size. We use the following notations:
    $
    \Tilde{X} = \mathcal{P}_{\mathcal{S}(\{\phi^{\perp}_{n+1,w}, \phi^{\perp}_{n+1}\})}(X),
    X_u = \mathcal{P}_{\phi^{\perp}_{n+1}}(X), X_v = $
    $\mathcal{P}_{\phi^{\perp}_{n+1,w}}(X)$, so that $\Tilde{X}, X_u$ and $X_v$ lie in a plane (fig. \ref{windowedProjection}).    
    Assume the angle between $\Tilde{X}$ and $\phi^{\perp}_{n+1}$ is $a$, and the angle between $\Tilde{X}$ and $\phi^{\perp}_{n+1,w}$ is $b$. Hence, the angle between $\phi^{\perp}_{n+1,w}$ and $\phi^{\perp}_{n+1}$ is $a-b$ (fig. \ref{windowedProjection}). Note that  $X$ may not lie in the same plane as $X_u$ and $X_v$; thus, we consider the projection $\Tilde{X}$ of $X$ on the plane. We denote $p_w =\phi^{\perp}_{n+1,w} - \phi^{\perp}_{n+1}$. Further analysis shows  $p_w = \mathcal{P}_{\mathcal{S}(\{\Tilde{\phi}_1,..., \Tilde{\phi}_w\}}(\phi^{\perp}_{n+1,w})$, and thus $p_w \perp \phi^{\perp}_{n+1}$ (see \citep{supplementary_key} for details). Therefore, from figure \ref{windowedProjection} we observe:
    \vspace{-5pt}
    \begin{equation}
    \sin^2(a-b) = \frac{||p_w||^2}{||\phiperp{w}||^2} \label{sinValue}            
    \end{equation}
    \vspace{-10pt}
    \\
    Now we can quantify the norm of the difference in two the projections of $X$ w.r.t. $\phicomp$ and $\phiperp{w}$ as follows:
    \vspace{-5pt}
    \begin{align}
      &||\mathcal{P}_{\phi^{\perp}_{n+1,w}}(X)- \mathcal{P}_{\phi^{\perp}_{n+1}}(X)|| = ||X_u - X_v||^2 \nonumber\\
      &=||X_u||^2 + ||X_v||^2 -2 ||X_u||||X_v||cos(a-b) \nonumber\\
      &=||\Tilde{X}||^2[ \cos^2a + \cos^2b - 2\cos a\cos b\cos(a-b) ] \;\; (\text{see fig. \ref{windowedProjection}})\nonumber\\
      &=||\Tilde{X}||^2 \sin ^2 (a-b) = ||\Tilde{X}||^2\frac{||p_w||^2}{||\phiperp{w}||^2}  \nonumber\\
      & \;\; (\text{Using trigonometric identities \& eq. \ref{sinValue}, see \citep{supplementary_key} for details}) \nonumber\\
      & \leq \frac{||X||^2}{1- \beta ^2} ||\phiperp{w}-\phicomp||^2 \;\;\ (\text {Since $||\Tilde{X}|| \leq||X|| $ and} \nonumber\\
      & \quad \text {$||\phiperp{w}||^2 \geq ||\phicomp||^2 \geq (1-\beta^2)$ by assumption}) \nonumber\\\nonumber
    \end{align}
    \vspace{-30pt}
    \\
    Finally, since the input signal has a bounded $L_2$ norm (i.e. $||X||$ is bounded),  setting $\delta = \epsilon\frac{\sqrt{1-\beta^2}}{||X||}$ in Lemma \ref{windowLemma} gives a window size $w_0$ such that $||\mathcal{P}_{\phi^{\perp}_{n+1,w}}(X)- \mathcal{P}_{\phi^{\perp}_{n+1}}(X)|| \leq \frac{||X||}{\sqrt{1- \beta ^2}} ||\phiperp{w}-\phicomp|| < \epsilon, \forall \epsilon >0, \forall w \geq w_0\text{ and } w\leq n $. The choice of $w_0$ is independent of $n$ for arbitrarily large $n \in \mathbb{N}$. This completes the proof of the theorem. $\hfill\Box$
\begin{figure}[h]
 	\centering
  	{{\includegraphics[width=90mm, height=50mm]{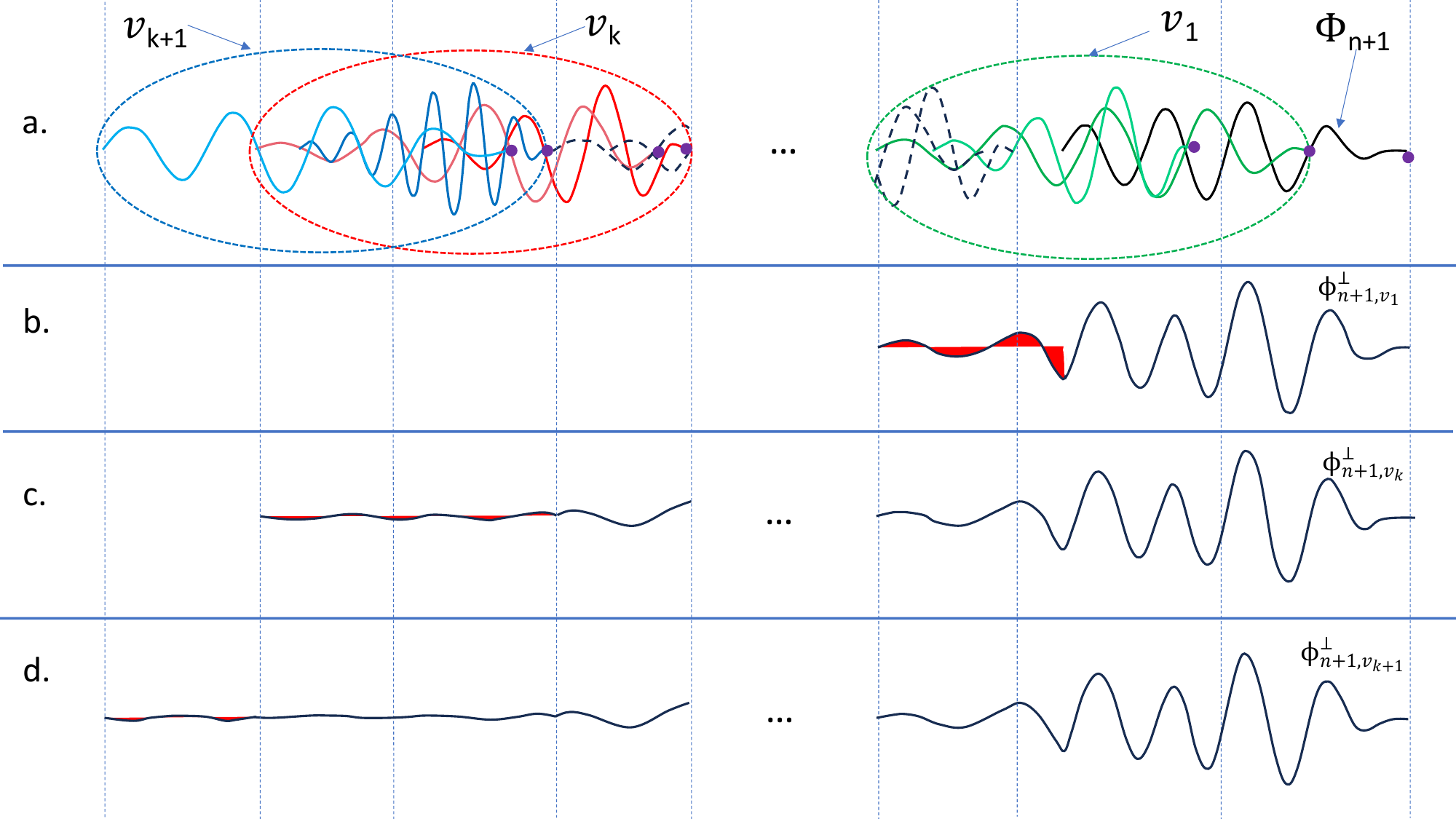}}}
 	\caption{Illustration of Lemma \ref{windowLemma}. The diagram shows the convergence of the windowed orthogonal complement $\phiperp{w}$ of spike $\phi_{n+1}$ to $\phicomp$ by orthogonalizing $\phi_{n+1}$ across the partitions of spikes. (a) Displays all spikes up to $\phi_{n+1}$ (black), with partitions circled: $v_1$ (green), $v_k$ (red), and $v_{k+1}$ (blue). Spikes contained in each partition are shaded accordingly, with the time of each spike marked by a purple dot. (b), (c), and (d) show orthogonal complements $\phiperp{v_1}$, $\phiperp{v_k}$, and $\phiperp{v_{k+1}}$ respectively. The support of $\phiperp{w}$ extends as more partitions are included, with the extending tail for each additional partition highlighted in red. This tail's diminishing energy as more partitions are added illustrates Lemma \ref{windowLemma}.}
 	\label{windowedOverlap}
\end{figure}
\begin{figure}[t]
 	\centering
   \vspace{-30pt}
  	{{\includegraphics[width=90mm, height=50mm]{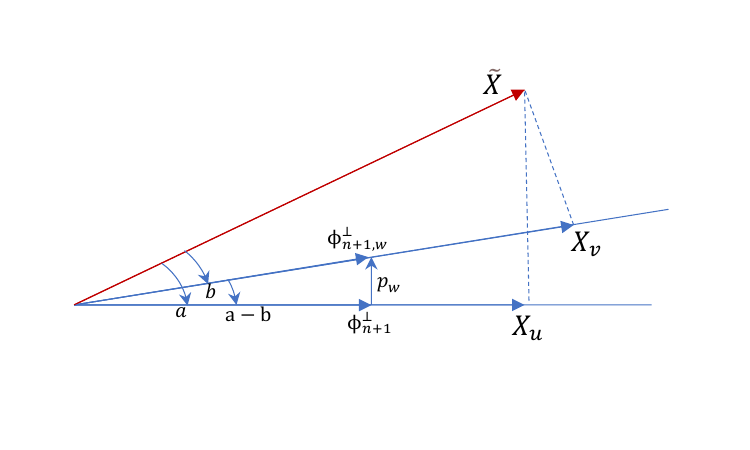}}}
 \vspace{-50pt}
 	\caption{Figure illustrating the vector projections of the input signal \(X\) onto vectors \(\phi^{\perp}_{n+1}\) and \(\phi^{\perp}_{n+1,w}\). The red vector represents \(\Tilde{X}\), the projection of \(X\) within the plane formed by \(\phi^{\perp}_{n+1}\) and \(\phi^{\perp}_{n+1,w}\). The vectors \(\phi^{\perp}_{n+1}\) and \(\phi^{\perp}_{n+1,w}\), as well as the projections \(X_u\) and \(X_v\) of \(\Tilde{X}\) onto them, are indicated in blue. The vector \(p_w\), representing the difference between \(\phi^{\perp}_{n+1,w}\) and \(\phi^{\perp}_{n+1}\), is also shown in blue. The angles \(a\) between \(\Tilde{X}\) and \(X_u\), \(b\) between \(\Tilde{X}\) and \(X_v\), and \(a-b\) between \(\phi^{\perp}_{n+1,w}\) and \(\phi^{\perp}_{n+1}\) are marked.}
 	\label{windowedProjection}
\end{figure}
\begin{figure}[h]
 	\centering
    \vspace{-15pt}
  	{{\includegraphics[width=90mm, height=25mm]{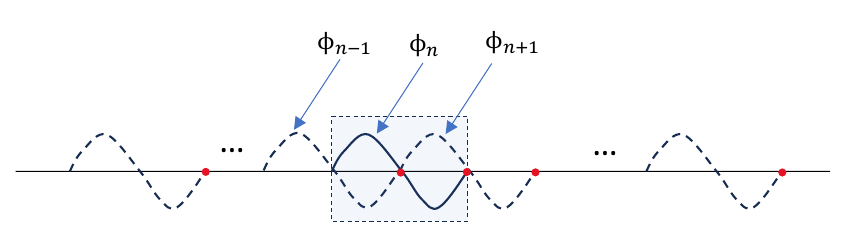}}}
   \vspace{-10pt}
 	 \caption{The scenario illustrating the need for Assumption \ref{assumption2}. See text for details. For derivation of $||\mathcal{P}_{\mathcal{S}(\bigcup_1^{N}\{\phi_i\} \setminus \{\phi_n\})}(\phi_n)|| \rightarrow 1$ see \citep{supplementary_key}.}
 	\label{sineKernelsOverlap}
  \vspace{-20pt}
\end{figure}

\section{Experiments on Real Signals} \label{Experiments}
The proposed framework was tested on audio signals.
\textbf{Dataset:}
 We chose the Freesound Dataset Kaggle 2018, an audio dataset of natural sounds referred in \citep{dataset}, containing 18,873 audio files. All audio samples in this dataset are provided as uncompressed PCM 16bit, 44.1kHz, mono audio files. 
 \textbf{Set of Kernels:} 
We chose gammatone filters ($at^{n-1}e^{-2\pi bt}\cos(2\pi ft+\phi)$) in our experiments since they are widely used as a reasonable model of cochlear filters in auditory systems \citep{pattersonFilters}.
\textbf{Results:} 
\label{comprehensiveExpt}
The proposed framework was tested extensively against the mentioned dataset. Comprehensive results with 600 randomly selected audio snippets using 50 kernels are shown in Figure \ref{reconstructionStatistics}. The complete source code used for the experiment is available in GitHub \citep{github_repo}. 
In the experiment, kernels were normalized, and parameters for the time-varying threshold function (\ref{lockstepspikes}) were selected through a systematic grid search on a smaller dataset of 20 randomly chosen snippets. In each trial, an audio snippet of length $\approx 2.5s$ was processed with fixed parameter values, except for the refractory period, which was  gradually decreased leading to improvement in reconstructions at higher spike rates. The refractory period varied from approximately $250ms$ to $5ms$. After converting an audio snippet into a sequence of spikes, reconstruction was performed iteratively using a fixed-sized window of spikes as described in Section \ref{windowing}. The \textit{ahp} period was systematically varied, but each trial on an audio snippet was run with a single value of the \textit{ahp} period. Correspondingly, the window size was set to be inversely proportional to the \textit{ahp} period, varying from 5k to 15k as per Theorem \ref{windowthm}. Exact parameter values and window sizes are detailed in the GitHub documentation \cite{github_repo}.
The results in Figure \ref{reconstructionStatistics} show that increasing the spike rate by tuning the refractory period allows near-perfect reconstruction, aligning with our theoretical analysis. Some variability in reconstruction accuracy across signals can be attributed to dataset idiosyncrasies, e.g. certain audio samples could be noisy or ill-represented in the kernels. However, the overall trend shows promise, with an average of $\approx 20 dB$ at $1/5$th Nyquist Rate. This in conjunction with the fact that signals are represented in this scheme only via set of spike times and kernel indexes (thresholds can be inferred) shows potential for an extremely efficient coding mechanism.
Since the generation of spikes requires scanning through convolutions in a single pass, encoding is highly efficient. However, decoding is slightly more time-consuming because it involves solving the linear system $P\alpha = T$ to derive the coefficients. But then reconstruction is performed iteratively on a finite window, as described in Section \ref{windowing}. Considering $O(w^3)$ as the time complexity for inverting a $w\times w$ matrix, the overall time complexity of decoding is $O(Nw^3)$, where $N$ is the length of the signal and $w$ is the chosen window size. Thus the overall process still remains linear, making it a suitable choice for lengthy continuous-time signals.\\
\textbf{Comparison With Convolutional Orthogonal Matching Pursuit:} Our proposed framework is comparable to Convolutional Sparse Coding (CSC) techniques  \citep{CristinaCSCReview}. In CSC, the objective is to efficiently represent signals using a small number of basis functions convolved with sparse coefficients. Mathematically, for a given signal $Y$, the CSC model can be expressed as: $Y \approx \Sigma_{k=1}^{K} d_k \star x_k$ where $d_k$ are the convolutional filters, $x_k$ are the sparse feature maps, $K$ is the number of filters and $\star$ denotes convolution. The similarity with our proposed framework becomes clear when we consider our reconstruction formulation: $X^* = \sum_{i=1}^{N} \alpha_i \Phi^{j_i}(t_i-t)$. Here, the kernels $\Phi^k$ serve as the convolution filters $d_k$, and the sparse feature map $x_k$ in our framework can be likened to a vector of coefficients $\alpha_i$, where each $\alpha_i$ corresponds to the spikes of the kernel $\Phi^k$, and is time-shifted to the appropriate occurrences of spikes. In other words, our framework finds a sparse representation of a signal through its own biological spiking mechanism. Finding sparse code for signals, in general is an NP-Hard problem \citep{Davis1997}, and several heuristic-based approaches are used to address this challenge. In \citep{beyondRateCode}, our proposed framework was compared against one such heuristic-based implementation of CSC, specifically Convolutional Orthogonal Matching Pursuit (COMP), using a small dataset of about 20 audio snippets. COMP employs a greedy technique to iteratively find dictionary atoms and is relatively slow due to the orthogonalization it performs to the atoms in each step. Most of the current leading CSC algorithms \citep{AdvCSC1, AdvCSC2, AdvCSC3} use $L_1$ regularization as a relaxation of $L_0$ regularization in their sparse reconstruction objectives, making them amenable to efficient optimization methods such as the Alternating Direction Method of Multipliers (ADMM). In this paper, we extensively compare our technique  with the efficient CBPDN algorithm implemented within the state-of-art SPORCO python library \citep{sporco}.
In this experiment, we used 200 randomly chosen audio snippets from the same dataset, utilizing 10 gammatone kernels. The snippet lengths varied from 0.5s to 3.5s to compare processing times as a function of snippet length. Figure \ref{comparison} shows the reconstruction accuracy comparison between our framework and CBPDN. CBPDN, which is based on $L_1$ optimization, achieved reconstructions at different sparsity levels by varying the regularization parameter $\lambda$, while  our framework achieved different spike rates by varying the \textit{ahp} period (see \citep{github_repo} for exact parameter values). The results in Figure \ref{comparison} reveal that our framework outperforms CBPDN, particularly in the low spike rate regime, achieving better SNR values on average at consistently lower spike rates. In the high spike rate regime, CBPDN occasionally outperforms our framework. This occurs because our framework is not well suited for high spike rates, where spikes begin to overlap significantly, leading to poorer bounds as discussed in our theoretical analysis. Additionally, the limited datapoints where CBPDN outperforms (around 30 dB) make these specific results unreliable. 
Figure \ref{runtimeComp} compares the runtimes of our framework and CBPDN on a 10-core Intel(R) Xeon(R) CPU E5-2650 v3 server. The average processing time, shown as a function of snippet length, demonstrates that our framework also outperforms CBPDN in terms of runtimes, particularly in the aforementioned low spike rate regime. Although the processing time is platform-dependent, our framework offers better asymptotic complexity. Leading CSC algorithms  \citep{AdvCSC1, AdvCSC2, AdvCSC3} transform long temporal signals  to the Fourier domain, resulting in $O(N \log N)$ complexity due to  FFT, whereas our method achieves $O(Nw^3)$. 
While the overall processing times (considering both encoding and decoding) of both frameworks are comparable, our framework encodes signals into spike times very efficiently using a convolve-and-threshold mechanism in minimal time. This is particularly useful in online settings, such as when a signal needs to be streamed. In such cases, our framework allows signals to be efficiently conveyed via only the spike times and indices (with thresholds or coefficients inferred). In contrast, CBPDN and other CSC algorithms, being inherently based on optimization, cannot achieve such efficient encoding and require both the coefficients and the timing of the atoms for signal representation. Overall, our framework shows promise as a superior alternative to CSC techniques in specific scenarios, particularly for low spike rate representation of natural audio signals with biological kernels, as demonstrated in our experiments.
\section{Conclusion}
\label{Conclusion}
The experimental results establish the efficiency and robustness of the proposed spike-based encoding framework, which clearly outperforms state-of-the-art CSC techniques in the low spike rate regime. Notably, this high-fidelity coding and reconstruction is achieved through a simplified abstraction of a complex biological sensory processing system, which typically involves numerous neurons across multiple layers with diverse goals—such as feature extraction, decision making, classification and more—rather than being limited to reconstruction. For context, the human auditory system's cochlear nerve contains about 50,000 spiral ganglion cells (analogous to 50,000 kernels). The fact that our framework, with a single layer of roughly 100 neurons using a simple convolve-and-threshold model, achieves such high-quality reconstruction underscores the potential of fundamental biological signal processing principles. Our framework differs fundamentally from the Nyquist-Shannon theory, primarily in its mode of representation and coding. Instead of sampling the value of a function at uniform or non-uniform pre-specified points, our coding scheme identifies the non-uniform points where the function takes specific convolved values. This efficient coding scheme, combined with our proposed window-based fast processing of continuous-time signals, shows great promise for achieving significant compression in real-time signal communication.

\begin{figure}
\centering
  	{{\includegraphics[width=90mm, height=180mm]{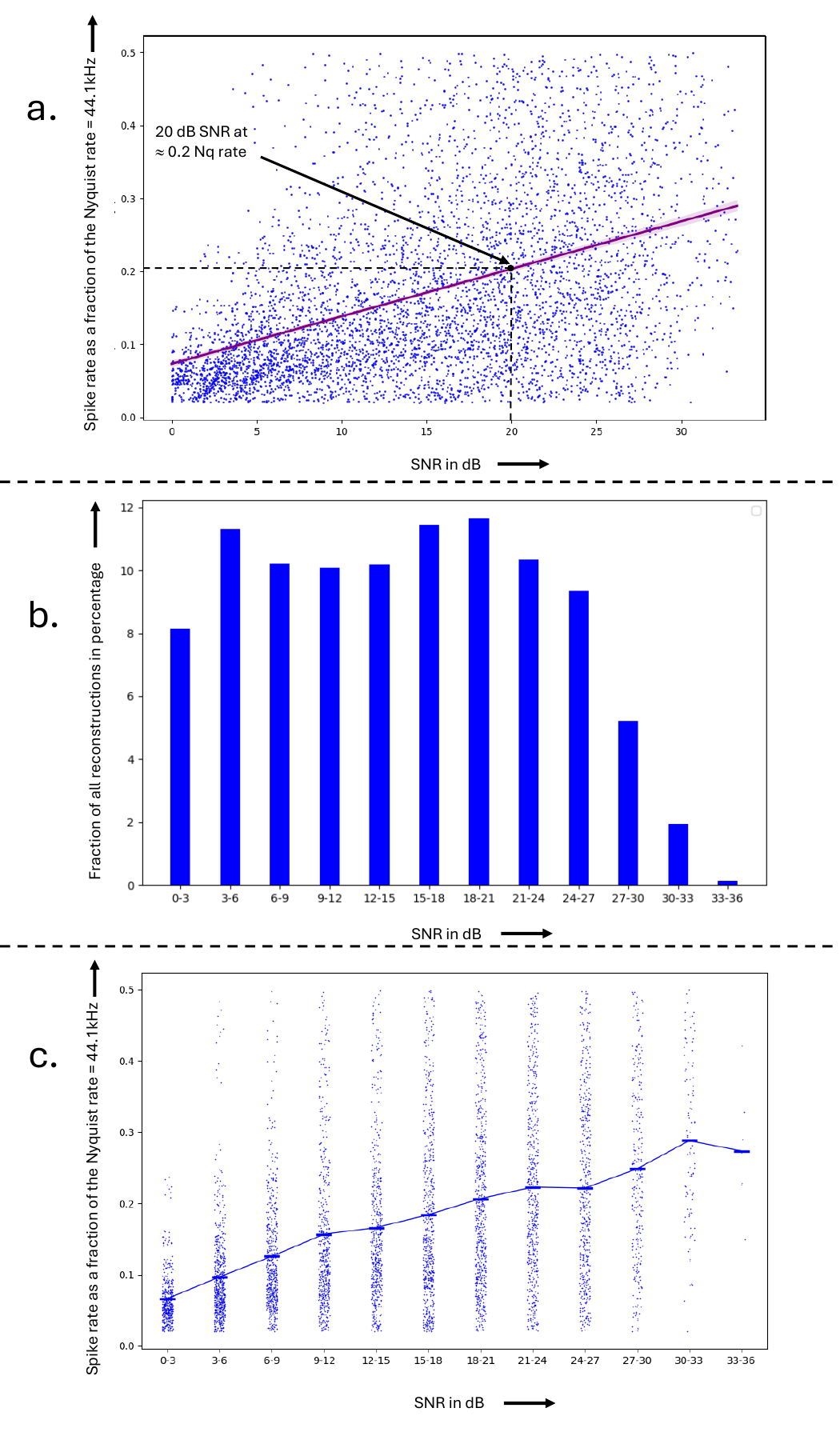}}}%
 	 \caption{Comprehensive results of experiments on 600 audio snippets with 50 kernels. \textbf{(a)} Scatter plot of reconstructions where each dot represents a single reconstruction performed on one of the 600 sound snippet for a particular setting of the \textit{ahp} parameters as described in Section ~\ref{comprehensiveExpt}. The plot shows the SNR value of the reconstructions (x-axis) against corresponding spike-rate of the ensemble (y-axis). The trend line in purple is generated using seaborn regression fit, and the black dot on the line highlights the point on the trend line at 20 dB SNR which has an average spike-rate of approximately one-fifth the Nyquist rate.
   \textbf{(b)} Bar graph showing the distribution of all reconstructions from (a) at different SNRs. Reconstructions are binned at intervals of 3dB in the range 0 to 30 dB SNR (x-axis) and the bars show the percentage of all reconstructions that fall within the corresponding bin (y-axis). As is evident, the maximum fraction of reconstructions falls in the bin of 18-21 dB. 
   \textbf{(c)} Strip plot showing the distribution of spike rates across all the bins from (b). Like (b), the x-axis shows the SNR values binned at intervals of 3dB and the y-axis shows the spike rates of reconstructions as a fraction of the Nyquist rate, 44.1kHz. The blue line, passing through the strip plot, connects the average spike rates of reconstructions in each bin (averages of each bin are shown by blue markers). 
   }%
 	\label{reconstructionStatistics}
\end{figure}

\begin{figure}
    \vspace{-10pt}
 	\centering
 	\includegraphics[width=90mm, height=120mm]{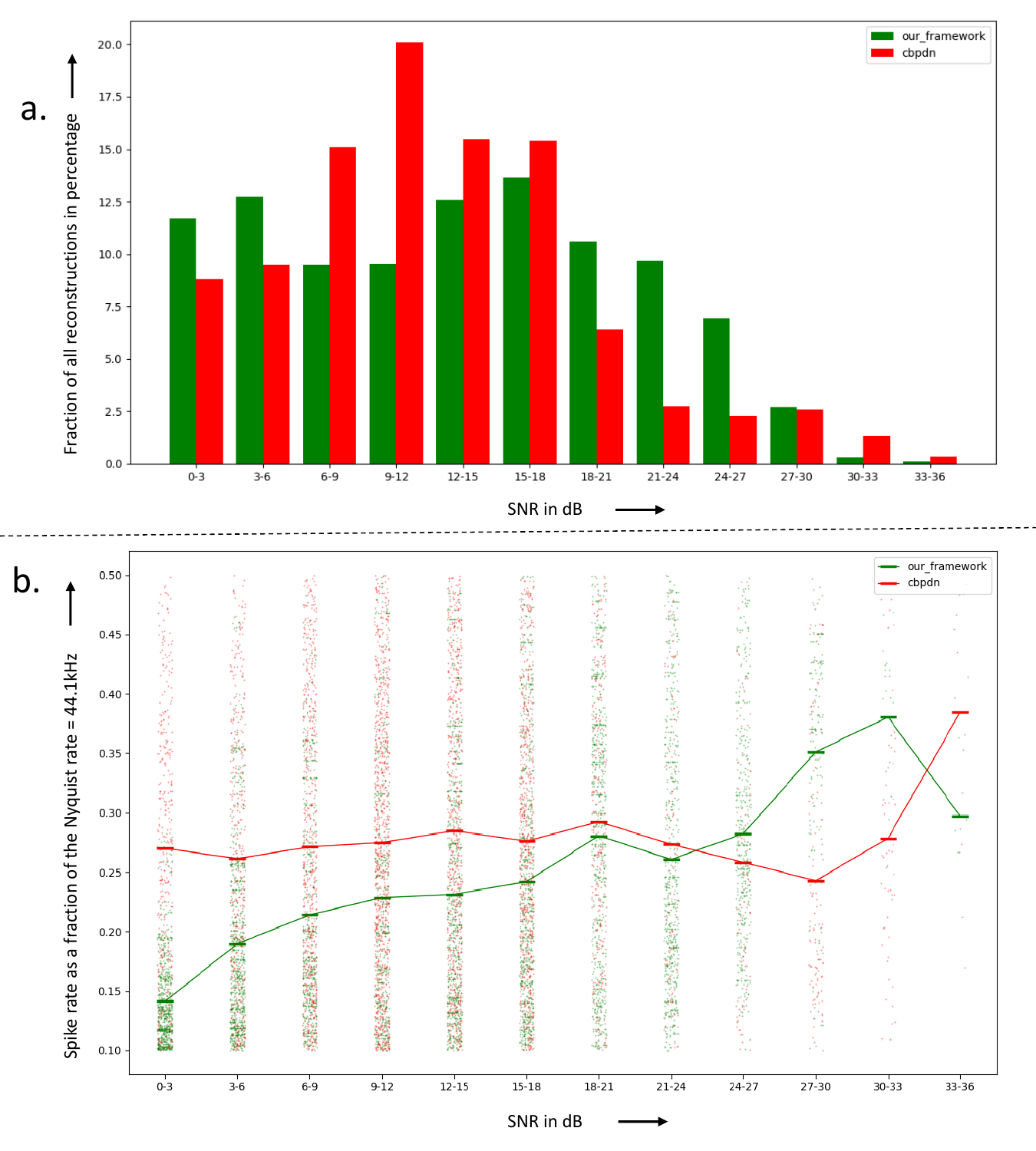}
 	\caption{Experimental results of comparison between our framework and cbpdn in terms of overall reconstruction accuracy and spike rate. \textbf{(a)} Bar plot showing a comparison of reconstruction accuracy between our framework and cbpdn. The x-axis displays SNR values binned at intervals of 3dB over the range of 0-36 dB, with each bar representing the percentage fraction of all reconstructions falling into the corresponding bin (red bars for cbpdn and green bars for our framework). As is evident, the maximum fraction of reconstructions for cbpdn falls in the 9-12dB bin as opposed to the 15-18dB bin for our framework. Also our framework produced significantly larger number of reconstructions in the high SNR region (right side of the graph where the green bars are noticeably taller than the red bars).
  \textbf{(b)} Strip plot comparing spike rates of reconstruction between our framework and cbpdn. Like (a), the x-axis bins SNR values, and the strips of dots (red dots for cbpdn and green dots for our framework) in each bin show the distribution of reconstructions for that bin. The y-axis denotes the spike-rate of the ensemble as a fraction of the Nyquist rate for each reconstruction. The two lines through the strip plot connect the corresponding average spike rates of each bin for our framework (green) and cbpdn (red). As is evident, in the low spike rate regime the green line lies well below the red line, indicating a much lower spike rate achieved by our framework. Although briefly in the higher SNR regime, cbpdn outperforms our framework, the very few reconstruction data points available for cbpdn, as evidenced in (a), makes it challenging to draw reliable conclusions.
  }%
 	\label{comparison}
\end{figure}
\begin{figure}
    \vspace{-10pt}
 	\centering
 	\includegraphics[width=90mm, height=60mm]{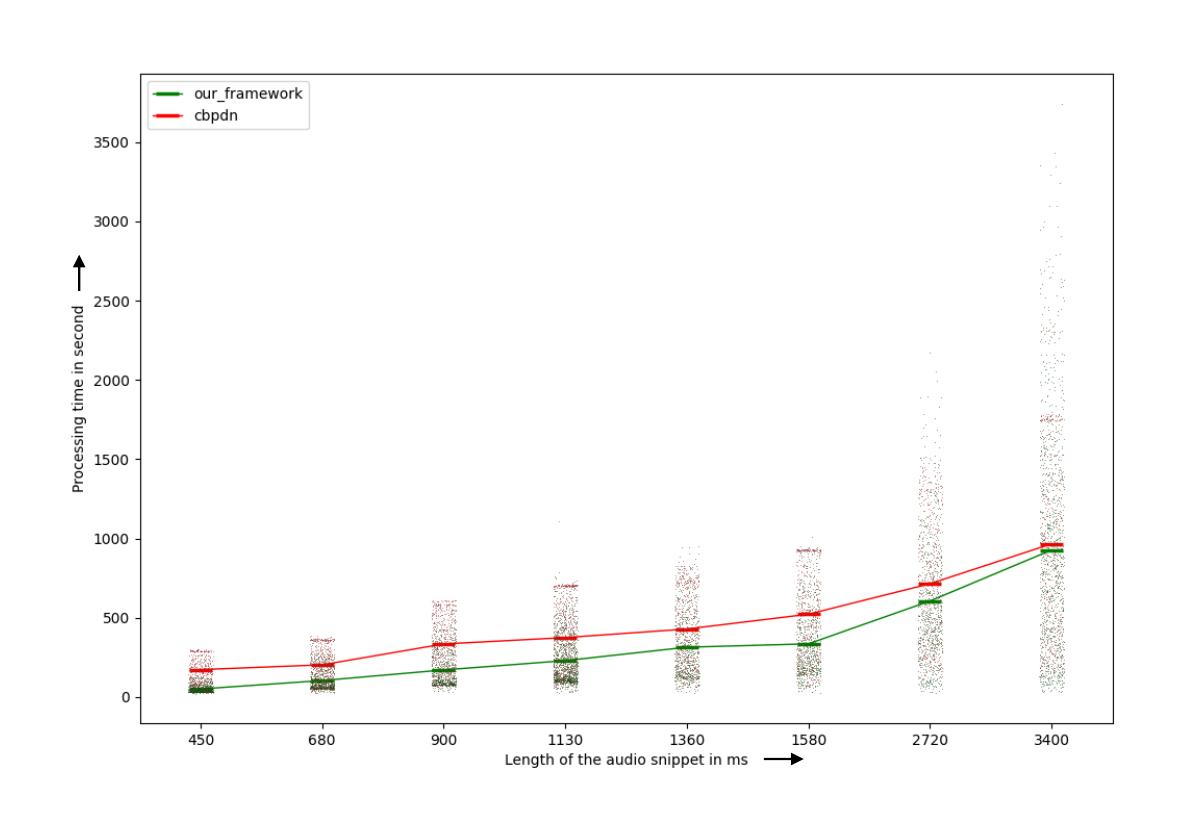}
 	\caption{Comparison of runtimes between our framework and cbpdn. The x-axis shows the different lengths of the audio snippets in msec and the strips of dots (red dots for cbpdn and green dots for our framework) represent the distribution of reconstruction runtimes corresponding to each length. The two lines connect the average processing times at each length for both our framework (green) and cbpdn (red). In terms of overall runtime our framework outperforms cbpdn. Although there are a few points produced by our framework in the high spike regime with higher processing times due to large matrix inversions for higher spike rates, this is greatly dependent on the hardware used.}%
 	\label{runtimeComp}
\end{figure}

{\appendices
\section{Proof of Corollary \ref{convolutionCorollary}}
\label{convolutionCorollarySupp}
\textbf{Corollary \ref{convolutionCorollary}:}    Let $\Phi^j $ be a function in $ C[0, \tau], \tau \in \mathbb R^{+}$ and $||\Phi^j||_2=1$. Let $X(t) \in \mathcal{F} = \{f(t)| t \in [0,\tau'], |f(t)|\leq b\},$ where $b, \tau' \in \mathbb R^+$, be the input to our model. Then: 
    (a) The convolution $C^j(t)$ between $X(t)$ and $\Phi^j(t)$ defined by $C^j(t) = \int X(t') \Phi^{j}(t-t') dt',$ for $ t\in [0, \tau+\tau']$, is a bounded and continuous function. Specifically, one can show that $|C^j(t)| < b\sqrt{\tau}$  for all $t \in [0, \tau+\tau']$ .
    (b) Suppose the parameter $M$ in the threshold equation \ref{thresholdeq} is chosen such that $M > 2b\sqrt{\tau}$. Then the interspike interval between any two spikes produced by the given neuron $\Phi^j$ is greater than $\frac{\delta}{2}$.\\\\
\textbf{Proof:}
(a)   For showing continuity, we observe that $\Phi^j \in C[0, \tau]$ is uniformly continuous. Therefore, for $t,h \in \mathbb R$ we have:
   \begin{align}
       & |C^j(t)- C^j(t+h)|  = \nonumber\\  
         &\hspace{1cm}|\int \Phi^j(t-t')X(t')dt' -\Phi^j(t+h-t')X(t')dt'| \nonumber\\
       \nonumber
    \end{align}
    \begin{align}
       & \leq (\sup_{t \in \mathbb R} |\Phi^j(t) - \Phi^j(t+h)|) \int |X(t')|dt' \nonumber\\
       &\leq ||\Phi^j(t) - \Phi^j(t+h)||_\infty||X||_1 \to 0 \text{ as } h \to 0 \nonumber\\
       \nonumber
   \end{align}
   where the convergence follows from the fact that $\Phi^j(t)$ is uniformly continuous and $||X||_1$ is a bounded quantity as the input  is a bounded function on compact support. A bound on $|C^j(t)|$ can be shown as follows-
    \begin{align}
    &|C^{j}(t)| = |\int X(\tau) (\Phi^{j}(t-\tau))d\tau| \leq \int |X(\tau)| |(\Phi^{j}(t-\tau))|d\tau\nonumber\\
    &< b \int|(\Phi^{j}(t-\tau)| d\tau \;\;(\text{since} |X(t)| < b)\nonumber\\
    &= b \int_0^\tau 1.|(\Phi^{j}(\tau)| d\tau = b||\Phi^j||_2\sqrt{\int_0^\tau 1 d\tau} \leq b\sqrt{\tau} \label{convIneq}\\
    &(\text{Using Cauchy-Schwarz inequality and } ||\Phi^j||_2=1)  \nonumber         
    \end{align}        
    
    (b) Assume that $t_0, t_1,..., t_k, t_{k+1}, ...t_L$ be a sequence of times at which kernel $\Phi^j$ produced a spike. We want to establish the above corollary by induction on this sequence. Let us denote the interspike intervals by $\delta_i$, i.e. $\delta_i = t_i - t_{i-1}$ and the corresponding differences in the spiking thresholds by $\Delta_i$, i.e. $\Delta_i= T^j(t_i)- T^j(t_{i-1}) =C^j(t_i)- C^j(t_{i-1}) , \forall i \in \{1,..., L\}$. Clearly, $\sum_{i=1}^{k} \Delta_i \geq 0,\forall k \in \{1,..., L\}$, as any deviation from this condition would result in the convolution value falling below the baseline threshold $C$, consequently rendering the system incapable of producing a spike. Also,  since the convolution $C^j(t)$ is a continuous function with $C^j(0)=0$ (by definition)  convolution value for the first spike equals the baseline threshold, i.e. $C^j(t_0)=C >0$. Then, based on equation \ref{convIneq}, $|C^j(t_k)- C^j(t_0)|= |\sum_{i=1}^{k} \Delta_i| <  b\sqrt{\tau},\forall k \in \{1,..., L\}$. The threshold equation \ref{thresholdeq} formulates that for each spike at time $t_i$, due to the \textit{ahp} effect  the threshold value is immediately incremented by $M$ and then within time $\delta$ the \textit{ahp} effect linearly drops to zero. Let $A^{i}_{k}$ denote the drop in the \textit{ahp} due to the spike at time $t_i$ in the interval $\delta_k$. Note that $A^{i}_{k} = 0$ whenever $i \geq k$ and $\Sigma_{k}A^{i}_{k} =M$, i.e. the total \textit{ahp} drop for each spike across all intervals is $M$. The proof follows by establishing the following two invariant by induction on $k$. 
    \vspace{-2pt}
    \[
    \begin{aligned}
        A^{k-1}_{k} &= M - \Sigma_{i=1}^{k} \Delta_i, \\
        \delta_{k} &< \frac{\delta}{2} \quad \forall k \in \{1,..., L\}        
    \end{aligned}
    \]
    The base case for $k=1$, is clearly true. Because, $C^j(t_0) = T^j(t_0) = C$ and  $C^j(t_1) = T^j(t_1) = C + M(1- \frac{\delta_1}{\delta}I_{(\delta_1 < \delta)})$ (by equation \ref{thresholdeq}). Here $I_{(\delta_1 < \delta)}=1$ if $\delta_1 < \delta$ and $0$ otherwise. When $\delta_1 < \delta$, we get:
    \begin{align*}
    C^j(t_1) &= C + M(1- \frac{\delta_1}{\delta})\\
    &= C^j(t_0) + \Delta_1 \\
    &\Rightarrow \delta_1 = \frac{M-\Delta_1}{M}\delta >  \frac{M-b\sqrt{\tau}}{M}\delta > \frac{\delta}{2}.
    \end{align*}
    And the \textit{ahp} drop due to spike at $t_0$ in the interval $\delta_1$ is: $A^{0}_{1} = M-(C^j(t_1) - C^j(t_0)) = M- \Delta_1$.\\ For the other case when  $\delta_1 \geq \delta$, trivially $\delta \geq \frac{\delta}{2}$. Also in that case $C^j(t_1) = C$ since there is no spike in the previous $\delta$ interval of $t_1$ and therefore $\Delta_1 = 0$.  But the \textit{ahp} drop of the spike at $t_0$ in interval $\delta_1$ is:  $A^{0}_{1} = M \text{ (since total drop is $M$)}= M- \Delta_1$. This establishes the invariants for the base case, $k =1$.\\
    
    Now for the induction step we assume that the invariants hold for all $k \in \{1, \ldots, n\}$, for some $n < L$, and we show that the invariants are true for $k = n+1$. Specifically, we assume that $\delta_k > \frac{\delta}{2}$ and $A^{k-1}_{k}= M - \Sigma_{i=1}^{k} \Delta_i$, for all $ k \leq n$. By assumption, $C^j(t_{n+1}) = C^j(t_{n}) + \Delta_{n+1} = T^j(t_{n+1})$. But $T^j(t_{n+1})$, the threshold at time $t_{n+1}$, is the sum of $T^j(t_{n})$ and the \textit{ahp} effect of the spike at time $t_{n}$ and the \textit{ahp} drop due to the spike at time $t_{n-1}$. Note that for $T^j(t_{n+1})$ we don't need to consider the \textit{ahp} effects of spikes prior to $t_{n-1}$ since the spikes prior to $t_{n-1}$ are outside the $\delta$ interval of $t_{n+1}$ by induction assumption and therefore have zero \textit{ahp} effect. Mathematically,
    \[
    \begin{aligned} 
    &T^j(t_{n+1}) = T^j(t_{n}) + M\left(1 - \frac{\delta_{n+1}}{\delta}\mathbf{I}_{(\delta_{n+1} < \delta)}\right) - A^{n-1}_{n+1}  \\
    & \Rightarrow C^j(t_{n+1})= C^j(t_{n}) + M\left(1 - \frac{\delta_{n+1}}{\delta}\mathbf{I}_{(\delta_{n+1} < \delta)}\right) - A^{n-1}_{n+1}  \\
    & \hspace{2cm} = C^j(t_{n}) + \Delta_{n+1}\\
    &\Rightarrow \Delta_{n+1} =  M\left(1 - \frac{\delta_{n+1}}{\delta}\mathbf{I}_{(\delta_{n+1} < \delta)}\right) - A^{n-1}_{n+1}. 
    \end{aligned}
\]
Therefore for the case when $\delta_{n+1} < \delta$ we get: 
\[
\begin{aligned}
& \Delta_{n+1} =  M\left(1 - \frac{\delta_{n+1}}{\delta}\right) - A^{n-1}_{n+1}\\
 & \Rightarrow  M\frac{\delta_{n+1}}{\delta} = M - \Delta_{n+1} -  A^{n-1}_{n+1} \\ 
& \geq M - \Delta_{n+1} -\sum_{i=1}^{n}\Delta_i  \quad (\text{since } A^{n-1}_{n+1}+ A^{n-1}_{n} \leq M). \\       
& \Rightarrow  M\frac{\delta_{n+1}}{\delta} \geq M - \sum_{i=1}^{n+1} \Delta_i > M - b\sqrt{\tau} \Rightarrow \delta_{n+1} > \frac{\delta}{2}.
\end{aligned}
    \] 
 Since $\delta_{n}+ \delta_{n+1} > \delta$, the total drop in \textit{ahp} during $\delta_n$ and $\delta_{n+1}$ due to spike at $t_{n-1}$ is $M$, i.e. $A^{n-1}_{n+1}+ A^{n-1}_{n} = M$. But, 
    \[
    \begin{aligned}
    & \hspace{2 cm} A^{n-1}_{n+1} + A^{n}_{n+1} = M - \Delta_{n+1} \\
    & \Rightarrow  A^{n}_{n+1} = M - \Delta_{n+1} - (M - A^{n-1}_{n}) = M - \sum_{i=1}^{n+1} \Delta_i,            
    \end{aligned}
    \]
    establishing the invariant. For the case $\delta_{n+1} \geq \delta$, trivially we get $\delta_{n+1} > \frac{\delta}{2}$. Since the \textit{ahp} effect drops to zero for every spike within time $\delta$ , the threshold at time $t_{n+1}$ must come down to the baseline value $C$, i.e. $T^j(t_{n+1})= C=C^j(t_{n+1}) \Rightarrow \sum_{i=1}^{n+1} \Delta_i = 0$. Also, since $\delta_{n+1} \geq \delta$, the \textit{ahp} drop due to the spike at $t_n$ in interval $\delta_{n+1}$ is $M$, i.e. $A^{n}_{n+1} = M = M -  \sum_{i=1}^{n+1} \Delta_i$, establishing the invariant for this case and hence, completing the proof. 
\section{Detailed Proof of the Perfect Reconstruction Theorem}
\label{perfectReconsSupp}
The following section provides a detailed proof of the Perfect Reconstruction Theorem as presented in the main text. While the main text includes the theorem and its basic proof, this supplementary section offers a full proof of Lemma 2 and includes a formal claim that is referenced in the main text but not explicitly stated there. These additions provide a more comprehensive understanding and are included here to complement the content of the main text.
\\
The theorem assumes that the input signal belongs to a more restrictive  class of signals, $\mathcal{G}$, which is a subset of the class $\mathcal{F}$. The class of input signals was initially modeled by $\mathcal{F}$  in the Coding Section of the main text. The class $\mathcal{G}$ is defined as:\\ 
$\label{restricted class}
\mathcal{G} = \{X| X \in \mathcal{F}, X= \sum_{p=1}^{N} \alpha_p \Phi^{j_p}(t_p-t), j_p \in \{1,...,m\}, \alpha_p \in {\mathbb R}, t_p \in {\mathbb R}^{+}, N \in Z^{+}\}
$. The theorem is restated below exactly as it appears in the main text, followed by its proof using two lemmas, as described in the original text.\\
\textbf{Theorem \ref{PerfectReconsThm}. (Perfect Reconstruction Theorem)}
Let $X \in \mathcal{G}$ be an input signal. Then for appropriately chosen time-varying thresholds of the kernels, the reconstruction, $X^{*}$, resulting from the proposed coding-decoding framework is accurate with respect to the $L2$ metric, i.e., $||X^{*}-X||_2 = 0$.\\ 
\textbf{Lemma \ref{reconsLemma}.}
The solution $X^*$ to the reconstruction problem Eq. (\ref{optimizationproblem}) can be written as:
$
\label{reconstructionequation}
X^* = \sum_{i=1}^{N} \alpha_i \Phi^{j_i}(t_i-t)
$
where the coefficients $\alpha_i \in {\mathbb R}$ can be uniquely solved from a system of linear equations if the set of spikes $\{\phi_i = \Phi^{j_i}(t_i-t)\}_{i=1}^{N}$ produced is \emph{linearly independent}.\\
\textbf{Proof:}
An argument similar to that of the Representer Theorem \citep{scholkopf2001} on (\ref{optimizationproblem}) directly results in:
$
\label{eq:reconsSignalEq}
X^* = \sum_{i=1}^{N} \alpha_i \Phi^{j_i}(t_i-t)
$
where the $\alpha_{i}$'s are real valued coefficients. This holds true because any component of $X^*$  orthogonal to the span of the ${\Phi^{j_i}(t_i-t)}$'s does not contribute to the convolution (inner product) constraints. In essence, $X^*$ is an orthogonal projection of $X$ on the span of the spikes $\{\phi_i = \Phi^{j_i}(t_i-t)|i \in \{1,2,...,N \}\}$. Therefore, the coefficients can be derived by solving the linear system:
$
\label{alphaeq}
P\alpha=T
$
where $P$ is the $N\times N$ Gram matrix of the spikes, i.e., $[P]_{ik} = \langle \Phi^{j_i}(t_i-t),  \Phi^{j_k}(t_k-t) \rangle$, and  $T = \langle T^{j_1}(t_1), \ldots, T^{j_N}(t_N)\rangle^T$. Furthermore, the system has a unique solution if the Gram Matrix $P$ is invertible. And the Gram Matrix $P$ would be invertible if the set of spikes $\{\phi_i\}_{i=1}^{N}$ is linearly independent, which in turn follows from the assumption \ref{assumption1}. We claim that even when the $P$-matrix is non-invertible, a unique reconstruction $X^*$ can still be obtained following Eq. \eqref{optimizationproblem} in the main text, which can be calculated using the pseudo-inverse of $P$. $\hfill\Box$\\
Next, we prove the claim we just made about the uniqueness of existence of $X^*$ as per Eq. \eqref{optimizationproblem}.  before proceeding to the next lemma and the subsequent proof of the theorem.\\
\textbf{Claim:} Let $X$ be an input signal to our framework, generating a set of $N$ spikes, $\{\phi_i = \Phi^{j_i}(t_i-t)\}_{i=1}^{N}$. Let $X_1$ and $X_2$ be two possible reconstructions of $X$ from these $N$ spikes, obtained by solving the optimization problem in Eq. \eqref{optimizationproblem} of the main text. Then $X_1 = X_2$. \\
\textbf{Proof:} The uniqueness of the reconstruction of $X$, as formulated in Eq. \eqref{optimizationproblem} follows from the fact that the reconstruction is essentially the projection of $X$ onto the span of the spikes $\{\phi_i = \Phi^{j_i}(t_i-t)\}_{i=1}^{N}$. We now provide a formal proof.
Let $S$ be the subspace of $L^2$-functions spanned by $\{\phi_i = \Phi^{j_i}(t_i-t)\}_{i=1}^{N}$ with the standard inner product. Since each $\Phi^{j_i}(t_i-t)$ is assumed to be in $L^2$, $S$ is a subspace of the larger space of all $L^2$-functions. Clearly, $S$ is a Hilbert space with $dim(S) \le N$. Therefore, there exists an orthonormal basis $\{e_1,..., e_M\}$ for $S$, where $ M\le N$. Assume for contradiction that $X_1 \neq X_2$. Then there exist coefficients
$\{a_i\}$ and $\{b_i\}$ such that $X_{1} = \sum_{i=1}^{M} a_i e_i $ and $X_{2} = \sum_{i=1}^{M} b_i e_i $, where not all $a_i$ are equal to the corresponding $b_i$. Hence, there exists some $k$ such that $a_k \neq b_k$, which implies:
\vspace{-10pt}
\\
$$\langle X_1, e_k \rangle = a_k \neq b_k =
\langle X_2, e_k \rangle $$
\vspace{-10pt}
\\
However, since $e_k$ is in the span of $\{\phi_i = \Phi^{j_i}(t_i-t)\}_{i=1}^{N}$, there exists
$ \{c_1,...,c_N\}$ such that $e_k= \sum_{i=1}^{N} c_{i} \Phi^{j_i}(t_i-t)$. Therefore: 
\vspace{-10pt}\\
$$ \langle X_1, e_k \rangle = \sum_{i=1}^{N} c_i\langle X_1, \Phi^{j_i}(t_i-t)  \rangle = \sum_{i=1}^{N} c_i T^{j_i}(t_i)$$ 
Since both $X_1 \text{ and } X_2$ are solutions to the optimization problem in Eq. \eqref{optimizationproblem}, it follows that:
$$\langle X_1, e_k \rangle= \sum_{i=1}^{N} c_i T^{j_i}(t_i)  = 
\langle X_2, e_k \rangle$$
This leads to a contradiction to the assumption that $a_k \neq b_k$. Thus, $X_1 = X_2$, completing the proof.  $\hfill\Box$
\\
\textbf{Lemma \ref{th2}.}
Let $X^*$ be the reconstruction of an input signal $X$ by our framework with $\{\phi_i = \Phi^{j_i}(t_i-t)\}_{i=1}^{N}$ being the set of generated spikes. Then, for any arbitrary signal $\Tilde{X}$ within the span of the spikes given by $\Tilde{X} = \sum_{i=1}^{N} a_i \phi_i ,\; a_i \in \mathbb{R}$, the following holds: $||X- X^*|| \leq ||X-\tilde{X}||$.\\    
\textbf{Proof:} 
\allowdisplaybreaks
\begin{align}
&||X(t)-\Tilde{X}(t)|| = ||\underbrace{X(t)-X^{*}(t)}_\text{A}+\underbrace{X^{*}(t)-\Tilde{X}(t)}_\text{B}|| \nonumber\\
& \langle A,\Phi^{j_i}(t_i-t) \rangle = \langle X(t), \Phi^{j_i}(t_i-t) \rangle \nonumber\\
& \hspace{80pt}- \langle X^{*}(t), \Phi^{j_i}(t_i-t) \rangle, \;\;\;\forall i \in \{1,2,..,N\} \nonumber\\
& = T^{j_i}(t_i) -T^{j_i}(t_i) = 0  \text{ (by Eq \eqref{optimizationproblem} \& \eqref{spikeConstraint} of main text)} \nonumber\\
&\langle A,B \rangle = \langle A, \sum_{i=1}^{N} (\alpha_i - a_i) \phi_i \rangle\; \text{(By Lemma1 $X^*(t) = \sum_{i=1}^{N} \alpha_i \phi_i$)} \nonumber\\
&= \sum_{i=1}^{N} (\alpha_i - a_i)\langle A, \phi_i \rangle = 0 \Rightarrow A \perp B 
\nonumber\\
&\text{Therefore,} \nonumber\\
& ||X(t)-\Tilde{X}(t)||^{2} = ||A+B||^{2} = ||A||^{2} + ||B||^{2} \; \text{($A \perp B$)} \nonumber\\
& \hspace{70pt}\geq ||A||^{2} = ||X(t)-X^{*}(t)||^{2} \nonumber\\
&\Rightarrow ||X(t)-\Tilde{X}(t)||
\geq ||X(t)-X^{*}(t)|| \hspace{70pt}\Box\nonumber
\end{align} 
\textbf{Proof of the Theorem \ref{PerfectReconsThm}:}
The proof of the theorem follows directly from Lemma 2. Since the input signal $X \in \mathcal{G}$, let $X$ be given by:
$
    X= \sum_{p=1}^{N} \alpha_p \Phi^{j_p}(t_p-t) \hspace{10pt}(\alpha_p \in {\mathbb R}, t_p \in {\mathbb R}^{+}, N \in Z^{+})
$.
Assume that the time varying thresholds of the kernels in our kernel ensemble $\Phi$ are set in such a manner that the following conditions are satisfied:
$
    \langle X, \Phi^{j_p}(t_p-t) \rangle = T^{j_p}(t_p) \hspace{10pt} \forall{p \in \{1,...,N\}}
$
i.e., each of the kernels $\Phi^{j_p}$ at the very least produces a spike at time $t_p$ against $X$ (regardless of other spikes at other times). Clearly then $X$ lies in the span of the set of spikes generated by the framework. Applying Lemma 2 it follows that:
$
    ||X- X^*||_2 \leq ||X-X||_2 = 0 
$.$\hfill\Box$\\
\section{Proof of Approximate Reconstruction Theorem}
\label{approxReconsThmSupp}
\textbf{Theorem \ref{approxReconsThm}.} (Approximate Reconstruction Theorem)\\
Let the input signal $X$ be represented as $X = \sum_{i=1}^{N} \alpha_i f^{p_i}(t_i-t)$, where $\alpha_i \in \mathbb{R}$ and $f^{p_i}(t)$ are bounded functions on finite support that constitute the input signal. Assume that there is at least one kernel function $\Phi^{j_i}$ in the ensemble for which $||f^{p_i}(t) - \Phi^{j_i}(t)||_{2} < \delta$ for all $i \in \{ 1,...,N\}$. Additionally, assume that each of these kernels $\Phi^{j_i}$ produces a spike within a $\gamma$ interval of $t_i$,
for some $\delta \text{ and } \gamma \in \mathbb{R^+}$,  for all $i$. Also, assume that the functions $f^{p_i}$ satisfy a frame bound type of condition: $
\sum_{k \neq i} \langle f_{p_i}(t-t_i),f_{p_k}(t-t_k) \rangle \; \leq  \eta \; \forall \, i \in \{1,...,N\},
$ and that the kernel functions are Lipschitz continuous. Under such conditions, the $L^2$ error in the reconstruction $X^{*}$ of the input $X$ has bounded SNR. Specifically one can show that $\frac{\|X(t) - X^*(t)\|^2}{\|X(t)\|^2} \leq \frac{(\delta + C\gamma)^2 (x_{\text{max}} + 1)}{1 - \eta},$ where $\eta <1$, $C$ is a Lipschitz constant, and $x_{max} \in [0,N-1]$ is a constant depending on the overlap of the components in the input representation.
 \vspace{5pt}\\
\textbf{Proof:} By hypothesis each kernel $\Phi^{j_i}$ produces a spike at time $t_{i}'\; \forall i \in \{1,...,N\}$ . Let us call these spikes as \textit{fitting spikes}. But the coding model might generate some other spikes against $X$ too. Other than the
set of \textit{fitting spikes} $\{(t_{i}', \Phi^{j_i})|i \in \{1,...,N\}\}$, let
$\{(\Tilde{t}_k, \Phi^{\Tilde{j}_k})|k \in \{1,...,M\}\}$ denote those extra set of spikes that the coding model produces for input $X$ against the bag of kernels $\Phi$ and call these extra spikes as \textit{spurious spikes}. Here,  $M$ is the number of spurious spikes.  
By Lemma1, the reconstruction of $X$, denoted $X^{*}$, can be represented as below:
\vspace{3pt}
\\
$
 X^{*} = 
\sum_{i=1}^{N} \alpha_{i} \Phi^{j_i}(t_i'-t) +\sum_{k=1}^{M} \Tilde{\alpha_k} \Phi^{\Tilde{j_k}}(\Tilde{t_{k}}-t)
$\vspace{-10pt}\\

where $\alpha_i$ and $\Tilde{\alpha_k}$ are real coefficients whose values can be formulated again from Lemma1.
Let $T_{i}$ be the thresholds at which kernel  $\Phi^{j_i}$ produced the spike at time $t_{i}'$ as given in the hypothesis.
Hence for generation of the \textit{fitting spikes} the following condition must be satisfied:
\vspace{-5pt}
\begin{align}
\langle X, \Phi^{j_i}(t_{i}'-t) \rangle = T_{i} \;\; \forall i \in \{1,2,...,N\}
\end{align}
Consider a hypothetical signal $X_{hyp}$ defined by the equations below:
\vspace{-15pt}
\begin{align}
 X_{hyp} = \sum_{i=1}^{N} a_i \Phi^{j_i}(t_{i}'-t) , a_i \in R \nonumber\\
 \text{s.t. } \langle X_{hyp}, \Phi^{j_i}(t_{i}'-t) \rangle = T_i, \forall i 
\end{align}
\vspace{-15pt}\\
Clearly this hypothetical signal $X_{hyp}$ can be deemed as if it is the
reconstructed signal where we are only considering the  \textit{fitting spikes} and ignoring all
\textit{spurious spikes}. Since, $X_{hyp}$ lies in the span of the
shifted kernels used in reconstruction of
$X$ using Lemma 3 we may now write:\\
\vspace{-15pt}
\begin{align}
&||X-X_{hyp}|| \geq ||X-X^{*}|| \label{part1}\\
& ||X-X_{hyp}||_{2}^{2} = \langle X-X_{hyp}, X-X_{hyp} \rangle  \nonumber\\
&= \langle X-X_{hyp}, X \rangle -\langle X-X_{hyp}, X_{hyp}\rangle  \nonumber\\
& =\langle X-X_{hyp},X\rangle -\Sigma_{i=1}^{N} a_{i} \langle X-X_{hyp},\Phi^{j_i}(t -t_i')\rangle  \nonumber\\ 
&= ||X||_{2}^{2} - \langle X, X_{hyp}\rangle  \nonumber\\
&( \text{Since by construction}  \langle X_{hyp},\Phi^{j_i}(t -t_i')\rangle = T_{i} \;\forall i\in\{1...N\}) \nonumber\\
&= \Sigma_{i=1}^{N}\Sigma_{k=1}^{N} \alpha_{i} \alpha_{k}\langle f_{i}(t-t_{i}),f_{k}(t-t_{k})\rangle \nonumber\\
&\hspace{40pt}- \Sigma_{i=1}^{N}\Sigma_{k=1}^{N} \alpha_{i} a_{k} \langle f_{i}(t-t_{i}),\Phi^{j_k}(t -t_k'))\rangle \nonumber\\
&= \alpha^{T}F\alpha - \alpha^{T}F_{K}a 
\label{error}
\\
&(\text{denoting } a= [a_1, a_2, ..., a_N]^T
, 
 \alpha = [\alpha_{1},\alpha_{2},...,\alpha_{N}]^T,\nonumber\\
&F = [F_{ik}]_{N\times N} \text{, an $N\times N$ matrix, }  \text{where } F_{ik} =  \nonumber\\
& \langle  f_{i}(t-t_{i}),f_{k}(t-t_{k}) \rangle \text{ and } F_{K} = [(F_{K})_{ik}]_{NXN} \text{ where } \nonumber\\ 
& (F_{K})_{ik}  = \langle f_{i}(t-t_{i}), \Phi^{j_k}(t-t_{k}')\rangle) \nonumber\\
&  \text{ But using Lemma1 $a$ can be written as: } \nonumber\\
&a = P^{-1}T \text{, } P = [P_{ik}]_{NXN}, P_{ik} = \langle \Phi^{j_i}(t-t_{i}'),\Phi^{j_k}(t-t_{k}')\rangle \nonumber\\
& \text{And, } T = [T_{i}]_{N\times 1} \text{ where } T_{i} = \langle X(t), \Phi^{j_i}(t-t_{i}') \rangle \nonumber\\
& = \Sigma_{k=1}^{N}\alpha_{k} \langle f_{k}(t-t_k), \Phi^{j_i}(t-t_{i}') \rangle = F_{K}^{T}\alpha\nonumber\\
&\Rightarrow a = P^{-1}F_{K}^{T}\alpha \nonumber\\
&\text{Plugging this expression of $a$ in equations \eqref{error} we get,} \nonumber\\
&||X-X_{hyp}||_{2}^{2} = \alpha^{T}F\alpha - \alpha^{T}F_{K}P^{-1}F_{K}^{T}\alpha \label{expandedError}\\\nonumber
& \text{But,} (F_{K})_{ik} = \langle f_{i}(t-t_i), \Phi^{j_k}(t-t_k') \rangle \nonumber\\
&= \langle \Phi^{j_i}(t-t_i'), \Phi^{j_k}(t-t_k') \rangle \nonumber\\
& \hspace{60pt}- \langle \Phi^{j_i}(t-t_i')-f_i(t-t_i), \Phi^{j_k}(t-t_k') \rangle \nonumber\\
&= (P)_{ik} - (\mathcal{E}_{K})_{ik} \label{FKform}\\
&(\text{denoting } \mathcal{E}_{K} = [(\mathcal{E}_{K})_{ik}]_{N\times N}, \nonumber\\
& \text{where } (\mathcal{E}_{K})_{ik} = \langle \Phi^{j_i}(t-t_i')-f_i(t-t_i), \Phi^{j_k}(t-t_k') \rangle) \nonumber\\
&\text{Also, }(F)_{ik} = \langle f_{i}(t-t_{i}), f_{k}(t-t_{k}) \rangle \nonumber\\
  & = \langle f_{i}(t-t_{i})-\Phi^{j_i}(t-t_i')+\Phi^{j_i}(t-t_i'), \nonumber\\ 
  &\hspace{50pt}f_{k}(t-t_{k})-\Phi^{j_k}(t-t_k')+\Phi^{j_k}(t-t_k') \rangle \nonumber\\
&=  (\mathcal{E})_{ik} - (\mathcal{E}_{K})_{ik} - (\mathcal{E}_{K})_{ki} + (P)_{ik} \label{Fform}\\
& \text{Combining \eqref{expandedError}, \eqref{FKform} and \eqref{Fform} we get,} \nonumber\\
&||X-X_{hyp}||_{2}^{2} = \alpha^{T}F\alpha - \alpha^{T}F_{K}P^{-1}F_{K}^{T}\alpha \nonumber\\
&= \alpha^{T} \mathcal{E} \alpha - \alpha^{T}\mathcal{E}_{K}\alpha - \alpha^{T}\mathcal{E}_{K}^{T}\alpha + \alpha^{T}P\alpha \nonumber\\
&- \alpha^{T}P\alpha + \alpha^{T}\mathcal{E}_{K}\alpha + \alpha^{T}\mathcal{E}_{K}^{T}\alpha - \alpha^{T}\mathcal{E}_{K}P{-1}\mathcal{E}_{K}^{T}\alpha \nonumber\\ 
&= \alpha^{T} \mathcal{E} \alpha - \alpha^{T}\mathcal{E}_{K}P^{-1}\mathcal{E}_{K}^{T}\alpha \leq \alpha^{T} \mathcal{E} \alpha \nonumber\\
&\text{(Since, P is an SPD matrix, $\alpha^{T}\mathcal{E}_{K}P^{-1}\mathcal{E}_{K}^{T}\alpha > 0$)} \label{compactError}
\end{align}
We seek for a bound for the above expression. For that we observe the following:
\begin{align}
&|(\mathcal{E})_{ik}| = |\langle f_{i}(t-t_i)-\Phi^{j_i}(t-t_{i}^{'}),f_{k}(t-t_k)-\Phi^{j_k}(t-t_{k}^{'})\rangle| \nonumber\\
&=||f_{i}(t-t_i)-\Phi^{j_i}(t-t_{i}^{'})||_{2} ||f_{k}(t-t_k)-\Phi^{j_k}(t-t_{k}^{'})||_{2}.x_{ik} \nonumber\\
& \text{(where $x_{ik} \in [0,1]$. We also note that $x_{ik}$ is close to $0$ when}\nonumber\\
&\text{ the overlaps in the supports of the two components and their} \nonumber\\ 
& \text{corresponding fitting kernels are minimal.)} \nonumber\\
&\Rightarrow (\mathcal{E})_{ik} \leq(||(f_{i}(t-t_i)-\Phi^{j_i}(t-t_{i})||+\nonumber\\ 
&\hspace{25pt}||\Phi^{j_i}(t-t_{i})-\Phi^{j_i}(t-t_{i}^{'}))||). \nonumber\\ 
&\hspace{35pt}(||f_{k}(t-t_k)-\Phi^{j_k}(t-t_{k})|| \nonumber\\ 
&\hspace{45pt}+||\Phi^{j_k}(t-t_{k})-\Phi^{j_k}(t-t_{k}^{'})||).x_{ik}\nonumber\\
&\Rightarrow (\mathcal{E})_{ik} \leq x_{ik}.(\delta + C\gamma)^{2} \label{eVal}\\
&\text{(Assuming $C$ is a Lipschitz constant that each kernel $\Phi^{j_i}$} \nonumber\\
&\text{satisfies, and by assumption we have $|t_{i}- t_{i}^{'}| < \delta$ for all $i$.)}\nonumber\\
&\text{Now, using Gershgorin circle theorem, the maximum eigen} \nonumber\\
&\text{value of $\mathcal{E}$ can be obtained as follows:} \nonumber\\
&\Lambda_{max}(\mathcal{E}) \leq max_{i} ((\mathcal{E})_{ii} + \Sigma_{k \neq i}|(\mathcal{E})_{ik}|) \nonumber\\
&\hspace{38pt}\leq (\delta + C\gamma)^2 (x_{max}+1) 
\hspace{20pt}\text{(Using \eqref{eVal})} \label{Eeig}\\
&\text{(where $x_{max} \in [0,N-1]$ is a positive number that depends  } \nonumber\\
&\text{on the maximum overlap of the supports of the component} \nonumber\\
&\text{signals and their fitting kernels.)}\nonumber\\
&\text{Similarly, the minimum eigen value of $F$ is:}
\nonumber\\
&\Lambda_{min}(F) =  min_{i} ((F)_{ii} - \Sigma_{i \neq k} |\langle f_{p_i}(t-t_i), f_{p_k}(t-t_k)\rangle|) \nonumber\\
&\hspace{40pt}\geq 1- \eta \label{Feig}\\
& \text{(By assumption $\Sigma_{i \neq k} |<f_{p_i}(t-t_i), f_{p_k}(t-t_k)>|  \leq \eta$ )} \nonumber\\
&\text{Combining the results from \eqref{compactError}, \eqref{Eeig} and \eqref{Feig} we get:} \nonumber\\
&\nicefrac{||X(t)- X_{hyp}(t)||^{2}}{||X(t)||^{2}} \leq \alpha^{T} \mathcal{E} \alpha / \alpha^{T} F \alpha \nonumber\\
&\leq \Lambda_{max}(\mathcal{E})/\Lambda_{min}(F) \nonumber\\
&\leq  (\delta + C\gamma)^2 (x_{max}+1) /(1- \eta)\\
&\text{Finally using \eqref{part1} we conclude,} \nonumber\\
&\nicefrac{||X(t)- X^{*}(t)||^{2}}{||X(t)||^{2}} \leq
\nicefrac{||X(t)- X_{hyp}(t)||^{2}}{||X(t)||^{2}} \nonumber\\
&
\hspace{25pt}\leq(\delta + C\gamma)^2 (x_{max}+1) /(1- \eta) \nonumber\\
&\text{where $\eta <1$, and $x_{max} \in [0,N-1]$, a constant depending on} \nonumber\\
&\text{the overlap of the components in the input representation.$\hspace{8pt}\Box$}
\nonumber
\end{align}
\section{Analysis of Overlapping Kernels in Figure \ref{sineKernelsOverlap}}
\label{sineKernelSupp}
This section contains an analysis of the overlapping sine kernels scenario depicted in Figure \ref{sineKernelsOverlap}. The assumption \ref{assumption1} from the main text states that the norm of the projection of each spike onto the span of all previous spikes is bounded from above by some constant strictly less than 1. In other words, each incoming spike has a component orthogonal to the span of all previous spikes, the norm of which is bounded below by some constant strictly greater than 0. However, this assumption does not necessarily imply that each spike is poorly represented in the span of the set of all other spikes, including both past and future spikes. Figure \ref{sineKernelsOverlap} provides a counterexample to this by constructing a sequence of spikes where, despite each spike having a bounded orthogonal component with respect to all previous spikes, one can show that a particular spike in this sequence can be almost perfectly represented by all others. Specifically, the projection of one spike onto the span of the others approaches a norm of 1 as the sequence length increases. 
In the figure, each spike is generated by a kernel function that consists of a single cycle of a sine wave. The spikes are arranged so that tail of one spike perfectly overlaps with the head of the previous spike. In this setup, each spike overlaps only with  one previous spike and one subsequent spike. \\
\textbf{Claim:}
Let $\phi_1, ..., \phi_N$ be the sequence of spikes arranged in time as shown in Figure \ref{sineKernelsOverlap}, where each spike is generated by a normalized sine wave kernel. In this setup, the projection of any spike $\phi_n$ onto the span of all other spikes in the sequence satisfies 
$||\mathcal{P}_{\mathcal{S}(\bigcup_1^{N}\{\phi_i\} \setminus \{\phi_n\})}(\phi_n)|| \rightarrow 1$ for $n \rightarrow \infty \text{ and } N-n \rightarrow \infty$.\\
\textbf{Proof:}
 Each spike $\phi_i$ can be decomposed into two components: the positive half wave at the tail of the spike, denoted $\phi_{i}^{t}$, and the negative half-wave at the head of the spike, denoted $\phi_{i}^{h}$. By assumption, each spike is normalized, so the norm of each component is $\frac{1}{\sqrt{2}}$. Due to how the spikes are aligned, $\phi_{i}^h =-\phi_{i+1}^t$ for all $i \in [1, N-1]$, which means that $\langle \phi_i, \phi_j \rangle = -\frac{1}{2}$ for $|i-j| =1$ and 0 for $|i-j|>1$. 
 To show that a spike $\phi_n$ in this sequence is almost perfectly represented by the others, we calculate the projection of $\phi_n$ onto the span of all the other spikes. This projection can be expressed as the sum of two components: one projection onto the span of the preceding spikes and one onto the span of the succeeding spikes. The coefficients of these projections can be determined by solving a system of linear equations involving the Gram matrix formed by the inner products of the spikes similar to the approach used Lemma \ref{reconsLemma} from the main text. The projection can be written as follows:
 \vspace{-20pt}\\
 \begin{align}
     & \mathcal{P}_{\mathcal{S}(\bigcup_1^{N}\{\phi_i\} \setminus \{\phi_n\})}(\phi_n)
     = \mathcal{P}_{\mathcal{S}(\bigcup_1^{n-1}\{\phi_i\} )}(\phi_n) \nonumber\\
     &\hspace{4cm} + \mathcal{P}_{\mathcal{S}(\bigcup_{n+1}^{N}\{\phi_i\} )}(\phi_n)\nonumber\\
     & \text{(As $\mathcal{S}(\cup_{n+1}^{N}\{\phi_i\} ) \perp \mathcal{S}(\cup_1^{n-1}\{\phi_i\} )$ due to disjoint support)} \nonumber\\
     &\Rightarrow \mathcal{P}_{\mathcal{S}(\bigcup_1^{N}\{\phi_i\} \setminus \{\phi_n\})}(\phi_n) = \sum_{i =1}^{n-1} \alpha_i \phi_i +\sum_{i =n+1}^{N} \alpha_i \phi_i; \alpha_i \in \mathbb R  \nonumber\\
     \nonumber
 \end{align}
 \vspace{-35pt}
 \\
 The values of $\alpha_i \text{ for }  i \in [1,n-1]$, following Lemma \ref{reconsLemma} from the main text, can be obtained by solving a linear system of equations of the form $P_1\alpha = T$ where $P_1$ is an $(n-1)\times (n-1)$ matrix and $\alpha = [\alpha_{n-1}, \cdots,\alpha_1]^T$. Similarly, the values of $\alpha_i \text{ for } i \in [n+1, N]$ can be obtained by solving a linear system of equations of the form $P_2\alpha' = T$, where $P_2$ is an $(N-n)\times (N-n)$ matrix and $\alpha' = [\alpha_{n+1}, \cdots,\alpha_N]^T$. Specifically we have the following.
 \begin{align}
     & P_1\alpha =
     \begin{bmatrix}
1 & -\frac{1}{2} & 0 & \cdots & 0 \\
-\frac{1}{2} & 1 & -\frac{1}{2} & \cdots & 0 \\
0 & -\frac{1}{2} & 1 & \cdots & 0 \\
\vdots & \vdots & \vdots & \ddots & -\frac{1}{2} \\
0 & 0 & 0 & -\frac{1}{2} & 1
\end{bmatrix} \begin{bmatrix}
\alpha_{n-1}\\
\vdots \\
\alpha_1
\end{bmatrix}
= \begin{bmatrix}
-\frac{1}{2} \\
0 \\
0 \\
\vdots \\
0
\end{bmatrix}
\label{sineAlpha} \end{align}
In the above equation \eqref{sineAlpha}, the matrix $P_1$ is a symmetric tridiagonal matrix with unit diagonal elements and constant off-diagonal entries, the inverse of which can be calculated using \textit{Chebyshev polynomials of the second kind} \citep{da2001explicit}. Specifically, the elements of $P_1^{-1}$ are given by:
\begin{align}
    &\hspace{-2cm}(P_1^{-1})_{i,j} =  (-1)^{i+j+1} \cdot 2 \cdot \frac{U_{i-1}\left(-1\right) \cdot U_{(n-1)-j}\left(-1\right)}{U_{n-1}\left(-1\right)}, \; i \leq j \nonumber\\
    &\hspace{-2cm} \text{ where $U_k$ denotes order-k Chebyshev polynomial of second } \nonumber\\
    &\hspace{-2cm} \text{kind. This simplifies to:}\nonumber\\
    \Rightarrow 
    (P_1^{-1})_{i,j} & = 
    \begin{cases}
    2 \cdot \frac{i(n-j)}{n}  & \text{ for } i \leq j \\
    2 \cdot \frac{j(n-i)}{n} & \text{ for } i > j \;\;\text{(using symmetry)}
    \end{cases}\nonumber\\
& \hspace{-1.5cm} \text{Now, using this and Eq. \eqref{sineAlpha} we get:}\nonumber\\
     \begin{bmatrix}
\alpha_{n-1}\\
\vdots \\
\alpha_1
\end{bmatrix} & = P_1^{-1}  
\begin{bmatrix}
-\frac{1}{2} \\
0 \\
0 \\
\vdots \\
0
\end{bmatrix}
= 
\begin{bmatrix}
-\frac{1}{2}(P_1^{-1})_{1,1} \\
-\frac{1}{2}(P_1^{-1})_{2,1} \\
\vdots \\
-\frac{1}{2}(P_1^{-1})_{n-1,1}
\end{bmatrix}
= 
\begin{bmatrix}
-\frac{n-1}{n} \\
-\frac{n-2}{n} \\
\vdots \\
-\frac{1}{n}
\end{bmatrix}
\nonumber\\
\Rightarrow \sum_{i =1}^{n-1} \alpha_i \phi_i &= -\sum_{i =1}^{n-1} \frac{i}{n} \phi_i 
= -\sum_{i =1}^{n-1} \frac{i}{n} (\phi_i^h + \phi_i^t) \nonumber\\
& \hspace{-2cm} = -\sum_{i =2}^{n-1} \frac{i}{n} (\phi_i^h - \phi_{i-1}^h) - \frac{1}{n} \phi_1 \nonumber\\
&\hspace{-2cm}  = - \frac{n-1}{n} \phi_{n-1}^h + \frac{1}{n}\sum_{i =1}^{n-2} \phi_{n-2}^h - \frac{1}{n}\phi_1^t \nonumber\\
&\hspace{-2cm} \text{(since $\phi_{i}^h =-\phi_{i+1}^t$ for all $i \in [1, N-1]$ by construction)}\nonumber\\
&\hspace{-2cm} \text{Therefore,} \nonumber\\
& \hspace{-2cm} ||\sum_{i =1}^{n-1} \alpha_i \phi_i||^2 = (\frac{(n-1)^2}{n^2} + \frac{n-2}{n^2} + \frac{1}{n^2})||\phi_1^t||^2 = \frac{n-1}{n}\cdot \frac{1}{2} \nonumber\\
&\hspace{-2cm}\text{(Since $\phi_i^t$s are mutually orthogonal and}\nonumber\\
&\hspace{-2cm}\text{ $||\phi_{n-1}^h||= ||\phi_{n-2}^h||=\cdots = ||\phi_{1}^h|| = ||\phi_{1}^t|| = \frac{1}{\sqrt{2}}$)} \nonumber\\
& \hspace{-1.5cm} \Rightarrow  ||\sum_{i =1}^{n-1} \alpha_i \phi_i||^2 \rightarrow \frac{1}{2} \text{  as  } n \rightarrow \infty \label{proj1}\\
& \hspace{-1.5cm}\text{Likewise using symmetrical arguments we can show that:}\nonumber\\
&\hspace{-2cm} ||\sum_{i =n+1}^{N} \alpha_i \phi_i||^2 = \frac{N-n-1}{N-n}\cdot\frac{1}{2} \rightarrow \frac{1}{2} \text{ as } N -n \rightarrow \infty \label{proj2}\\
&\hspace{-2cm}\text{Therefore combining \eqref{proj1} and \eqref{proj2} we get:}  \nonumber\\
& \hspace{-2cm}||\mathcal{P}_{\mathcal{S}(\bigcup_1^{N}\{\phi_i\} \setminus \{\phi_n\})}(\phi_n)||^2 \nonumber\\
& \hspace{-2cm} = ||\sum_{i =1}^{n-1} \alpha_i \phi_i||^2 + ||\sum_{i =n+1}^{N} \alpha_i \phi_i||^2 \rightarrow 1 
\text{ for} n  , N-n \rightarrow \infty \;\;\Box\nonumber\\
\nonumber 
\end{align}
\section{Proof of Lemma \ref{windowCondLemma}}
\label{windowCondLemmaSupp}
\textbf{Lemma \ref{windowCondLemma}.}
    Let $S = \{\phi_i\}_{i=1}^{N}$ denote the set of spikes generated by our framework, satisfying Assumption \ref{assumption2}, i.e., $\forall n \in \{1,..., N\}$, $||\mathcal{P}_{\mathcal{S}(\bigcup_{i=1}^{N}\{\phi_i\} \setminus \{\phi_n\})}(\phi_n)|| \leq \beta$, where $\beta \in \mathbb{R}$ is a constant strictly less than 1. Consider a subset $V \subseteq S$ of a finite size $d$, $d < N$. Then, for every $v \in \mathcal{S}(V)$ with $||v|| = 1$,  $\exists \beta_d < 1$, such that $||\mathcal{P}_{\mathcal{S}(S \setminus V)}(v)|| \leq \beta_d$ where $\beta_d$ is a real constant that depends on $\beta$ and $d$. Specifically, we can show that $\beta_d^2 \leq (1+ \frac{1-\beta^2}{d^2 \beta^2})^{-1} < 1$. \\
    \textbf{Proof:} Let $ S \supseteq V = \{ \phi_{v_1}, ...\phi_{v_d}\}$ where $\phi_{v_1}, ..., \phi_{v_d}$ are $d$ distinct spikes from $S$. Also, $\forall i \in \{1,...,d\}$ let us assume that:
    $\phi_{v_i}^\parallel =  \mathcal{P}_{\mathcal{S}(S\setminus V)}(\phi_{v_i}) \text{ and } \Hat{\phi}_{v_i} =\phi_{v_i} -  \phi_{v_i}^\parallel $, 
    i.e. each $\phi_{v_i}$ is decomposed into two components,  $\phi_{v_i}^\parallel$ - the component  which is the projection of $\phi_{v_i}$ in the span of $S\setminus V$ and $\Hat{\phi}_{v_i}$ - the component of $\phi_{v_i}$ which is the orthogonal complement of $\phi_{v_i}^\parallel$.
    By assumption we have the following hold true:\\
    $ ||\phi_{v_i}^\parallel || \leq \beta $ and $||\Hat{\phi}_{v_i}||^2 \geq (1- \beta^2) $.\\ 
    Also for any $v \in V, ||v|| =1$ let us assume the following: 
    \vspace{-5pt}
     \[
     \begin{aligned}
    & v = \Sigma_{i=1}^{d} \alpha_i \phi_{v_i} = \underbrace{\Sigma_{i=1}^{d} \alpha_i \phi_{v_i}^\parallel}_{Y} + \underbrace{\Sigma_{i=1}^{d} \alpha_i \Hat{\phi}_{v_i}}_{Z} \\
    & \text{ s.t. } ||v||^2 = ||\Sigma_{i=1}^{d} \alpha_i \phi_{v_i}^\parallel ||^2 +  ||\Sigma_{i=1}^{d} \alpha_i \Hat{\phi}_{v_i}||^2 = 1 \\
    \end{aligned}
     \vspace{-5pt}
    \]
   Here by definition $Y =\Sigma_{i=1}^{d} \alpha_i \phi_{v_i}^\parallel = \Sigma_{i=1}^{d} \alpha_i \mathcal{P}_{\mathcal{S}(S\setminus V)}(\phi_{v_i}) =\mathcal{P}_{\mathcal{S}(S\setminus V)}(v)  $, i.e. $Y$ is the projection of $v$ in the span of $S\setminus V $ and $Z= \Sigma_{i=1}^{d} \alpha_i \Hat{\phi}_{v_i}$ is the orthogonal complement with respect $S\setminus V $. And hence the objective of the Lemma is to establish an upper bound on $||Y||$. 
   Assume that $|\alpha_m| = max(|\alpha_1|,..., |\alpha_d|) ,\text{ for some } m \in \{1, ..., d\} $.
 \vspace{-5pt}
  \begin{align}
   &||Y||^2 = ||\Sigma_{i=1}^{d} \alpha_i \mathcal{P}_{\mathcal{S}(S\setminus V)}(\phi_{v_i})||^2 \nonumber \\
   &\leq (\Sigma_{i=1}^{d} |\alpha_i| ||\mathcal{P}_{\mathcal{S}(S\setminus V)}(\phi_{v_i})|| )^2 \;\;\;\text{(Using Triangle inequality)} \nonumber\\     
   & \leq \beta^2 (\Sigma_{i=1}^{d}|\alpha_i|)^2 \leq d^2\beta^2\alpha_m^2 \;\; \label{yinequality}\\
   &\hspace{2cm} \text{(Since $\forall n, ||\mathcal{P}_{\mathcal{S}(\bigcup_1^{N}\{\phi_i\} \setminus \{\phi_n\})}(\phi_n)|| \leq \beta$)}\nonumber\\
  & \text{Again, }  ||Z||^2 = || \alpha_m \Hat{\phi}_{v_m}+  \Sigma_{i \in \{1,...,d\}\setminus\{m\}} \alpha_i \Hat{\phi}_{v_i}||^2 \label{Zexp}\\
  & \text{But, $\Hat{\phi}_{v_m}$ can be decomposed into two components:  $\Hat{\phi}_{v_m}^\parallel = $ } \nonumber\\
  &  \mathcal{P}_{\mathcal{S}(\cup_{i}\{\Hat{\phi}_{v_i}\} \setminus\{\Hat{\phi}_{v_m}\})}(\Hat{\phi}_{v_m}) \text{ and} \Hat{\phi}_{v_m}^\perp = \Hat{\phi}_{v_m} -\Hat{\phi}_{v_m}^\parallel , \text{the ortho-} \nonumber\\
  &\text{gonal complement of $\Hat{\phi}_{v_m}^\parallel$. But $\Hat{\phi}_{v_m}^\perp $ can be written as :}\nonumber\\
  &\Hat{\phi}_{v_m}^\perp = \Hat{\phi}_{v_m} - \Hat{\phi}_{v_m}^\parallel = \phi_{v_m} - \phi_{v_m}^\parallel-  \Hat{\phi}_{v_m}^\parallel \;\;\; \nonumber\\ 
  & = \phi_{v_m} -\mathcal{P}_{\mathcal{S}(S\setminus V)}(\phi_{v_m}) - \mathcal{P}_{\mathcal{S}(\cup_{i}\{\Hat{\phi}_{v_i}\} \setminus\{\Hat{\phi}_{v_m}\})}(\Hat{\phi}_{v_m}) \nonumber\\
  & = \phi_{v_m} - \mathcal{P}_{\mathcal{S}((S\setminus V)\cup (\cup_{i}\{\Hat{\phi}_{v_i}\} \setminus\{\Hat{\phi}_{v_m}\}))}(\phi_{v_m}) \nonumber\\
  &\hspace{2cm} \text{(since every $\Hat{\phi}_{v_i}$ is orthogonal to $S\setminus V$)}\nonumber\\ \nonumber
  & \shortintertext{But $\mathcal{S}((S\setminus V)\cup (\cup_{i}\{\Hat{\phi}_{v_i}\} \setminus\{\Hat{\phi}_{v_m}\})) \subseteq \mathcal{S}(S\setminus \{\phi_{v_m}\})$
  since every $\Hat{\phi}_{v_i} = \phi_{v_i} -\mathcal{P}_{\mathcal{S}(S\setminus V)}(\phi_{v_i}) $, except for $\Hat{\phi}_{v_m}$, can be written as a linear combination of spikes in $S\setminus \{\phi_{v_m}\}$.
  Hence, $||\Hat{\phi}_{v_m}^\perp ||^2 = 1- ||\mathcal{P}_{\mathcal{S}((S\setminus V)\cup (\cup_{i}\{\Hat{\phi}_{v_i}\} \setminus\{\Hat{\phi}_{v_m}\}))}(\phi_{v_m})||^2 \geq 
  1- ||\mathcal{P}_{\mathcal{S}(S\setminus \{\phi_{v_m}\})}(\phi_{v_m})||^2  \geq 1- \beta^2$ \;\;\;\; (by assumption)} \label{orthocompVm}\\ \nonumber
  & \text{Combining \eqref{Zexp} and \eqref{orthocompVm} we get:}\nonumber\\
  & ||Z||^2 = || \alpha_m \Hat{\phi}_{v_m}+  \Sigma_{i \in \{1,...,d\}\setminus\{m\}} \alpha_i \Hat{\phi}_{v_i}||^2 \nonumber\\ 
  & \geq \alpha_m^2 ||\Hat{\phi}_{v_m}^\perp ||^2  \geq \alpha_m^2(1-\beta^2) \label{zinequality} \\
  &\text{From  \eqref{yinequality} and \eqref{zinequality} we get: $\frac{||Y||^2}{d^2\beta^2} \leq \alpha_m^2  \leq \frac{||Z||^2}{1-\beta^2}$. But,}\nonumber\\
  &1 = ||Y||^2+ ||Z||^2 \leq ||Z||^2 + d^2\beta^2\alpha_m^2 \leq \frac{d^2\beta^2}{1-\beta^2}||Z||^2 + \nonumber\\
 & ||Z||^2 \Rightarrow ||Z||^2 \geq \frac{1-\beta^2}{1+ (d^2-1)\beta^2} \Rightarrow \beta_d^2 = ||\mathcal{P}_{\mathcal{S}(S \setminus V)}(v)||^2  \nonumber\\
 & = 1- ||Z||^2 \leq  1- \frac{1-\beta^2}{1+ (d^2-1)\beta^2} \Rightarrow \beta_d^2 \leq   
(1+ \frac{1-\beta^2}{d^2 \beta^2})^{-1} \nonumber\\\nonumber
&\text{ Here $\beta_d$ is a quantity strictly less than $1$ when $\beta < 1$ and }\nonumber\\
&\text{$d$ is a finite positive integer.} \hspace{13em} \Box \nonumber  \\ 
\nonumber
\end{align}%

\vspace{-10pt}
\section{Complete Proof of Windowing Theorem}
\label{windowingTheoremSupp}
The following section presents a detailed proof of the \textit{Windowing Theorem} as discussed in the main text. While the main text includes the theorem and its basic proof, this supplementary section provides a comprehensive proof of \textit{Claim \ref{orthovectorsclaim}} and \textit{Claim \ref{psiknormboundClaim}}, as well as a complete derivation for the final part of the theorem's proof. These additions are intended to enhance understanding and complement the content of the main text. The theorem is restated below exactly as it appears in the main text, followed by its proof with the help of Lemma \ref{windowLemma}, as referenced in the original text.\\\\
\textbf{Theorem \ref{windowthm}} (Windowing Theorem).
For an input signal $X$ with bounded $L_2$ norm, suppose our framework produces a set of $n+1$ successive spikes $S = \{\phi_1, ..., \phi_{n+1}\}$, sorted by their time of occurrence and satisfying Assumption \ref{assumption2}. The error in the iterative reconstruction of $X$ with respect to the last spike $\phi_{n+1}$ due to windowing, as formulated in Eq. \ref{windowingEq}, is bounded. Specifically, 
\vspace{-5pt}
\begin{align}
 & \forall \epsilon >0, \exists w_0 > 0 \text{ s.t. } ||\mathcal{P}_{\phi^{\perp}_{n+1,w}}(X)- \mathcal{P}_{\phi^{\perp}_{n+1}}(X)|| < \epsilon, \nonumber\\
& \forall w\geq w_0\text{ and } w \leq n \label{windowThmEqun}   
\\
& \hspace{-10pt}\text{ where  $w_0$ is independent of $n$ for arbitrarily large $n \in \mathbb N$.}\nonumber\\
\nonumber 
\end{align}
\textbf{Import and Proof Idea}:  The Theorem implies that the error from windowing can be made arbitrarily small by choosing a sufficiently large window size $w$, independent of $n$, when $n$ is arbitrarily large. At first glance, one might think that the condition in Equation \ref{windowThmEqun} is trivially satisfied by choosing $w = n$, i.e., a window inclusive of all spikes. However, the key aspect of the theorem is that $ w $ should be independent of $ n $ when $ n $ is arbitrarily large. This allows us to use the same window size regardless of the number of previous spikes, even for large signals producing many spikes, thereby maintaining the condition number of the overall solution as per Theorem \ref{conditionNumberThm}. Our proof demonstrates this by showing that the reconstruction error converges geometrically as a function of the window size, depending only on the spike rate which in turn depends on the \textit{ahp} parameters in Equation \ref{thresholdeq}, but not on $n$.
The proof hinges on a central lemma showing that the $L_2$ norm of the difference between $\phi_{n+1,w}^{\perp}$ and  $\phi_{n+1}^{\perp}$ decreases steadily as \( w \) increases, based on the assumptions stated in the theorem. This ensures that \( \phi_{n+1,w}^{\perp} \) becomes a good approximation of $\phi_{n+1}^{\perp}$. The proof of the theorem then follows by establishing a bound on $ \|\mathcal{P}_{\phi^{\perp}_{n+1,w}}(X) - \mathcal{P}_{\phi^{\perp}_{n+1}}(X)\| $ for a given choice of $w$ for any bounded input $X$.
The lemma is provided below:\\
\textbf{Lemma \ref{windowLemma}.}
Under the conditions of Theorem \ref{windowthm}, for any $\delta > 0$, there exists $w_0 \in \mathbb{N}^+$ such that $\|\phi^{\perp}_{n+1,w} - \phi^{\perp}_{n+1}\| < \delta\; \forall w \geq w_0, w \leq n$, where the choice of $w_0$ is independent of $n$ for arbitrarily large $n \in \mathbb{N}^+$. \\
\textbf{Proof Idea:} The proof leverages Corollary \ref{spikerateCorollary}, which states that the maximum number of spikes overlapping in time is bounded by a constant $d \in \mathbb{N}^+$, dependent on the \textit{ahp} parameters. Using this corollary, we partition the set of all spikes in time into a chain of subsets, where each subset overlaps only with its neighboring subsets and is disjoint from all others. Each subset contains at most $d$ spikes. The proof then demonstrates that the error in approximating $\phi^{\perp}_{n+1}$ due to windowing, i.e., $\|\phi^{\perp}_{n+1,w} - \phi^{\perp}_{n+1}\|$, decreases faster than a geometric sequence as more of these partitions are included within the window. This convergence is illustrated schematically in Figure \ref{windowedOverlap}.\\ 
\textbf{Proof of Lemma \ref{windowLemma}:}
    Let the set of spikes $S = \{\phi_1, ..., \phi_n\}$ be partitioned into a chain of subsets of spikes $v_1, ..., v_m$ in descending order of time, defined recursively. The first subset $v_1$ consists of all previous spikes overlapping with the support of $\phi_{n+1}$. Recursively, $v_{i+1}$ is the set of spikes overlapping with the support of any spike in $v_{i}$ for all $i\leq m$. This process continues until the first spike $\phi_1$ is included in the final subset $v_m, \text{ where } m\leq n$. An example of this partitioning is illustrated in Figure \ref{windowedOverlap}.
    The individual spikes in each partition are indexed as follows:\\
    \vspace{-20pt}
    \begin{align*}
        v_i = \{ \phi_{p_i}, ..., \phi_{p_{i-1}-1}\}, \forall i \leq m \\ 
        \text{ where } 1 = p_m < ...<p_1 < p_0 = n+1. 
    \end{align*}  
    That is, the $i^{th}$ partition $v_i$ consists of spikes indexed  from $p_i$ to $p_{i-1} -1$. Since the spikes $\phi_1, ... , \phi_n$ are  sorted in order of their occurrence time, if both $\phi_{p_i} \text{ and } \phi_{p_{i-1}-1}$ are in partition $v_i$, then by construction,  $\phi_j \in v_i$ for all $p_i \leq j \leq p_{i-1}-1$. \vspace{5pt}\\
   \textbf{Claim \ref{partitionClaim}.}
             The number of spikes in every partition, $v_i \; \forall i \in \{1, ..., m\}$, is bounded by some constant $d \in \mathbb{N}^+$.\\
    \textbf{Proof:} This corollary follows from the observation that all spikes in a given partition $v_{i+1}$ overlap in time with at least one spike from the preceding partition $v_i$, specifically the spike whose support extends furthest into the past, for all $ i \in \{2,..,m\}$ (see fig \ref{windowedOverlap}). In the case of the partition $v_1$, each spike overlaps with $\phi_{n+1}$. Then the proof  follows from the corollary \ref{spikerateCorollary}. $\hfill\Box$\\  
    Now, let  $V_1, ... , V_m$  be subspaces spanned by the subsets of spikes $v_1, ..., v_m $ respectively.
    Before proceeding with the rest of the proof, we introduce the following notations. 
    \begin{align*}
        \Tilde{\phi_i} = \phi_i - \mathcal{P}_{\mathcal{S} (\bigcup\limits_{j = i+1}^{n}\{ \phi_{j}\})}(\phi_i) \;\;\;
        \forall i \in \{1,..., n\}
    \end{align*}
    where $ \Tilde{\phi_i}$ denotes the orthogonal complement of the spike $\phi_i$ with respect to all the future spikes up to $\phi_n$. We also denote,  
    \[
        \Tilde{V_k} = \mathcal{S}(\bigcup\limits_{j=p_k}^{p_{k-1} -1}\{\Tilde{\phi}_j\}) \;\;\;
        \forall k \in \{1,...,m\}
    \]   
    where $\Tilde{V_k}$ is the subspace spanned by spikes in the partition $v_k$, with each spike orthogonalized with respect to all future spikes.\\
\textbf{Claim \ref{orthovectorsclaim}.}    
         For the subspaces defined on the partitions as above, the following holds:\\
            $\Tilde{V_k} = \{ x- \mathcal{P}_{\sum_{j =1}^{k-1} V_j} (x) | x \in V_k\} =  
            \mathcal{S}(\bigcup\limits_{j=p_k}^{p_{k-1} -1}\{\Tilde{\phi}_j\}). 
       $ \\
    \textbf{Proof:} Denote $A = \{ x- \mathcal{P}_{\sum_{j =1}^{k-1} V_j} (x) | x \in V_k\}$ and $B = \mathcal{S}(\bigcup\limits_{j=p_k}^{p_{k-1} -1}\{\Tilde{\phi}_j\})$. We need to show that $A=B$. Before showing that, we rewrite $ \Tilde{\phi_i}$ for $ p_k\leq i \leq p_{k-1} -1$ as follows:
    \begin{align}
       & \Tilde{\phi_i} = \phi_i - \mathcal{P}_{\mathcal{S} (\bigcup\limits_{j = i+1}^{n}\{ \phi_{j}\})}(\phi_i) \;\;\; ( p_k\leq i \leq p_{k-1} -1) \nonumber\\
       & = \phi_i - \mathcal{P}_{\mathcal{S} ((\bigcup\limits_{j = p_{k-1}}^{n}\{ \phi_{j}\})\bigcup  (\bigcup\limits_{j = i+1}^{p_{k-1}-1}\{ \Tilde {\phi_{j}}\}))}(\phi_i) \nonumber\\
       &= \phi_i - \mathcal{P}_{\sum_{j =1}^{k-1} V_j}(\phi_i) - \sum_{j =i+1}^{p_{k-1} -1}\mathcal{P}_{\Tilde{\phi_j}}(\phi_i) \label{tildeExp}\\
       &(\text{As } \sum_{j =1}^{k-1} V_j \text{ and all } \Tilde{\phi_i}  \text{'s are mutually orthogonal} \nonumber\\
       & \text{ for }   p_k\leq i \leq p_{k-1} -1) \nonumber\\
       \nonumber
    \end{align}
    \vspace{-30pt}\\
   Using the formulation in \ref{tildeExp}, we will show $A=B$. First, we show that $A \subseteq B$. Let there be any $y \in A$. Then by assumption, we can write $y$ as follows:
   \vspace{-20pt}\\
    \begin{align}
       & y = x- \mathcal{P}_{\sum_{j =1}^{k-1} V_j} (x) \text{, for some } x = \sum_{i=p_k}^{p_{k-1}-1} \alpha_i \phi_i, \;\; \alpha_i \in \mathbb R \nonumber\\
       & \Rightarrow y = \sum_{i=p_k}^{p_{k-1}-1} \alpha_i \phi_i - \mathcal{P}_{\sum_{j =1}^{k-1} V_j}(\sum_{i=p_k}^{p_{k-1}-1} \alpha_i \phi_i) \nonumber \\     
       & = \sum_{i=p_k}^{p_{k-1}-1} \alpha_i (\phi_i - \mathcal{P}_{\sum_{j =1}^{k-1} V_j}(\phi_i)) \;\;\text{(by linearity of projection)} \nonumber\\
       & \Rightarrow y = \sum_{i=p_k}^{p_{k-1}-1} \alpha_i (\Tilde{\phi_i} +  \sum_{j =i+1}^{p_{k-1} -1}\mathcal{P}_{\Tilde{\phi_j}}(\phi_i))
       \text{ (using \eqref{tildeExp})} \label{yExp} 
    \end{align}
    \vspace{-10pt}\\
        Equation \eqref{yExp} shows that $y$ is written as a linear combination of $\Tilde{\phi_i} \text{ for } p_k\leq i \leq p_{k-1} -1$, and hence $y \in B$. This proves that $A \subseteq B$. \\
        To prove $B \subseteq A$, we observe that any $z \in B$ can be written as a linear combination of $\Tilde{\phi_i} \text{ for } p_k\leq i \leq p_{k-1} -1$. So it suffices to show that $\Tilde{\phi_i} \in A \text{ for } p_k\leq i \leq p_{k-1} -1$. We do that via induction on the set $\{\Tilde{\phi}_{p_{k-1}-1}, ..., \Tilde{\phi}_{p_k}\}$, starting with $\Tilde{\phi}_{p_{k-1}-1}$ as the base case. The base case is trivially true, i.e., $\Tilde{\phi}_{p_{k-1}-1} \in A$, because by equation \ref{tildeExp}, we immediately obtain $\Tilde{\phi}_{p_{k-1}-1} = \phi_{p_{k-1}-1} - \mathcal{P}_{\sum_{j =1}^{k-1} V_j}({\phi}_{p_{k-1}-1})$. \\
        For the induction step, assume that $\Tilde{\phi_i} \in A$ for all $i \text{such that }  i \in [ n, p_{k-1}-1] \text{ for some} n \in \mathbb{N}^+ \text{ and } p_k < n \leq p_{k-1}-1$. We need to show that $\Tilde{\phi}_{n-1} \in A$. For that, we again use equation \eqref{tildeExp} to observe the following:
        \begin{align}
       & \Tilde{\phi}_{n-1} = (\phi_{n-1} - \mathcal{P}_{\sum_{j =1}^{k-1} V_j}(\phi_{n-1})) - \sum_{j =n}^{p_{k-1} -1}\mathcal{P}_{\Tilde{\phi_j}}(\phi_{n-1}) \label{phiInduction}
        \end{align}
Analyzing the expression for $\Tilde{\phi}_{n-1}$ on the right-hand side of equation \eqref{phiInduction}, we observe that the expression $ (\phi_{n-1} - \mathcal{P}_{\sum_{j =1}^{k-1} V_j}(\phi_{n-1})) \in A$ by construction, and the expression $\sum_{j =n}^{p_{k-1} -1}\mathcal{P}_{\Tilde{\phi_j}}(\phi_{n-1})$, a linear combination of $\Tilde{\phi_i}, \; n \leq i \leq p_{k-1} -1$, is also in $A$ because by induction hypothesis, each $\Tilde{\phi_i}, \text{ for } n \leq i \leq p_{k-1} -1$ is in $A$. Thus, we obtain $\Tilde{\phi}_{n-1} \in A$. This proves that $B \subseteq A$. Therefore, $A=B$.
    $\hfill\Box$\\
        Following the claim, we define the subspace $U_{k}$ as below:
        \begin{align}
            &U_{k} = \sum_{i=k}^{m} \Tilde{V}_i = \mathcal{S}(\bigcup\limits_{j=1}^{p_{k-1} -1} \{\Tilde{\phi}_{j}\})
            \label{lhs}\\
            & = \{ x- \mathcal{P}_{\sum_{j =1}^{k-1} V_j} (x) | x \in \sum_{i=k}^{m} V_i \} \label{rhs}
        \end{align}
        where the equality between \ref{lhs} and \ref{rhs} follows from the Claim \ref{orthovectorsclaim}. Lastly, define $\phi^\perp_{n+1, v_k}$ as:
        $$\phiperp{v_k} = \phi_{n+1} - \mathcal{P}_{\sum_{i =1}^{k} V_i}(\phi_{n+1})$$
        i.e. $\phiperp{v_k}$ is the orthogonal complement of $\phi_{n+1}$ with respect to window of spikes up to partition $v_k$. Now we proceed to quantify the norm of the difference of $\phicomp$ and $\phiperp{v_k}$.  For that denote $e_k$ as follows:
         \begin{align}
            & e_k = ||\phiperp{v_k} - \phicomp||  \nonumber\\
            & = ||\mathcal{P}_{\sum_{i =1}^{k} V_i} (\phi_{n+1}) -\mathcal{P}_{\sum_{i =1}^{m} V_i} (\phi_{n+1})|| \nonumber\\
            & = ||\mathcal{P}_{\sum_{i =1}^{k} V_i} (\phi_{n+1}) -\mathcal{P}_{\sum_{i =1}^{k} V_i +U_{k+1}} (\phi_{n+1})|| \;(\text{by def. of $U_{k}$})\nonumber\\
            & = ||\mathcal{P}_{\sum_{i =1}^{k} V_i} (\phi_{n+1}) -(\mathcal{P}_{\sum_{i =1}^{k} V_i } (\phi_{n+1}) + \mathcal{P}_{U_{k+1}} (\phi_{n+1}))|| \nonumber\\            
            & \hspace{2cm}(\text{Since by construction $U_{k+1} \perp \Sigma_{i=1}^{k}V_i$}) \nonumber\\
            & \Rightarrow e_k= ||\mathcal{P}_{U_{k+1}} (\phi_{n+1})) ||  \label{e_k_exp}\\
            & \text{Note that by definition $e_m =0$ and by Assumption \ref{assumption2} we get,} \nonumber\\
            & e_k = ||\mathcal{P}_{U_{k+1}} (\phi_{n+1})) || \leq |\beta| < 1, \;\forall k \in \{1,...,m\} \label{boundary} \\  
            &\text{Now Assume that:} \nonumber\\
            & \mathcal{P}_{U_{k+1}} (\phi_{n+1})) = \alpha_k \Tilde{\psi}_k, \text{where } \alpha_k \in \mathcal{R}, \Tilde{\psi}_k \in U_{k+1} \nonumber\\
            &\text{Then it follows from the definition of $U_{k+1}$ and Claim \ref{orthovectorsclaim}} \nonumber\\ 
            & \text{that $\Tilde{\psi}_k$ is of the form:}\nonumber\\
            & \Tilde{\psi}_k =  \psi_k - \mathcal{P}_{\sum_{j =1}^{k} V_i } (\psi_k) \text{ for some } \psi_k \in \sum_{i =k+1}^{m} V_i \nonumber\\ 
            &\text{W.L.O.G.  assume $||\psi_k|| =1$ by appropriately choosing $\alpha_k$.}\nonumber\\
            & \text{Since $\alpha_k \Tilde{\psi}_k$ is a projection of $\phi_{n+1}$ we obtain:}\nonumber\\
            & \alpha_k = \frac{\langle \phi_{n+1}, \Tilde{\psi}_k\rangle}{||\Tilde{\psi}_k||^2}  \label{coeff}\\
            & \text{For $k>0$ from \ref{coeff} and \ref{e_k_exp} we obtain,}\nonumber\\
            & e_k = ||\alpha_k \Tilde{\psi}_k|| = |\alpha_k|||\Tilde{\psi}_k|| = \frac{|\langle \phi_{n+1}, \Tilde{\psi}_k\rangle|}{||\Tilde{\psi}_k||}\nonumber\\
            & = \frac{|\langle \phi_{n+1}, \psi_k - \mathcal{P}_{\sum_{j =1}^{k} V_j } (\psi_k)\rangle|}{||\Tilde{\psi}_k||}\nonumber\\
            & = \frac{|\langle \phi_{n+1}, \mathcal{P}_{\sum_{j =1}^{k} V_j} (\psi_k)\rangle|}{||\Tilde{\psi}_k||} 
            =\frac{|\langle \phi_{n+1}, \mathcal{P}_{\sum_{j =1}^{k-1} V_j + \Tilde{V}_k} (\psi_k)\rangle|}{||\Tilde{\psi}_k||}
            \nonumber\\
            & \text{(for $k>0, \psi_k \in\sum_{i =k+1}^{m} V_i \perp \phi_{n+1}$ due to disjoint support)} \nonumber\\     
            &  
            =\frac{|\langle \phi_{n+1}, \mathcal{P}_{\sum_{j =1}^{k-1} V_j} (\psi_k)+ \mathcal{P}_{\Tilde{V}_k} (\psi_k)\rangle|}{||\Tilde{\psi}_k||} \;\text{(by def. $\Tilde{V}_k \perp \sum_{j =1}^{k-1} V_j$)}\nonumber\\
            & \Rightarrow e_k = \frac{|\langle \phi_{n+1}, \mathcal{P}_{\Tilde{V}_k} (\psi_k)\rangle|}{||\Tilde{\psi}_k||} \;
            \text{($k>0,\psi_k \in \sum_{i=k+1}^{m} V_i \perp \sum_{j=1}^{k-1} V_j$)}
            \label{finalE_k}\\ 
            & \text{Likewise, } e_{k-1} = ||\mathcal{P}_{U_{k}} (\phi_{n+1})) ||  = ||\mathcal{P}_{U_{k+1}+ \Tilde{V}_k} (\phi_{n+1})) ||\nonumber\\
            & \Rightarrow e_{k-1}^2 = ||\mathcal{P}_{U_{k+1}} (\phi_{n+1})) ||^2 + ||\mathcal{P}_{\Tilde{V}_k} (\phi_{n+1})) ||^2
            \nonumber\\
            & \hspace{2cm} \text{(Since by construction $U_{k+1} \perp \Tilde{V}_k$)} \nonumber\\
            & \Rightarrow e_{k-1}^2 = e_k^2 +  ||\mathcal{P}_{\Tilde{V}_k} (\phi_{n+1})) ||^2 \label{ek11}\\
            & \text{Assume that } \mathcal{P}_{\Tilde{V}_k} (\phi_{n+1}) = \beta_k \Tilde{\theta}_k, \text{ where }  \Tilde{\theta}_k \in \Tilde{V}_k, \beta_k \in \mathcal{R} \nonumber\\
            & \Rightarrow ||\mathcal{P}_{\Tilde{V}_k} (\phi_{n+1})) || = ||\beta_k \Tilde{\theta}_k|| = \frac{|\langle \phi_{n+1}, \Tilde{\theta}_k \rangle|}{||\Tilde{\theta}_k||} \label{ptheta}\\
            &\text{From \ref{ek11} \& \ref{ptheta} we get, } e_{k-1}^2 = e_k^2 + \frac{|\langle \phi_{n+1}, \Tilde{\theta}_k \rangle|^2}{||\Tilde{\theta}_k||^2} \label{e_k-1}\\
            &\text{Again, for $k >0$ we further analyze $e_k$ from \ref{finalE_k} to obtain:}\nonumber\\
            & e_k =  \frac{|\langle \phi_{n+1}, \mathcal{P}_{\Tilde{V}_k} (\psi_k)\rangle|}{||\Tilde{\psi}_k||} 
            \nonumber\\
            &= \frac{|\langle \phi_{n+1}, \mathcal{P}_{\mathcal{S}(\{\Tilde{\theta}_k\})} (\psi_k)
            + \mathcal{P}_{\Tilde{V_k}/{\mathcal{S}(\{\Tilde{\theta}_k\})}} (\psi_k)\rangle|}{||\Tilde{\psi}_k||}
            \nonumber\\ \nonumber
            \vspace{-20pt}
             &\text{The last line above is essentially written by breaking $\psi_k$ }\nonumber\\
             &\text{into two mutually orthogonal subspaces: $\mathcal{S}(\{\Tilde{\theta}_k\})$, subspace }\nonumber\\
             &\text{of $\Tilde{V_k}$ spanned by $\Tilde{\theta}_k$, and $\Tilde{V_k}/\mathcal{S}(\{\Tilde{\theta}_k\}$, the subspace of $\Tilde{V_k}$}\nonumber\\
             &\text{orthogonal to the subspace $\mathcal{S}(\{\Tilde{\theta}_k\})$. Also, observe that} \nonumber\\ 
             & \phi_{n+1} \perp \Tilde{V_k}/\mathcal{S}(\{\Tilde{\theta}_k\}) \text{ since } \mathcal{P}_{\Tilde{V}_k} (\phi_{n+1})) = \beta_k \Tilde{\theta}_k \text{. Therefore,}\nonumber\\   
               & e_k = \frac{|\langle \phi_{n+1}, \mathcal{P}_{\mathcal{S}(\{\Tilde{\theta}_k\})} (\psi_k)|}{||\Tilde{\psi}_k||}\;\; 
             = \frac{|\langle \phi_{n+1}, \frac{\langle \psi_k,\Tilde{\theta}_k\rangle \Tilde{\theta}_k}{||\Tilde{\theta}_k||^2}\rangle|}{||\Tilde{\psi}_k||}
             \nonumber\\
            & \Rightarrow e_k= \frac{|\langle \phi_{n+1}, \Tilde{\theta}_k\rangle|}{||\Tilde{\theta}_k||} \frac{|\langle \psi_k,\frac{\Tilde{\theta}_k}{||\Tilde{\theta}_k||}\rangle|}{||\Tilde{\psi}_k||}\label{e_k}\\
            &\text{Combining \ref{e_k-1} and \ref{e_k} we obtain:}\nonumber\\ 
            & e_{k-1}^2 = e_k^2 + e_k^2 \frac{||\Tilde{\psi}_k||^2}{|\langle \psi_k, \frac{\Tilde{\theta}_k}{||\Tilde{\theta}_k||}\rangle|^2}\label{sequenceRel}\\
            &\text{Since both $\psi_k$ and $\frac{\Tilde{\theta}_k}{||\Tilde{\theta}_k||}$ are unit norm $|\langle \psi_k, \frac{\Tilde{\theta}_k}{||\Tilde{\theta}_k||}\rangle| \leq1$.}\nonumber\\
            & \Rightarrow e_{k-1}^2 \geq e_k^2(1+ ||\Tilde{\psi}_k||^2)\label{geoDrop}\\\nonumber
    \end{align}
    \vspace{-30pt}\\
           Eq. \ref{geoDrop} demonstrates that the sequence $\{e_k\}$ converges geometrically
        to 0 when $(1+ ||\Tilde{\psi}_k||^2) > 1$, i.e. $||\Tilde{\psi}_k|| >0 $. To complete the proof of Lemma \ref{windowLemma}, we need to establish a positive lower bound on $||\Tilde{\psi}_k||$, ensuring the convergence of the sequence ${e_k}$. The following Corollary provides the necessary lower bound on $||\Tilde{\psi}_k||$.
        \\
        \textbf{Claim \ref{psiknormboundClaim}.}
            Following Claim \ref{partitionClaim}, if the number of spikes in each partition $v_i$ is bounded by $d$, then $|| \Tilde{\psi}_k||^2 \geq 1-  \beta_d^2$, where $\beta_d >0 $ is as defined in Lemma \ref{windowCondLemma}. 
        \\        
        \vspace{-10pt}
    \begin{align}
             \textbf{Proof:} &\hspace{1cm}\Tilde{\psi}_k = \psi_k - \mathcal{P}_{\Tilde{V}_k}(\psi_k),   ||\psi_k|| =1. 
             \nonumber\\&
            \text{Let,  } \mathcal{P}_{\Tilde{V}_k}(\psi_k) = \beta_k \Tilde{\eta}_k
             \text{ for some } \Tilde{\eta}_k \in \Tilde{V}_k, ||\Tilde{\eta}_k|| =1  \nonumber\\
             & \Rightarrow \beta_k = \langle \psi_k, \Tilde{\eta}_k\rangle \nonumber\\
             \nonumber
    \end{align}  
    \vspace{-60pt}\\
     \begin{align}
     & \Rightarrow \mathcal{P}_{\psi_k}(\Tilde{\eta}_k) = \langle \psi_k, \Tilde{\eta}_k\rangle \psi_k, \;\; (\text{Since, } ||\psi_k|| = 1 )\\
     &\text{But } \Tilde{\eta}_k \in \Tilde{V}_k \implies \Tilde{\eta}_k = \eta_k -  \mathcal{P}_{\sum_{i =1}^{k-1} V_i}( \eta_k)\nonumber\\
     &\text{for some $\eta_k \in V_k $. Also,}\nonumber\\
     & ||\eta_k||^2 = 1+ ||\mathcal{P}_{\sum_{i =1}^{k-1} V_i}( \eta_k)||^2 \label{etaNorm}\\
     &\text{We observe,} \mathcal{P}_{\psi_k}(\Tilde{\eta}_k) = \mathcal{P}_{\psi_k}(\eta_k -\mathcal{P}_{\sum_{i =1}^{k-1} V_i}( \eta_k))\nonumber\\
     & = \mathcal{P}_{\psi_k}(\eta_k) \;\;\;\;\; (\text{Since }\psi_k \perp \sum_{i =1}^{k-1} V_i)\\\nonumber
\end{align}
\vspace{-30pt}\\
        Now, consider the vector $\frac{\eta_k}{||\eta_k||} \in V_k$. Here $\frac{\eta_k}{||\eta_k||}$ is an unit vector in $V_k$, the span of a finite partition of spikes $v_k$, the size of which is bounded from above by some constant $d$ according to Claim \ref{partitionClaim}. Therefore, based on the assumption stated in the theorem \ref{windowthm}, we can infer that the norm of any projection of $\frac{\eta_k}{||\eta_k||}$ on any subspace spanned by a set of spikes other than those in $v_k$, would bounded from above by some constant $\beta_d >1$. Hence, we can write the following.
        \begin{align}
            &||\mathcal{P}_{\mathcal{S}(\{\psi_k\})+\sum_{i =1}^{k-1} V_i} (\frac{\eta_k}{||\eta_k||})||^2 \leq \beta_d^2 \;\;\; (|\beta_d| <1)\nonumber\\
            &\Rightarrow \frac{||\mathcal{P}_{\psi_k}(\eta_k)||^2 + ||\mathcal{P}_{\sum_{i =1}^{k-1} V_i}(\eta_k)||^2}{||\eta_k||^2} \leq \beta_d^2 \nonumber\\
            &\Rightarrow \frac{||\mathcal{P}_{\psi_k}(\eta_k)||^2 + ||\mathcal{P}_{\sum_{i =1}^{k-1} V_i}(\eta_k)||^2}
            {1+ ||\mathcal{P}_{\sum_{i =1}^{k-1} V_i}(\eta_k)||^2} \leq \beta_d^2 \;\;\;(\text{using } \ref{etaNorm})\nonumber\\
            &\Rightarrow ||\mathcal{P}_{\psi_k}(\eta_k)||^2 \leq \beta_d^2(1+||\mathcal{P}_{\psi_k}(\eta_k)||^2) - ||\mathcal{P}_{\psi_k}(\eta_k)||^2 \nonumber\\
            &\Rightarrow ||\mathcal{P}_{\psi_k}(\eta_k)||^2 \leq \beta_d^2 - (1- \beta_d^2)||\mathcal{P}_{\psi_k}(\eta_k)||^2 \nonumber\\
            &\Rightarrow ||\mathcal{P}_{\psi_k}(\eta_k)||^2 \leq \beta_d^2 \;\;\; (\text{Since} |\beta_d| <1)\nonumber\\
            &\Rightarrow ||\mathcal{P}_{\psi_k}(\Tilde{\eta}_k)||^2=||\mathcal{P}_{\psi_k}(\eta_k)||^2 \leq \beta_d^2 \nonumber\\
            &\Rightarrow ||\mathcal{P}_{\Tilde{\eta}_k}(\psi_k)||=||\mathcal{P}_{\psi_k}(\Tilde{\eta}_k)|| \leq \beta_d^2 \;\nonumber\\
             &(\text{Since both $\Tilde{\eta}_k$ and $\psi_k$ are unit vectors} )\nonumber\\
             &\text{Therefore, } || \Tilde{\psi}_k||^2 = ||\psi_k - \mathcal{P}_{\Tilde{V}_k}(\psi_k)||^2
            = 1- ||\mathcal{P}_{\Tilde{\eta}_k}(\psi_k)||^2 \;\nonumber\\
            &\;\;\;\;\;(\text{Since by assumption $\mathcal{P}_{\Tilde{V}_k}(\psi_k) = \beta_k \Tilde{\eta}_k $ and $||\psi_k||=1$})\nonumber\\
            &\Rightarrow || \Tilde{\psi}_k||^2 \geq 1-  \beta_d^2 \;\;\;\;\;\;\;\;(|\beta_d| < 1) \hspace{100pt}\hfill\Box \nonumber\\
            \nonumber
            \end{align}
            \vspace{-25pt}\\
Finally, combining \ref{geoDrop} with Claim \ref{psiknormboundClaim}, we obtain:
\vspace{-5pt}
        \begin{equation}            
         e_{k-1}^2 \geq e_k^2(1+ (1-  \beta_d^2)) \Rightarrow e_k^2 \leq \frac{e_{k-1}^2}{\gamma^2}\label{finalGeoSeq}
        \end{equation}
        \vspace{-13pt}
        \\
    where $\gamma^2 = (1+ (1-  \beta_d^2))$ is a constant strictly greater than 1. Since $e_m =0$, Eq. \ref{finalGeoSeq} shows that the sequence $\{e_k\}_{k=1}^m$ converges to 0 faster than geometrically. 
    Thus, $\forall \delta >0, \exists k_0 \in \mathbb{N}^+ \text{such that } e_k < \delta, \text{ for all } k\geq k_0 \text{ and } m\geq k$. Given the geometric drop in Eq. \ref{finalGeoSeq} and the bound $e_1 <1$ (Eq. \ref{boundary}), for arbitrarily large $m$ (hence $n$) the choice of $k_0$ is independent of $m$ and depends only on $\beta_d$, which is determined by the \textit{ahp} parameters in Eq. \ref{thresholdeq}). Since the number of spikes in each partition is bounded by d (Claim \ref{partitionClaim}), choosing a window size $w_0 = k_0*d$ ensures $\forall \delta >0, \exists w_0 \in \mathbb{N}^+  \text{ such that }  ||\phi^{\perp}_{n+1,w} -\phi^{\perp}_{n+1}|| < \delta \; \text{ for all } w \geq w_0 \text{ and } w \leq n$. For arbitrarily large $n$, the choice of $w_0$ is independent of $n$.$\hfill\Box$\\
    \textbf{Proof of Theorem \ref{windowthm}:} Having established a bound on the norm of the difference between \(\phi^{\perp}_{n+1}\) and \(\phi^{\perp}_{n+1,w}\), we now need to bound the norm of the difference between the projections of the input signal \(X\) with respect to these two vectors. Specifically, we seek to bound \(||\mathcal{P}_{\phi^{\perp}_{n+1,w}}(X) - \mathcal{P}_{\phi^{\perp}_{n+1}}(X)||\) based on the window size. We use the following notations:
    $$\Tilde{X} = \mathcal{P}_{\mathcal{S}(\{\phi^{\perp}_{n+1,w}, \phi^{\perp}_{n+1}\})}(X)$$
    $$X_u = \mathcal{P}_{\phi^{\perp}_{n+1}}(X)$$
    $$X_v = \mathcal{P}_{\phi^{\perp}_{n+1,w}}(X)$$
    So that $\Tilde{X}, X_u$ and $X_v$ lie in the same plane (see Fig. \ref{windowedProjection}). Assume the angle between $\Tilde{X}$ and $\phi^{\perp}_{n+1}\})$ is $a$, and the angle between $\Tilde{X}$ and $\phi^{\perp}_{n+1,w}\})$ is $b$. Hence, the angle between $\phi^{\perp}_{n+1,w}$ and $\phi^{\perp}_{n+1}$ is $a-b$ (see Fig. \ref{windowedProjection}). Note that the input $X$ may not necessarily lie in the same plane as $X_u$ and $X_v$. Hence, for our calculation, we consider the projection $\Tilde{X}$ of $X$ onto this plane. We also denote:
    \begin{align}
       & p_w =\phi^{\perp}_{n+1,w} - \phi^{\perp}_{n+1} \\
       & = (\phi_{n+1} - \mathcal{P}_{\mathcal{S}(\{\phi_{n-w+1}, ..., \phi_{n}\})}(\phi_{n+1})) \nonumber\\
       & \hspace{1.5cm} -(\phi_{n+1} - \mathcal{P}_{\mathcal{S}(\{\phi_{1}, ..., \phi_{n}\})}(\phi_{n+1})) \nonumber\\
       & \text{(by definition) and thus:}\nonumber\\
      & p_w = \mathcal{P}_{\mathcal{S}(\{\phi_{n-w+1}, ..., \phi_{n}\})}(\phi_{n+1}) \nonumber\\
      & \hspace{2cm}-  \mathcal{P}_{\mathcal{S}(\{\Tilde{\phi}_1,..., \Tilde{\phi}_w\} \cup \{\phi_{n-w+1}, ..., \phi_{n}\})}(\phi_{n+1}) \nonumber\\
       & =  \mathcal{P}_{\mathcal{S}(\{\Tilde{\phi}_1,..., \Tilde{\phi}_w\}}(\phi_{n+1}) \nonumber\\
        & \hspace{2cm}(\text{Since } \mathcal{S}(\{\Tilde{\phi}_1,..., \Tilde{\phi}_w\} \perp \mathcal{S}(\{\phi_{n-w+1}, ..., \phi_{n}\})\nonumber\\
        &= \mathcal{P}_{\mathcal{S}(\{\Tilde{\phi}_1,..., \Tilde{\phi}_w\}}(\phi_{n+1} -  \mathcal{P}_{\mathcal{S}(\{\phi_{n-w+1}, ..., \phi_{n}\})}(\phi_{n+1}))\nonumber\\
       & =  \mathcal{P}_{\mathcal{S}(\{\Tilde{\phi}_1,..., \Tilde{\phi}_w\}}(\phi^{\perp}_{n+1,w})\nonumber\\
           & \Rightarrow p_w = \phi^{\perp}_{n+1,w} - \phi^{\perp}_{n+1} = \mathcal{P}_{\mathcal{S}(\{\Tilde{\phi}_1,..., \Tilde{\phi}_w\}}(\phi^{\perp}_{n+1,w}) \nonumber\\
       & \Rightarrow p_w \perp \phi^{\perp}_{n+1}\nonumber\\
       &\text{(Since $\mathcal{P}_S(x) \perp (x- \mathcal{P}_S(x))$ for any $x \in \mathcal{H}, S \subseteq \mathcal{H}$)}\nonumber\\
       & \text{Therefore, from Figure \ref{windowedProjection}, we observe:}\nonumber\\
       & \sin^2(a-b) = \frac{||p_w||^2}{||\phiperp{w}||^2} \label{sinValue}\\ \nonumber
    \end{align} 
    \vspace{-20pt}\\
    Now, we can quantify the norm of the difference in two the projections of $X$ with respect to $\phicomp$ and $\phiperp{w}$ as follows:
    \begin{align}
      &||\mathcal{P}_{\phi^{\perp}_{n+1,w}}(X)- \mathcal{P}_{\phi^{\perp}_{n+1}}(X)|| = ||X_u - X_v||^2 \nonumber\\
      &=||X_u||^2 + ||X_v||^2 -2 ||X_u||||X_v||cos(a-b) \nonumber\\
      &=||\Tilde{X}||^2[ \cos^2a + \cos^2b - 2\cos a\cos b\cos(a-b) ] \nonumber\\
      & \hspace{1cm} (\text{using the geometry of Figure \ref{windowedProjection}})\nonumber\\
      &= ||\Tilde{X}||^2[ \frac{1+\cos2a}{2} + \frac{1+\cos2b}{2}  \nonumber\\
      &\hspace{2cm} -(\cos(a-b)+ \cos(a+b))\cos(a-b)] \nonumber\\
      & = ||\Tilde{X}||^2 [\frac{1}{2} - \frac{\cos2(a-b)}{2}] = ||\Tilde{X}||^2 \sin ^2 (a-b) \nonumber\\
      & = ||\Tilde{X}||^2\frac{||p_w||^2}{||\phiperp{w}||^2} \;\; (\text{from Eq. \eqref{sinValue}}) \nonumber\\
      & \leq \frac{||X||^2}{1- \beta ^2} ||\phiperp{w}-\phicomp||^2  \nonumber\\
      & (\text {Since $||\Tilde{X}|| \leq||X|| $ and $||\phiperp{w}|| \geq ||\phicomp|| \geq (1-\beta^2)$ } \nonumber\\
      & \text{by assumption)} \nonumber\\
      \nonumber
    \end{align}
    \vspace{-30pt}\\
     Finally, since the input signal has a bounded $L_2$ norm (i.e. $||X||$ is bounded),  setting $\delta = \epsilon\frac{\sqrt{1-\beta^2}}{||X||}$ in Lemma \ref{windowLemma} gives a window size $w_0$ such that $||\mathcal{P}_{\phi^{\perp}_{n+1,w}}(X)- \mathcal{P}_{\phi^{\perp}_{n+1}}(X)|| \leq \frac{||X||}{\sqrt{1- \beta ^2}} ||\phiperp{w}-\phicomp|| < \epsilon, \forall \epsilon >0, \forall w \geq w_0\text{ and } w\leq n $. The choice of $w_0$ is independent of $n$ for arbitrarily large $n \in \mathbb{N}$. This completes the proof of the theorem. $\hfill\Box$

}

 
%

\bibliographystyle{IEEEtran}
\bibliography{main.bib}

\vfill

\end{document}